\documentclass[11pt]{article}
\usepackage{ifthen}
\usepackage{mdwlist}
\usepackage{amsmath,amssymb,amsfonts,amsthm}
\usepackage{bm}
\usepackage[colorlinks=true,linkcolor=blue,citecolor=blue,urlcolor=blue]{hyperref}
\usepackage{cleveref}
\usepackage{enumitem}
\usepackage{graphicx}
\usepackage{xspace}

\newcommand{\be}{\begin{equation}}
\newcommand{\ee}{\end{equation}}
\DeclareMathOperator*{\argmax}{arg\,max}
\DeclareMathOperator*{\argmin}{arg\,min}
\usepackage{verbatim}
\usepackage{algorithm}
\usepackage{algpseudocode}
\usepackage[margin=1in]{geometry}
\usepackage{color}
\usepackage{thm-restate}
\usepackage{latexsym}
\usepackage{epsfig}
\usepackage[colorinlistoftodos,textsize=scriptsize]{todonotes}
\def\withcolors{1}
\def\withnotes{1}

\renewcommand{\epsilon}{\ve}
\def\ve{\varepsilon}

\newcommand{\pr}[2][]{\mathrm{Pr}\ifthenelse{\not\equal{}{#1}}{_{#1}}{}\!\left[#2\right]}

\newtheorem{theorem}{Theorem}

\newtheorem{remark}[theorem]{Remark}

\newtheorem{lemma}[theorem]{Lemma}
\newtheorem{claim}[theorem]{Claim}
\newtheorem{corollary}[theorem]{Corollary}

\numberwithin{theorem}{section} 
\numberwithin{nontheorem}{section} 
\numberwithin{proposition}{section} 
\numberwithin{observation}{section} 
\numberwithin{remark}{section} 
\numberwithin{fact}{section} 
\numberwithin{lemma}{section} 
\numberwithin{claim}{section} 
\numberwithin{corollary}{section} 
\numberwithin{case}{section} 
\numberwithin{dfn}{section} 
\numberwithin{definition}{section} 
\numberwithin{question}{section} 
\numberwithin{openquestion}{section} 
\numberwithin{res}{section}

\ifnum\withcolors=1
   
\else

\fi

\ifnum\withnotes=1

\else
  \newcommand{\gnote}[1]{}
  \newcommand{\gfootnote}[1]{}

\fi
\newcommand{\ignore}[1]{\leavevmode\unskip} 

\title{Mean and Variance Estimation Complexity in Arbitrary Distributions via Wasserstein Minimization \thanks{Supported in part by a Discovery Grant from the Natural Sciences and Engineering Research Council (NSERC) of Canada.}}
\author {
Valentio Iverson\thanks{Department of Combinatorics \& Optimization, University of Waterloo, Waterloo, Ontario, Canada N2L 3G1, {\tt viverson@uwaterloo.ca}.}
\and
Stephen A. Vavasis\thanks{Department of Combinatorics \& Optimization, University of Waterloo, Waterloo, Ontario, Canada N2L 3G1, {\tt vavasis@uwaterloo.ca}}
}

\begin{document}
\maketitle
\begin{abstract}
Parameter estimation is a fundamental challenge in machine learning, crucial for tasks such as neural network weight fitting and Bayesian inference. This paper focuses on the complexity of estimating translation $\bm{\mu} \in \mathbb{R}^l$ and shrinkage $\sigma \in \mathbb{R}_{++}$ parameters for a distribution of the form $\frac{1}{\sigma^l} f_0 \left( \frac{\bm{x} - \bm{\mu}}{\sigma} \right)$, where $f_0$ is a known density in $\mathbb{R}^l$ given $n$ samples. We highlight that while the problem is NP-hard for Maximum Likelihood Estimation (MLE), it is possible to obtain $\varepsilon$-approximations for arbitrary $\varepsilon > 0$ within $\text{poly} \left( \frac{1}{\varepsilon} \right)$ time using the Wasserstein distance.
\end{abstract}

\section{Introduction}

A fundamental challenge in machine learning is the estimation of distribution parameters. For instance, in neural networks, estimating the parameters of the model to fit into distribution that underlies the weights can significantly impact model performance. Accurate parameter estimation is also crucial in other area like Bayesian inference, where it affects the choice of prior and posterior distributions.

Recently, modern machine learning methods have increasingly leveraged optimal transport-based distances  due to their ability to robustly compare probability measures \cite{f13cd90fad03be74cb538ff1f3ea094fa07a4bda} \cite{cao2019multimarginalwassersteingan} \cite{imfeld2024transformerfusionoptimaltransport}. Optimal transport has emerged as a versatile framework for addressing a range of machine learning problems, such as image generation \cite{5cbc42c11a8dff944515a44b26e62b97d4845eff}, image restoration \cite{adrai2024deepoptimaltransportpractical} and generative modeling \cite{kwon2022scorebasedgenerativemodelingsecretly} \cite{rout2022generativemodelingoptimaltransport}. This motivates us to revisit classical mean and variance estimation methods, such as maximum likelihood estimation (MLE) within the Expectation-Maximization (EM) algorithm \cite{Efron_Hastie_2021}, and explore the advantages of utilizing optimal transport metrics, particularly the minimization of the Wasserstein distance \cite{frogner2015learningwassersteinloss}.

This paper focuses on the complexity of estimating translation and shrinkage parameters, specifically $\bm{\mu} \in \mathbb{R}^l$ and $\sigma \in \mathbb{R}_{++}$, for a class of distributions that are piecewise constant over a finite number of disjoint hyperrectangles. These distributions are expressed as $\frac{1}{\sigma^l} f_0 \left( \frac{\bm{x} - \bm{\mu}}{\sigma} \right)$, where $f_0$ is a known density function in $\mathbb{R}^l$. In this framework, $\bm{\mu}$ denotes the translation parameter, and $\sigma$ controls the scaling of the distribution. The goal is to estimate these parameters from $n$ samples such that the resulting distribution closely aligns with the samples, a key objective in generative modeling tasks.

We select this specific class of distribution because it allows us to carry out the complexity-theoretic analysis outlined in this paper. Specifically, we require a class of distribution that is both finitely specifiable and universal in some sense, as detailed later in Claim \ref{claim2-5}. 

This problem is particularly significant in the context of the Expectation-Maximization (EM) algorithm \cite{10.1111/j.2517-6161.1977.tb01600.x}, especially within the Maximization step. The EM algorithm's variants have been extensively explored, including modifications like those discussed in the Sinkhorn EM algorithm \cite{Mena2020SinkhornEA}. Our study emphasizes the complexity of parameter estimation by comparing Maximum Likelihood Estimation (MLE) with Wasserstein distance approaches. While MLE poses NP-completeness challenges for certain class of distributions, our use of the Wasserstein distance offers a polynomial-time approximation for estimating $\bm{\mu}$ and $\sigma$. Though our algorithm may not be computationally practical for large-scale optimization, it effectively demonstrates the main result of achieving polynomial-time complexity. 

Further supporting this approach, studies like \cite{8578459} on the Sliced Wasserstein Distance for Gaussian Mixture Models underline the value of Wasserstein metrics in parameter estimation. Additionally, MLE for mixtures of spherical Gaussians has been shown to be NP-hard \cite{JMLR:v18:16-657}, motivating the need to explore alternative methods to MLE. 

There has been extensive research on analyzing the computational complexity of optimal transport algorithms \cite{an2021efficientoptimaltransportalgorithm, dvurechensky2018computationaloptimaltransportcomplexity}. While our study does not focus on algorithmic efficiency, it presents one of the first results on parameter estimation within the optimal transport framework.

Our contributions are as follows: In Section \ref{section 2}, we develop a polynomial-time method for approximating the optimal $\bm{\mu}$ and $\sigma$ using a Wasserstein minimization framework. In Section \ref{section 3}, we establish the NP-hardness of finding $\bm{\mu}$ through Maximum Likelihood Estimation (MLE) for general distributions. Section \ref{section 4} raises several open questions, highlighting potential directions for future research. 
\section{Parameter Estimation in Wasserstein Minimization} \label{section 2}

\subsection{Setup and Explicit Formulas for Parameter Estimation}
We consider a distribution consisting of a finite union of hyperrectangles and a set of sample points. Our objective is to estimate the parameters $\bm{\mu}$ and $\sigma$ from the discrete samples to minimize the Wasserstein distance induced by these parameters. This problem can be effectively framed as a semidiscrete transport problem. In other words, the selection of $\bm{\mu}, \sigma$ is regarded as finding values of these parameters that minimize the transport distance between the sample points, regarded as a sum of delta-functions, and the underlying distributions. 
\vspace{3 mm}
\newline 
We will adopt the framework established in \cite{peyré2020computationaloptimaltransport} for computational optimal transport in the semi-discrete setup. We will prove that there is a polynomial-time algorithm for estimating $\bm{\mu}$ and $\sigma$ in the semidiscrete setup. Consider the semi-discrete transportation problem where $c(\bm{x}, \bm{y}_j) \in \mathbb{R}$ represents the cost of transporting from continuous source $\bm{x} \in \mathbb{R}^l$ to $n$ discrete sinks $\bm{y}_j \in \mathbb{R}^l$, for $1 \le j \le n$, i.e., $c: \mathbb{R}^l \times \{ \bm{y}_1, \dots, \bm{y}_n \} \to \mathbb{R}_+$. The optimal transport cost can be represented as follow:
\[
\begin{aligned}
    & \min_\pi \int_{\mathbb{R}^l} \sum_{j = 1}^n c(\bm{x}, \bm{y}_j) \, d\pi(\bm{x}, \bm{y}_j) \\
    \text{subject to} \quad & \sum_{j = 1}^n d\pi(\bm{x}, \bm{y}_j) = d\alpha(\bm{x}) \quad \forall \bm{x} \in \mathbb{R}^l, \\
    & \int_{\mathbb{R}^l} d\pi(\bm{x}, \bm{y}_j) = b_j \quad \forall j = 1, \ldots, n,\\ 
    & \pi \ge 0.
\end{aligned} \tag{1} \label{eq-1}
\]
Here, $\pi : \mathbb{R}^l \times \{ \bm{y}_1, \dots, \bm{y}_n \} \to \mathbb{R}$ is the semidiscrete optimal transport map that measures the flow from $\bm{x}$ to $\bm{y}_i$, for $i = 1, \dots, n$. The given measure $d\alpha(\bm{x})$ represents the capacity at $\bm{x} \in \mathbb{R}^l$, and $b_j \ge 0$ represents the demand at sink $j$, $j = 1, \dots, n$. 

In the original problem we aim to address, we have $b_j = \frac{1}{n}$ for all $j$ where $n$ is the number of sample points since the samples are presumed to be drawn independently at random. However we retain the above formulation for most of our analysis for some level of generality.  \\ 
The problem is feasible if these capacities are all nonnegative and satisfy the compatibility condition
\begin{equation} 
\int_{\mathbb{R}^l} d \alpha(\bm{x}) = \sum_{j = 1}^n b_j = N. \tag{2} \label{eq-2} 
\end{equation} 
Given these assumptions, the problem is bounded, ensuring the existence of an optimizer. Notably, this optimizer shares the same optimal value as its dual formulation, as discussed in Villani's work on optimal transport \cite{villani2008optimal}.
In the dual form, the problem is expressed as:
\[
\mathcal{L}_c(\alpha, \bm{b}) = \max_{\bm{g} \in \mathbb{R}^n} \left( \int_{\mathbb{R}^l}  \bm{g}^{\bar{c}}(\bm{x}) \, d\alpha(\bm{x}) + \sum_{j=1}^n g_j b_j \right)
\]
where $\bm{g}^{\overline{c}}(\bm{x}) := \min_{1 \le j \le n} c(\bm{x}, \bm{y}_j) - g_j $, where $c(\bm{x}, \bm{y}_j) := \| \sigma \bm{x} -  \bm{y}_j - \bm{\mu} \|^2$. We can view this as the following infinite linear program: 
\[
\begin{aligned}
    & \max_{\bm{g}, \bm{h}} \int_{\mathbb{R}^l} h(\bm{x}) \ d \alpha(\bm{x}) + \sum_{j = 1}^n g_j b_j  \\
    \text{subject to} \quad &h(\bm{x}) + g_j \le c(\bm{x}, \bm{y}_j) , \quad 1 \le j \le n, \quad \forall \bm{x} \in \mathbb{R}^l.
\end{aligned}
\]
Here, $\bm{g} \in \mathbb{R}^n$ and $h$ is a function $\mathbb{R}^\ell \to \mathbb{R}.$ The following theorem, established in \cite{peyré2020computationaloptimaltransport}, is included here for the sake of completeness.
\begin{theorem}
    Assume $c(\bm{x}, \bm{y}_j)$ has the form $\| \bm{x} - \bm{y}_j - \bm{\mu} \|^2$. Then the primal optimal solution $\pi$ is independent of $\bm{\mu}$. 
\end{theorem}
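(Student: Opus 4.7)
The plan is to show that the $\bm{\mu}$-dependent contribution to the transport cost is a constant that depends only on the marginals $\alpha$ and $\bm{b}$, not on the coupling $\pi$ itself. Once this is done, the minimizer $\pi$ is the same as the minimizer of the $\bm{\mu}=\bm{0}$ problem, which directly gives the theorem.

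First, I would expand the quadratic cost as
\[
\|\bm{x}-\bm{y}_j-\bm{\mu}\|^2 = \|\bm{x}-\bm{y}_j\|^2 - 2\bm{\mu}^\top(\bm{x}-\bm{y}_j) + \|\bm{\mu}\|^2
\]
and substitute this into the objective $\int_{\mathbb{R}^l}\sum_{j=1}^n c(\bm{x},\bm{y}_j)\,d\pi(\bm{x},\bm{y}_j)$ of \eqref{eq-1}. This splits the objective into three pieces: the $\bm{\mu}$-free piece $\int\sum_j \|\bm{x}-\bm{y}_j\|^2\,d\pi$, a linear piece in $\bm{\mu}$, and a constant piece in $\|\bm{\mu}\|^2$.

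Next, I would exploit the marginal constraints of \eqref{eq-1} to evaluate the last two pieces. For the quadratic piece, $\sum_j\int d\pi(\bm{x},\bm{y}_j)=\sum_j b_j = N$ by \eqref{eq-2}, so $\|\bm{\mu}\|^2 N$ is independent of $\pi$. For the linear piece, using $\sum_j d\pi(\bm{x},\bm{y}_j)=d\alpha(\bm{x})$ and $\int d\pi(\bm{x},\bm{y}_j)=b_j$ gives
\[
\int_{\mathbb{R}^l}\sum_{j=1}^n (\bm{x}-\bm{y}_j)\,d\pi(\bm{x},\bm{y}_j) = \int_{\mathbb{R}^l} \bm{x}\,d\alpha(\bm{x}) - \sum_{j=1}^n \bm{y}_j b_j,
\]
which again depends only on the fixed data $\alpha$ and $\bm{b}$.

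Therefore the objective equals $\int\sum_j \|\bm{x}-\bm{y}_j\|^2\,d\pi(\bm{x},\bm{y}_j) + K(\bm{\mu},\alpha,\bm{b})$, where $K$ is a constant over the feasible set of $\pi$. Minimizing over $\pi$ on the feasible polytope defined by \eqref{eq-1} gives the same argmin as for $\bm{\mu}=\bm{0}$, so the optimal $\pi$ is independent of $\bm{\mu}$. There is essentially no obstacle here; the only subtlety is verifying that the interchange of summation, integration, and translation is justified, which follows because both marginals are finite measures and the integrands are bounded on the support (or by standard monotone/dominated convergence once one truncates $\alpha$ to compact sets and passes to the limit).
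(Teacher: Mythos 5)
Your proof is correct, but it takes a genuinely different route from the paper's. The paper establishes the result via LP duality: it takes an optimal primal $\pi^{\ast}$ and an optimal dual pair $(\bm{g}^{\ast}, h^{\ast})$ for the $\bm{\mu}=\bm{0}$ problem, then constructs a shifted dual solution $\hat{g}_j = g_j^{\ast} + 2\bm{\mu}^{\top}\bm{y}_j + \|\bm{\mu}\|^2$, $\hat{h}(\bm{x}) = h^{\ast}(\bm{x}) - 2\bm{\mu}^{\top}\bm{x}$, verifies that the slack $\|\bm{x}-\bm{y}_j-\bm{\mu}\|^2 - \hat{g}_j - \hat{h}(\bm{x})$ equals the original slack pointwise, and invokes complementary slackness to conclude that the same $\pi^{\ast}$ remains optimal for $P(\bm{\mu})$. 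You instead argue entirely on the primal side: you expand the quadratic cost, push the sums and integrals through the marginal constraints of \eqref{eq-1}--\eqref{eq-2}, and observe that the $\bm{\mu}$-dependent contribution $-2\bm{\mu}^{\top}\bigl(\int\bm{x}\,d\alpha - \sum_j b_j\bm{y}_j\bigr) + N\|\bm{\mu}\|^2$ is a constant over the feasible polytope, so the argmin is unchanged. Your argument is more elementary — it requires neither strong duality nor the existence of optimizing dual potentials, and it identifies the argmin \emph{set} as $\bm{\mu}$-independent rather than only propagating one particular optimizer. The paper's duality calculation, on the other hand, is structured to be reused almost verbatim for the companion theorem on $\sigma$-independence and to expose the dual shifts that feed into later parts of the analysis (e.g., the energy $\mathcal{E}(\bm{g})$). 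Your final caveat about interchanging summation and integration is handled correctly; it is justified here because $\alpha$ has finite total mass, there are finitely many $j$, and the marginal constraints force $\pi$ to have the same total mass.
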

\begin{proof}
    Let $P(\bm{\mu})$ denote the instance of the program above for a particular choice of $\bm{\mu}$ as in the theorem. Let $DP(\bm{\mu})$ denote the corresponding dual instance. Let $\pi^{\ast}: \mathbb{R}^l \times \{ \bm{y}_1, \bm{y}_2, \dots, \bm{y}_n \} \to \mathbb{R}$ be the optimal solution to $P(\bm{0})$ (i.e., $\bm{\mu}$ is taken to be $\bm{0}$) and let $\bm{g}^{\ast} \in \mathbb{R}^n, h^{\ast} : \mathbb{R}^l \to \mathbb{R}$ be the optimal solution for $DP(\bm{0})$. By complementary slackness on $\pi^{\ast}$ and $\bm{g}^{\ast}, h^{\ast}$, we have
    \[ \pi^{\ast}(\bm{x},\bm{y}_j) (\| \bm{x} - \bm{y}_j \|^2 - g^{\ast}_j - h^{\ast}(\bm{x})) = 0, \quad \forall \bm{x} \in \mathbb{R}^l, j \in \{ 1, 2, \dots, n \} \ \text{almost surely}. \]
    Now, fix an arbitrary $\bm{\mu}$. Take
    \[ \hat{g}_j = g_j^{\ast} + 2 \bm{\mu}^\top \bm{y}_j + \| \bm{\mu} \|^2 \quad \text{and} \quad \hat{h}(\bm{x}) = h^{\ast}(\bm{x}) - 2 \bm{\mu}^\top \bm{x}, \]
    for any $\bm{x} \in \mathbb{R}^l$ and $j \in \{ 1, 2, \dots, n \}$. We have
    \begin{align*}
\| \bm{x} - \bm{y}_j - \bm{\mu} \|^2 - \hat{g}_j - \hat{h}(\bm{x})    &= \| \bm{x} - \bm{y}_j \|^2 + \| \bm{\mu} \|^2 - 2\bm{\mu}^\top (\bm{x} - \bm{y}_j) - \hat{g}_j - \hat{h}(\bm{x}) \\ 
&= \| \bm{x} - \bm{y}_j \|^2 + (\|\bm{\mu} \|^2 + 2 \bm{\mu}^\top \bm{y}_j - \hat{\bm{g}}_j) + (-2 \bm{\mu}^\top \bm{x} - \hat{h}(\bm{x})) \\ 
&= \| \bm{x} - \bm{y}_j \|^2 - g_j^{\ast} - h^{\ast}(\bm{x}).
    \end{align*}
    Since $(\bm{g}^{\ast}, h^{\ast})$ is feasible for $DP(\bm{0})$, the quantity is nonpositive and thus $(\hat{\bm{g}}, \hat{h})$ is feasible for $DP(\bm{\mu})$. This therefore gives us 
    \begin{align*}
        \pi^{\ast}(\bm{x}, \bm{y}_j) ( \| \bm{x} - \bm{y}_j \|^2 - g_j^{\ast} - h^{\ast}(\bm{x})) = \pi^{\ast}(\bm{x}, \bm{y}_j) (\| \bm{x} - \bm{y}_j - \bm{\mu} \|^2 - \hat{g}_j - \hat{h}(\bm{x})),
    \end{align*}
    which implies that by complementary slackness, $\pi^{\ast}$ is an optimal solution to $P(\bm{\mu})$ as well. 
\end{proof}
\begin{remark}
    Because of the setup, the optimal transport map is not uniquely determined: the set for which it is not determined consists of a finite union of measure-zero disjoint polytopes corresponding to the boundaries of Laguerre cells (see Equation \eqref{eq-8}) meeting hyperrectangles. However, complementary slackness holds at every point. Thus, the "almost surely" qualifier is unnecessary in our setting, and the dual program, introduced in Equation \eqref{eq-3}, has a unique solution. We will omit such phrases for the rest of our paper.
\end{remark}
\begin{theorem}
    In this setup, where we have $c(\bm{x}, \bm{y}_j) = \| \sigma \bm{x} - \bm{y}_j \|^2$for some $\sigma > 0$, the primal optimal solution $\pi$ is independent of $\sigma$. 
\end{theorem}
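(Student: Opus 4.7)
The plan is to mirror the structure of Theorem 2.1, exhibiting a primal-dual pair for the cost $\|\sigma\bm{x} - \bm{y}_j\|^2$ that reuses the optimal primal $\pi^{\ast}$ from the $\sigma = 1$ case and certifies optimality via complementary slackness.

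First, I would let $\pi^{\ast}$ be primal-optimal and $(\bm{g}^{\ast}, h^{\ast})$ be dual-optimal for the $\sigma = 1$ instance, so that the feasibility inequality $h^{\ast}(\bm{x}) + g_j^{\ast} \le \|\bm{x} - \bm{y}_j\|^2$ holds pointwise, with equality at every $(\bm{x}, \bm{y}_j)$ where $\pi^{\ast}(\bm{x}, \bm{y}_j) > 0$. The key algebraic observation is the identity
\[
\|\sigma \bm{x} - \bm{y}_j\|^2 = \sigma \|\bm{x} - \bm{y}_j\|^2 + (\sigma^2 - \sigma)\|\bm{x}\|^2 + (1 - \sigma)\|\bm{y}_j\|^2,
\]
which decomposes the new cost into a positive scalar multiple of the old cost plus a term depending only on $\bm{x}$ and a term depending only on $\bm{y}_j$. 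This separation is exactly what dual potentials are designed to absorb.

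Guided by this identity, I would define the candidate dual variables
\[
\hat{g}_j := \sigma g_j^{\ast} + (1 - \sigma)\|\bm{y}_j\|^2, \qquad \hat{h}(\bm{x}) := \sigma h^{\ast}(\bm{x}) + (\sigma^2 - \sigma)\|\bm{x}\|^2,
\]
and then verify by direct substitution that $\|\sigma \bm{x} - \bm{y}_j\|^2 - \hat{g}_j - \hat{h}(\bm{x}) = \sigma \bigl(\|\bm{x} - \bm{y}_j\|^2 - g_j^{\ast} - h^{\ast}(\bm{x})\bigr)$. Because $\sigma > 0$, both the sign and the zero set of the constraint slack are preserved: feasibility of $(\hat{\bm{g}}, \hat{h})$ for the new dual follows from feasibility of $(\bm{g}^{\ast}, h^{\ast})$ for the old one, and complementary slackness against $\pi^{\ast}$ is maintained. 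Strong duality then certifies that $\pi^{\ast}$ remains optimal for the primal with cost $\|\sigma \bm{x} - \bm{y}_j\|^2$.

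The only mild subtlety I anticipate is ensuring that the transformed potential $\hat{h}$ remains integrable against $\alpha$ so that the dual objective is finite, but this is immediate from the compact support assumption on the hyperrectangles defining $\alpha$. As a sanity check, one can expand $\|\sigma \bm{x} - \bm{y}_j\|^2 = \sigma^2\|\bm{x}\|^2 - 2\sigma \bm{x}^\top \bm{y}_j + \|\bm{y}_j\|^2$ and note that upon integrating against any feasible $\pi$ the first and third terms collapse to constants determined by $\alpha$ and $\bm{b}$ alone, leaving a $\pi$-dependent piece proportional to $-2\sigma \int \sum_j \bm{x}^\top \bm{y}_j \, d\pi$, whose minimizer set is invariant under the positive rescaling $\sigma > 0$. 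I would relegate this shortcut to a remark and keep the dual construction as the formal proof, in keeping with the style of Theorem 2.1.
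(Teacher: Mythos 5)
Your proposal is correct and follows the paper's proof essentially verbatim: the same choice of transformed dual potentials $\hat{g}_j = \sigma g_j^{\ast} + (1-\sigma)\|\bm{y}_j\|^2$ and $\hat{h}(\bm{x}) = \sigma h^{\ast}(\bm{x}) + (\sigma^2 - \sigma)\|\bm{x}\|^2$, the same scaling identity showing the slack is multiplied by $\sigma > 0$, and the same conclusion via feasibility plus complementary slackness. The extra remark about expanding the cost and noting that only the cross term $-2\sigma\int\sum_j \bm{x}^\top\bm{y}_j\,d\pi$ depends on $\pi$ is a nice sanity check not present in the paper, but the core argument is identical.
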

\begin{proof}
    Let $P(\sigma)$ denote the instance of the program above for a particular choice of $\sigma$ as in the theorem. Let $DP(\sigma)$ denote the corresponding dual instance. Let $\pi^{\ast} : \mathbb{R}^l \times \{ \bm{y}_1,\bm{y}_2 \dots, \bm{y}_n \} \to \mathbb{R}$ be the optimal solution to $P(1)$ and let $\bm{g}^{\ast} \in \mathbb{R}^n, h^{\ast}: \mathbb{R}^l \to \mathbb{R}$ be the optimal solution for $DP(1)$. By complementary slackness on $\pi^{\ast}$ and $\bm{g}^{\ast}, h^{\ast}$, we have 
    \[ \pi^{\ast}(\bm{x}, \bm{y}_j) (\| \bm{x} - \bm{y}_j \|^2 - g_j^{\ast} - h^{\ast}(\bm{x})) = 0, \quad \forall \bm{x} \in \mathbb{R}^l, j \in \{ 1, 2, \dots, n \}.\]
    Now, fix an arbitrary $\sigma \in \mathbb{R}_{++}$. Take
    \[ \hat{g}_j := (1 - \sigma) \| \bm{y}_j \|^2 + \sigma \cdot g_j^{\ast} \quad \text{and} \quad \hat{h}(\bm{x}) := (\sigma^2 - \sigma) \| \bm{x} \|^2 + \sigma \cdot h^{\ast}(\bm{x}) \]
    for any $\bm{x} \in \mathbb{R}^l$ and $j \in \{ 1, 2, \dots, n \}$. We have 
    \begin{align*}
        \| \sigma \bm{x} - \bm{y}_j \|^2 - \hat{g}_j - \hat{h}(\bm{x}) &= \sigma\| \bm{x} - \bm{y}_j \|^2 - (\hat{g}_j - (1 - \sigma) \| \bm{y}_j \|^2 )  - (\hat{h}(\bm{x}) - (\sigma^2 - \sigma) \| \bm{x} \|^2)  \\ 
        &= \sigma(\| \bm{x} - \bm{y}_j \|^2 - g_j^{\ast} - h^{\ast}(\bm{x})).
    \end{align*}
    Since $(\bm{g}^{\ast}, h^{\ast})$ is feasible for $DP(1)$, the quantity is nonpositive and thus $(\hat{\bm{g}}, \hat{h})$ is feasible for $DP(\sigma)$. This therefore gives us
    \[ \sigma \cdot \pi^{\ast}(\bm{x}, \bm{y}_j)(\| \bm{x} - \bm{y}_j \|^2 - g_j^{\ast} - h^{\ast}(\bm{x})) = \pi^{\ast}(\bm{x}, \bm{y}_j)(\| \sigma \bm{x} - \bm{y}_j \|^2 - \hat{g}_j - \hat{h}(\bm{x})) \]
     which implies that by complementary slackness, $\pi^{\ast}$ is an optimal solution to $P(\sigma)$ as well. 
\end{proof}
Combining both of the theorem above give us the following corollary. 
\begin{corollary}
    Suppose that there exists  $\bm{y}_1, \dots, \bm{y}_n \in \mathbb{R}^l$ and $\bm{\mu} \in \mathbb{R}^l, \sigma \in \mathbb{R}_{++}$ such that for all $\bm{x} \in \mathbb{R}^l, j \in \{ 1, \dots, n \}$, $c(\bm{x}, \bm{y}_j) = \| \sigma \bm{x} - \bm{y}_j - \bm{\mu} \|^2$. Then the primal optimal solution $\pi$ is independent of $\bm{\mu}$ and $\sigma$. 
\end{corollary}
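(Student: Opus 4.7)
The plan is to combine the dual constructions from Theorems 2.1 and 2.3 into a single pair of modified dual variables that simultaneously absorbs both the scaling $\sigma$ and the translation $\bm{\mu}$. Write $P(\sigma, \bm{\mu})$ for the primal instance with cost $c(\bm{x}, \bm{y}_j) = \| \sigma \bm{x} - \bm{y}_j - \bm{\mu} \|^2$ and $DP(\sigma, \bm{\mu})$ for its dual. I would start with an optimal primal $\pi^{\ast}$ and optimal dual pair $(\bm{g}^{\ast}, h^{\ast})$ for the base instance $P(1, \bm{0})$, so that complementary slackness gives $\pi^{\ast}(\bm{x}, \bm{y}_j)(\| \bm{x} - \bm{y}_j \|^2 - g_j^{\ast} - h^{\ast}(\bm{x})) = 0$ for every $\bm{x}$ and $j$.

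Next, guided by the two preceding proofs, I would define
\[
\hat{g}_j := \sigma g_j^{\ast} + (1-\sigma)\|\bm{y}_j\|^2 + 2\bm{\mu}^\top\bm{y}_j + \|\bm{\mu}\|^2, \qquad
\hat{h}(\bm{x}) := \sigma h^{\ast}(\bm{x}) + (\sigma^2 - \sigma)\|\bm{x}\|^2 - 2\sigma \bm{\mu}^\top \bm{x},
\]
and then verify the algebraic identity
\[
\| \sigma \bm{x} - \bm{y}_j - \bm{\mu} \|^2 - \hat{g}_j - \hat{h}(\bm{x}) \;=\; \sigma\bigl( \| \bm{x} - \bm{y}_j \|^2 - g_j^{\ast} - h^{\ast}(\bm{x}) \bigr),
\]
which reduces to expanding the left-hand side and matching the $\bm{x}$-dependent, $\bm{y}_j$-dependent, cross, and pure-$\bm{\mu}$ terms. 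Since $\sigma > 0$ and the right-hand side is nonpositive by dual feasibility for $P(1, \bm{0})$, the left-hand side is nonpositive too, so $(\hat{\bm{g}}, \hat{h})$ is feasible for $DP(\sigma, \bm{\mu})$. Multiplying the identity by $\pi^{\ast}(\bm{x}, \bm{y}_j)$ and invoking complementary slackness for $P(1, \bm{0})$ then yields the slackness condition for $P(\sigma, \bm{\mu})$, so the very same $\pi^{\ast}$ is primal optimal for $P(\sigma, \bm{\mu})$, as required.

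A more modular alternative is simply to cascade the two theorems: apply Theorem 2.3 to lift $\pi^{\ast}$ from the optimum of $P(1, \bm{0})$ to the optimum of the intermediate instance with cost $\| \sigma \bm{x} - \bm{y}_j \|^2$, and then invoke Theorem 2.1 (whose construction does not depend on the specific form of the $\bm{x}$-dependent part of the cost, only on its compatibility with expanding $\| \cdot - \bm{\mu}\|^2$) to absorb the $\bm{\mu}$ shift on top of that intermediate cost. The two corrections commute and combine into the explicit formulas for $\hat{g}_j$ and $\hat{h}(\bm{x})$ above. The main obstacle here is merely the algebraic bookkeeping in verifying the identity; no new ideas beyond those in the two previous theorems are required, which is precisely why the statement naturally has the status of a corollary rather than an independent theorem.
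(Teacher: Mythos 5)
Your proposal is correct and matches the paper's intent: the paper gives no explicit proof, saying only that the corollary follows by ``combining both of the theorem above,'' and both of your routes (the single-shot explicit dual modification and the cascade through the intermediate cost $\|\sigma\bm{x}-\bm{y}_j\|^2$) make that combination precise. Your formulas for $\hat{g}_j$ and $\hat{h}(\bm{x})$ check out algebraically and do yield the identity $\|\sigma\bm{x}-\bm{y}_j-\bm{\mu}\|^2-\hat{g}_j-\hat{h}(\bm{x})=\sigma(\|\bm{x}-\bm{y}_j\|^2-g_j^{\ast}-h^{\ast}(\bm{x}))$, so the argument closes exactly as in the two preceding theorems.
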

We note that if $\pi$ is the optimal transport map, then the optimal objective value of \eqref{eq-1} for general $\bm{\mu} \in \mathbb{R}^l, \sigma \in \mathbb{R}$ is
\[ f(\bm{\mu}, \sigma) = \int_{\mathbb{R}^l } \sum_{j = 1}^n \| \sigma \bm{ x} - \bm{y}_j - \bm{\mu} \|^2 d\pi(x,y_j)  \]
\begin{theorem}
    The optimal choices of $\bm{\mu}^{\ast}$ and $\sigma^{\ast}$ are given by the formulas:
    \begin{align*}
        \sigma^{\ast} &= \frac{N \left( \int_{\mathbb{R}^l} \sum_{j=1}^n \bm{x}^\top \bm{y}_j \, d\pi(\bm{x}, \bm{y}_j) \right) - \left( \sum_{j=1}^n b_j \bm{y}_j \right)^\top \left( \int_{\mathbb{R}^l} \bm{x} \, d\alpha(\bm{x}) \right)}{N \left( \int_{\mathbb{R}^l} \| \bm{x} \|^2 \, d\alpha(\bm{x}) \right) - \left\| \int_{\mathbb{R}^l} \bm{x} \, d\alpha(\bm{x}) \right\|^2} \tag{3} \label{eq-3} \\ 
        \bm{\mu}^{\ast} &= \frac{\sigma^{\ast} \int_{\mathbb{R}^l} \bm{x} \, d\alpha(\bm{x}) - \sum_{j=1}^n b_j \bm{y}_j}{N}, \tag{4} \label{eq-4}
    \end{align*}
    where $N$ is given by equation \eqref{eq-2}. 
\end{theorem}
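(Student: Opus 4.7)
The plan is to exploit the fact that, by the preceding corollary, the optimal transport plan $\pi$ does not depend on $\bm{\mu}$ or $\sigma$, so once $\pi$ is fixed, $f(\bm{\mu}, \sigma)$ is just an explicit polynomial in the parameters. The formulas for $\bm{\mu}^{\ast}, \sigma^{\ast}$ should then drop out of the first-order optimality conditions.

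First I would expand the integrand, writing
\[
\|\sigma \bm{x} - \bm{y}_j - \bm{\mu}\|^2 = \sigma^2 \|\bm{x}\|^2 - 2\sigma \bm{x}^\top \bm{y}_j - 2\sigma \bm{x}^\top \bm{\mu} + \|\bm{y}_j\|^2 + 2\bm{y}_j^\top \bm{\mu} + \|\bm{\mu}\|^2,
\]
and integrate term by term against $d\pi(\bm{x}, \bm{y}_j)$. Using the marginal constraints from \eqref{eq-1}, namely $\sum_j d\pi(\bm{x}, \bm{y}_j) = d\alpha(\bm{x})$ and $\int d\pi(\bm{x}, \bm{y}_j) = b_j$, together with \eqref{eq-2}, the terms that do not couple $\bm{x}$ and $\bm{y}_j$ collapse to integrals against $\alpha$ or sums against $\bm{b}$. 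Only the cross term $\int \sum_j \bm{x}^\top \bm{y}_j\, d\pi$ genuinely involves $\pi$. This gives a clean expression of $f(\bm{\mu},\sigma)$ as a convex quadratic in $(\bm{\mu},\sigma)$.

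Next I would set the gradient to zero. The condition $\nabla_{\bm{\mu}} f = 0$ is linear in $\bm{\mu}$ and immediately yields
\[
\bm{\mu} = \frac{\sigma \int_{\mathbb{R}^l} \bm{x}\, d\alpha(\bm{x}) - \sum_{j=1}^n b_j \bm{y}_j}{N},
\]
which is exactly \eqref{eq-4}. Substituting this expression for $\bm{\mu}$ into $\partial f / \partial \sigma = 0$ and clearing denominators by $N$ produces a single linear equation in $\sigma$; isolating $\sigma$ yields \eqref{eq-3}.

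The only non-routine point is checking that the formula for $\sigma^{\ast}$ is well defined, i.e.\ that the denominator $N \int \|\bm{x}\|^2\, d\alpha - \bigl\| \int \bm{x}\, d\alpha\bigr\|^2$ is nonzero; this is the Cauchy--Schwarz discriminant for $\alpha$ (times $N$) and is strictly positive whenever $\alpha$ is not supported on a single point, which is a mild nondegeneracy assumption on the source distribution. Convexity of $f$ (its Hessian in $(\bm{\mu},\sigma)$ is positive semidefinite by the same Cauchy--Schwarz inequality) then certifies that the critical point is a global minimum, completing the argument.
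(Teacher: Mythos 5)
Your approach is essentially the same as the paper's: treat $f(\bm{\mu},\sigma)$ as a convex quadratic (fixing the $\pi$ supplied by the preceding corollary), expand the squared norm, use the marginal constraints of \eqref{eq-1} and \eqref{eq-2} to collapse all terms except the $\bm{x}^\top\bm{y}_j$ cross-term, set the gradient to zero, and solve the resulting $2\times 2$ linear system by first eliminating $\bm{\mu}$. The one thing you add beyond the paper is the explicit check that the denominator $N\int\|\bm{x}\|^2\,d\alpha - \|\int\bm{x}\,d\alpha\|^2$ is strictly positive (a Cauchy--Schwarz / variance argument, valid unless $\alpha$ is a point mass) and hence that the critical point is the unique global minimizer; the paper asserts convexity without spelling this out, so your remark is a genuine, if minor, strengthening rather than a different proof.
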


\begin{proof}
    We will find $(\bm{\mu}^{\ast}, \sigma^{\ast}) \in \argmin_{\bm{\mu}, \sigma} f(\bm{\mu}, \sigma)$. To do this, note that $f$ is a convex quadratic function of $\bm{\mu}$ and $\sigma$. As $(\bm{\mu}^{\ast} ,\sigma^{\ast})$ is a critical point of $f$, we have 
    \[
    \bm{0} = \frac{\partial f}{\partial \bm{\mu}} (\bm{\mu}, \sigma) \Big|_{\bm{\mu}^{\ast}, \sigma^{\ast}} = \int_{\mathbb{R}^l} \sum_{j=1}^n 2(\bm{\mu}^{\ast} + \bm{y}_j - \sigma^{\ast} \bm{x}) \, d\pi(\bm{x}, \bm{y}_j)
    \]
    \[
    0 = \frac{\partial f}{\partial \sigma} (\bm{\mu}, \sigma) \Big|_{\bm{\mu}^{\ast}, \sigma^{\ast}} = \int_{\mathbb{R}^l} \sum_{j=1}^n 2 \bm{x}^\top (\sigma^{\ast} \bm{x} - \bm{y}_j - \bm{\mu}^{\ast}) \, d\pi(\bm{x}, \bm{y}_j)
    \]
    This yields the following two linear equations with respect to $\bm{\mu}^{\ast}$ and $\sigma^{\ast}$:
    \begin{align*}
        \bm{\mu}^{\ast} \int_{\mathbb{R}^l} \sum_{j=1}^n 1 \, d\pi(\bm{x}, \bm{y}_j) - \sigma^{\ast} \int_{\mathbb{R}^l} \sum_{j=1}^n \bm{x} \, d\pi(\bm{x}, \bm{y}_j) + \int_{\mathbb{R}^l \times \{ 1, \dots, n \}} \bm{y}_j \, d\pi(\bm{x}, \bm{y}_j) &= \bm{0} \\
        -(\bm{\mu}^{\ast})^\top \int_{\mathbb{R}^l} \sum_{j=1}^n \bm{x} \, d\pi(\bm{x}, \bm{y}_j) + \sigma^{\ast} \int_{\mathbb{R}^l} \sum_{j=1}^n \| \bm{x} \|^2 \, d\pi(\bm{x}, \bm{y}_j) - \int_{\mathbb{R}^l} \sum_{j=1}^n \bm{x}^\top \bm{y}_j \, d\pi(\bm{x}, \bm{y}_j) &= 0
    \end{align*}
    Solving these equations, we obtain
 \begin{align*}
    \sigma^{\ast} &= \frac{N \left( \int_{\mathbb{R}^l} \sum_{j=1}^n \bm{x}^\top \bm{y}_j \, d\pi(\bm{x}, \bm{y}_j) \right) - \left( \sum_{j=1}^n b_j \bm{y}_j \right)^\top \left( \int_{\mathbb{R}^l} \bm{x} \, d\alpha(\bm{x}) \right)}{N \left( \int_{\mathbb{R}^l} \| \bm{x} \|^2 \, d\alpha(\bm{x}) \right) - \left\| \int_{\mathbb{R}^l} \bm{x} \, d\alpha(\bm{x}) \right\|^2} \\ 
    \bm{\mu}^{\ast} &= \frac{\sigma^{\ast} \int_{\mathbb{R}^l} \bm{x} \, d\alpha(\bm{x}) - \sum_{j=1}^n b_j \bm{y}_j}{N},
\end{align*}
as desired. 
\end{proof}
The theorem provides explicit optimal formulas for $\bm{\mu}^{\ast}$ and $\sigma^{\ast}$. Most of the terms appearing in the above formulas can be computed directly. However, the integral
\[
\int_{\mathbb{R}^l} \sum_{j=1}^n \bm{x}^\top \bm{y}_j \, d\pi(\bm{x},\bm{y}_j)
\]
requires knowledge of the optimal primal solution, making it apparently computationally infeasible to determine in polynomial time. Consequently at first glance, finding the exact values of $\bm{\mu}^{\ast}$ and $\sigma^{\ast}$ appears to be intractable. We resolve this issue in forthcoming sections. 

\subsection{Polynomial-Time Algorithm for Parameter Estimation}
 The main theorem of this section, Theorem \ref{thm2-7}, asserts that $\bm{\mu}^{\ast}$ and $\sigma^{\ast}$ can be estimated in polynomial time under the assumption that the source distribution $d\alpha(\bm{x})$ is piecewise constant over a finite number of hyperrectangles. For our analysis, as mentioned in Section 1, we will henceforth assume that the source distribution $d\alpha(\bm{x})$ comes from $\mathcal{C}$, where $\mathcal{C}$ denote the class of probability distributions that are piecewise constant over a finite number of hyperrectangles, i.e., there exists $k \in \mathbb{N}$, $k$ disjoint hyperrectangles $\mathcal{H}_1, \mathcal{H}_2, \dots, \mathcal{H}_k \subseteq \mathbb{R}^l$ and $k$ positive reals $\gamma_1, \dots, \gamma_k$ for which 
\[ d\alpha(\bm{x}) = \begin{cases} \gamma_i \ d\bm{x}, &\text{if } \bm{x} \in \mathcal{H}_i \text{ for some } 1 \le i \le k, \\ 0, &\text{otherwise}. \end{cases} \]
As $\alpha$ is a probability distribution, it follows that $\sum_{i = 1}^{k} \gamma_i \cdot \text{vol}(\mathcal{H}_i) = N = 1$. From here on, we will fix $N = 1$. Let $\mathcal{F}$ denote the class of probability distributions characterized by a nonnegative Riemann integrable function $\varphi$, where $\varphi : \mathbb{R}^l \to \mathbb{R}$ satisfies $\int_{\mathbb{R}^l} \varphi(\bm{x}) \, d\bm{x} = 1$. In fact, the general class of distributions $\mathcal{F}$ can be approximated arbitrarily closely by the class $\mathcal{C}$ by definition of Riemann integrability \cite{adams2013calculus}. 
\begin{claim} \label{claim2-5}
    For any probability distribution $f \in \mathcal{F}$ and any $\varepsilon > 0$, there exists $g \in \mathcal{C}$ such that 
    \[
    \int_{\mathbb{R}^l} |f(\bm{x}) - g(\bm{x})| \, d\bm{x} < \varepsilon.
    \]
\end{claim}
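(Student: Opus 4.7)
The plan is a standard $L^1$-approximation argument in three layers: truncate the tails, replace the restricted function by a piecewise constant approximation on a fine partition, then renormalize so the result is a probability density in $\mathcal{C}$.

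First I would exploit improper Riemann integrability of $f$ to choose a single hyperrectangle, say a large cube $R = [-M,M]^l$, for which $\int_R f > 1 - \varepsilon/4$. This controls the tails, since wherever the eventual $g$ is zero on $\mathbb{R}^l \setminus R$, the contribution to $\int |f - g|$ is exactly $\int_{\mathbb{R}^l \setminus R} f < \varepsilon/4$. Because $f$ is Riemann integrable on the bounded set $R$, it is in particular bounded there, which is what makes the next step work.

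Next, using the definition of Riemann integrability of $f$ on $R$, I would pick a fine enough axis-aligned grid partition of $R$ into finitely many closed hyperrectangles $R_1,\dots,R_k$ whose interiors are disjoint, such that the difference between the upper and lower Riemann sums of $f$ on this partition is less than $\varepsilon/4$. Then I define
\[
g_0(\bm{x}) = \gamma_i := \frac{1}{\mathrm{vol}(R_i)}\int_{R_i} f(\bm{y})\,d\bm{y} \quad \text{for } \bm{x} \in R_i,
\]
and $g_0 = 0$ outside $R$. Taking half-open boxes (or discarding the measure-zero boundaries) makes the supports genuinely disjoint, and dropping the indices $i$ with $\gamma_i = 0$ preserves positivity as required by $\mathcal{C}$. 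On each $R_i$, the inequality $m_i \le f,g_0 \le M_i$ (with $m_i, M_i$ the inf and sup of $f$ on $R_i$) gives $\int_{R_i}|f - g_0|\,d\bm{x} \le (M_i - m_i)\,\mathrm{vol}(R_i)$, and summing in $i$ yields $\int_R |f - g_0|\,d\bm{x} < \varepsilon/4$ by the Riemann sum control.

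Finally I would normalize. Set $c := \int_{\mathbb{R}^l} g_0\,d\bm{x} = \int_R f\,d\bm{x} \in (1 - \varepsilon/4,\,1]$ and define $g := g_0 / c$, which is still piecewise constant on the same hyperrectangles with rescaled coefficients $\gamma_i/c > 0$, hence in $\mathcal{C}$, and integrates to $1$. The $L^1$ error splits as
\[
\int_{\mathbb{R}^l} |f - g|\,d\bm{x} \;\le\; \int_R |f - g_0|\,d\bm{x} \;+\; \int_R |g_0 - g|\,d\bm{x} \;+\; \int_{\mathbb{R}^l \setminus R} f\,d\bm{x},
\]
where the middle term equals $(1/c - 1)\int_R g_0\,d\bm{x} = 1 - c < \varepsilon/4$, and the other two terms are each $< \varepsilon/4$ by the previous steps, giving a total less than $3\varepsilon/4 < \varepsilon$.

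There is no serious obstacle here; this is a routine approximation. The one subtlety worth checking carefully is the interplay between the three error contributions after normalization: the truncation loss $1 - c$ enters twice (once as the tail mass and once through the rescaling factor $1/c$), so one must budget the $\varepsilon$ across truncation and discretization before fixing $M$ and the partition mesh. Everything else (disjointness of the hyperrectangles, boundedness of $f$ on $R$, positivity of the $\gamma_i$) is handled by small conventional adjustments.
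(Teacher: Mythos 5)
Your proof is correct and is precisely the argument the paper gestures at when it asserts that Claim 2.5 follows ``by definition of Riemann integrability'' (the paper gives no detailed proof, only a citation). Your three-stage argument --- truncate the tail to a cube $R$, approximate $f|_R$ by the piecewise-constant Riemann-sum average on a fine grid, then renormalize and budget $\varepsilon$ across the three error terms --- is exactly the natural elaboration of that remark, and the small conventional adjustments you flag (half-open boxes for disjointness, discarding cells with $\gamma_i=0$, the double appearance of the truncation loss $1-c$) are all handled correctly.
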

This motivates our focus on the class of distributions $\mathcal{C}$ for our algorithm design. While the class $\mathcal{C}$ provides sufficient generality for our purposes, extending the algorithm to the broader function class $\mathcal{F}$ is not immediate due to Claim \ref{claim2-5}'s implicit exponential dependence on $l$.
\vspace{3 mm}
\newline 
We now highlight several key parameters that play a crucial role in our analysis, which dictates the runtime of our parameter estimation algorithm for Wasserstein minimization. For convenience, we define a \textbf{reference set} $\mathcal{X}\subseteq \mathbb{R}^l$, which consists of all corners of the $k$ hyperrectangles, along with all sample points. Specifically, if each of the $k$ hyperrectangles is described by $\mathcal{H}_i = [a_{i,1}, b_{i,1}] \times [a_{i,2}, b_{i,2}] \times \cdots \times [a_{i,l}, b_{i,l}]$, we define 
\begin{align*}
    \mathcal{X}_1 &= \{\bm{y}_j\}_{j=1}^n \\ 
    \mathcal{X}_2 &= \bigcup_{i=1}^k \left\{ (x_{i,1}, x_{i,2}, \dots, x_{i,l}) \ \bigg| \ x_{i,h} \in \{ a_{i,h}, b_{i,h} \} \text{ for each } h = 1, 2, \dots, l \right\} \\
    \mathcal{X} &= \mathcal{X}_1 \cup \mathcal{X}_2 
\end{align*}
and thus $\mathcal{X}$ is a set of size at most $k \cdot 2^l + n$. 
\begin{theorem} \label{thm2-7}
Let $n$ be the number of sample points, $l$ be the dimensionality of the space, $k$ be the number of hyperrectangles characterizing the source distribution, $D = \max_{\bm{x} \in \mathcal{X}} \| \bm{x} \|$ and $s = \min \{ \min_{\substack{\bm{x}, \bm{y} \in \mathcal{X}_1 \\ \bm{x} \not= \bm{y}}} \| \bm{x} - \bm{y} \|, \min_{\substack{1 \le i \le k \\ 1 \le j \le l}} |b_{i,j} - a_{i,j}| \}$.
There exists a randomized polynomial-time algorithm with a runtime of 
\[ \text{poly}\left( n, l,  k,  D, \frac{1}{s}, \frac{1}{\varepsilon}, \log \frac{1}{\eta} \right) \]
that estimates $\bm{\mu}^{\ast}$ within $\varepsilon D$-accuracy and $\sigma^{\ast}$ within $\varepsilon$-accuracy with probability at least $1 - \eta$.
\end{theorem}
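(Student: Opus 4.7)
The plan is to exploit the preceding corollary: since the primal optimal transport map $\pi^{\ast}$ is invariant in $\bm{\mu}$ and $\sigma$, we may work with the canonical instance $(\bm{\mu}, \sigma) = (\bm{0}, 1)$, approximately solve its semi-discrete optimal transport problem, and plug the result into \eqref{eq-3}--\eqref{eq-4}. Most ingredients of those formulas --- $N = 1$, $\sum_j b_j \bm{y}_j$, $\int_{\mathbb{R}^l} \bm{x}\, d\alpha(\bm{x})$, and $\int_{\mathbb{R}^l} \|\bm{x}\|^2 \, d\alpha(\bm{x})$ --- are moments of a piecewise-constant density against hyperrectangles and admit closed forms computable in time $\poly(k,l)$. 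The only genuinely hard quantity is
\[ \mathcal{I} := \int_{\mathbb{R}^l} \sum_{j=1}^n \bm{x}^\top \bm{y}_j \, d\pi^{\ast}(\bm{x}, \bm{y}_j), \]
which encodes $\pi^{\ast}$ itself.

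First, I would approximately solve the finite-dimensional semi-discrete dual at $(\bm{\mu}, \sigma) = (\bm{0}, 1)$. The dual objective $\Phi(\bm{g}) := \int_{\mathbb{R}^l} \min_j (\|\bm{x} - \bm{y}_j\|^2 - g_j)\, d\alpha(\bm{x}) + \sum_j g_j b_j$ is concave in $\bm{g} \in \mathbb{R}^n$ and has $j$-th supergradient $b_j - \alpha(L_j(\bm{g}))$, where $L_j(\bm{g}) := \{\bm{x} : \|\bm{x} - \bm{y}_j\|^2 - g_j \le \|\bm{x} - \bm{y}_i\|^2 - g_i \ \forall i\}$ is the $j$-th Laguerre cell. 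Since $\alpha$ is a mixture of uniforms over the $\mathcal{H}_i$, one draws samples in time $O(kl)$ and identifies the containing $L_j(\bm{g})$ in time $O(nl)$, yielding an unbiased bounded Monte Carlo estimator of $\nabla \Phi$. A projected stochastic gradient ascent over the box $\|\bm{g}\|_\infty \le O(D^2)$ --- whose bound follows from $c$-concavity together with $\text{supp}(\alpha) \subseteq B(\bm{0}, D)$ --- converges in $\poly(n, l, D, 1/s, 1/\varepsilon)$ iterations to $\hat{\bm{g}}$ within any prescribed accuracy of $\bm{g}^{\ast}$. From $\hat{\bm{g}}$ I form the approximate transport map $\hat{T}(\bm{x}) := \bm{y}_{\hat{j}(\bm{x})}$, where $\hat{j}(\bm{x}) := \argmin_j (\|\bm{x} - \bm{y}_j\|^2 - \hat{g}_j)$.

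Second, I would Monte Carlo estimate $\mathcal{I}$: draw $M = O(D^4 \varepsilon^{-2} \log(1/\eta))$ i.i.d.\ samples $\bm{x}^{(1)}, \dots, \bm{x}^{(M)} \sim \alpha$ and output $\hat{\mathcal{I}} = \tfrac{1}{M} \sum_t (\bm{x}^{(t)})^\top \hat{T}(\bm{x}^{(t)})$. Each summand has magnitude at most $D^2$, so Hoeffding's inequality gives $|\hat{\mathcal{I}} - \mathbb{E}_{\bm{x} \sim \alpha}[\bm{x}^\top \hat{T}(\bm{x})]| \le \varepsilon/2$ with probability at least $1 - \eta/2$. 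Plugging $\hat{\mathcal{I}}$ together with the closed-form moments into \eqref{eq-3}--\eqref{eq-4} then yields $(\hat{\bm{\mu}}, \hat{\sigma})$.

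The hardest part will be controlling error propagation in two sensitive places. First, $\hat{\bm{g}} \ne \bm{g}^{\ast}$ means $\hat{T}$ differs from the true optimal map only on a thin slab of width $O(\|\hat{\bm{g}} - \bm{g}^{\ast}\|_\infty / s)$ around each pairwise Laguerre boundary; a transversality argument using the minimum hyperrectangle side $s$ to lower-bound the normal sweep-speed of these boundaries shows that the $\alpha$-mass of the slab scales linearly with $\|\hat{\bm{g}} - \bm{g}^{\ast}\|_\infty / s$, so requesting dual accuracy $O(\varepsilon s / D^2)$ from the previous step suffices to keep the bias in $\mathcal{I}$ at most $\varepsilon/2$. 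Second, the denominator of \eqref{eq-3} is the total variance of $\alpha$, which for a mixture of uniforms on hyperrectangles of minimum side $s$ is bounded below by $s^2/12$ via the conditional-variance decomposition; hence an additive $\varepsilon$ error in the numerator translates into $O(\varepsilon / s^2)$ error in $\sigma^{\ast}$, and via \eqref{eq-4} into $O(D \varepsilon / s^2)$ error in $\bm{\mu}^{\ast}$. Rescaling $\varepsilon$ by a $\poly(D, 1/s)$ factor and union-bounding over the $\poly(\cdot)$-many randomized subroutines delivers the claimed $\varepsilon D$-accuracy for $\bm{\mu}^{\ast}$, $\varepsilon$-accuracy for $\sigma^{\ast}$, and confidence $1 - \eta$ within the stated polynomial runtime.
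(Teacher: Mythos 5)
Your proposal agrees with the paper on the overall architecture: invariance of $\pi^\ast$ in $(\bm{\mu},\sigma)$ reduces the problem to a single semi-discrete instance at $(\bm{0},1)$, the moments in \eqref{eq-3}--\eqref{eq-4} are computable in closed form, and the only nontrivial object is the cross term $\mathcal{I}$, which you approach through the concave dual $\Phi\equiv\mathcal{E}$ over Laguerre cells. Your choice of a Monte-Carlo gradient oracle (sample $\bm{x}\sim\alpha$, locate its cell) with projected stochastic gradient ascent, rather than the paper's Kannan--Lov\'asz--Simonovits volume-estimation oracle feeding inexact deterministic gradient descent, is a genuinely different and perfectly reasonable way to get high-probability approximate gradients of a concave $L$-smooth $\mathcal{E}$; it would likely be simpler to implement.

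The gap is in Step 2. You recover $\mathcal{I}$ by sampling $\bm{x}^\top\hat T(\bm{x})$, and you argue the bias is small because $\hat T$ and $T^\ast$ disagree only on slabs of $\alpha$-mass proportional to $\|\hat{\bm{g}}-\bm{g}^\ast\|_\infty$. That requires convergence of the \emph{iterates} $\hat{\bm{g}}\to\bm{g}^\ast$, but $\mathcal{E}$ is concave and $L$-smooth, not strongly concave (it is invariant under $\bm{g}\mapsto\bm{g}+c\bm{1}$, and even on $\mathcal{G}_0$ its Hessian, the matrix of Laguerre-boundary surface masses, can be arbitrarily ill-conditioned). Projected (stochastic or inexact) gradient ascent on such a function only delivers a $\mathrm{poly}(1/\varepsilon)$ guarantee on the objective gap $\mathcal{E}(\bm{g}^\ast)-\mathcal{E}(\hat{\bm{g}})$, not on $\|\hat{\bm{g}}-\bm{g}^\ast\|$. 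So the sentence claiming you reach ``$\hat{\bm{g}}$ within any prescribed accuracy of $\bm{g}^\ast$'' in polynomially many iterations is not justified, and without it the slab bound on the bias of $\hat{\mathcal I}$ does not go through. The paper sidesteps exactly this: it never needs $\hat{\bm{g}}\approx\bm{g}^\ast$, because $\mathcal{I}$ is recovered algebraically from the scalar $p^\ast=\mathcal{E}(\bm{g}^\ast)$ via
\[ p^\ast=\int\|\bm{x}\|^2\,d\alpha+\sum_j b_j\|\bm{y}_j\|^2-2\mathcal{I}, \]
so an $\varepsilon'$-approximation of the dual \emph{value} $\mathcal{E}(\bm{g}_{\overline M})$ suffices. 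If you replace your Monte-Carlo estimate of $\mathcal{I}$ with this identity applied to an $\varepsilon'$-accurate estimate of $\Phi(\hat{\bm{g}})$, your remaining error-propagation analysis (translation-invariance of the denominator, lower bound $\Omega(s^2)$ on the variance term, rescaling of $\varepsilon$) lines up with the paper's and the argument closes.
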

\begin{remark}
    We first remark that as $D = \max_{\bm{x} \in \mathcal{X}} \| \bm{x} \|$, we have
    \[ \max_{\bm{x}, \bm{y} \in \mathcal{X}} \| \bm{x} - \bm{y} \| \le 2D. \]
\end{remark}
\begin{proof} 
From equations \eqref{eq-3} and \eqref{eq-4}, the optimal values of $\bm{\mu}^\ast$ and $\sigma^\ast$ can be expressed as:  
\begin{align*}
    \sigma^\ast &= \frac{N \left( \int_{\mathbb{R}^l} \sum_{j=1}^n \bm{x}^\top \bm{y}_j \, d\pi(\bm{x}, \bm{y}_j) \right) - \left( \sum_{j=1}^n b_j \bm{y}_j \right)^\top \left( \int_{\mathbb{R}^l} \bm{x} \, d\alpha(\bm{x}) \right)}{N \left( \int_{\mathbb{R}^l} \| \bm{x} \|^2 \, d\alpha(\bm{x}) \right) - \left\| \int_{\mathbb{R}^l} \bm{x} \, d\alpha(\bm{x}) \right\|^2}, \\ 
    \bm{\mu}^\ast &= \frac{\sigma^\ast \int_{\mathbb{R}^l} \bm{x} \, d\alpha(\bm{x}) - \sum_{j=1}^n b_j \bm{y}_j}{N}.
\end{align*}  
The terms $\int_{\mathbb{R}^l} \| \bm{x} \|^2 \, d\alpha(\bm{x})$, $\int_{\mathbb{R}^l} \bm{x} \, d\alpha(\bm{x})$, and $\sum_{j=1}^n b_j \bm{y}_j$ can all be computed explicitly.  
To compute $\int_{\mathbb{R}^l} \| \bm{x} \|^2 \, d \alpha(\bm{x})$, observe that this integral can be expressed as $\sum_{i = 1}^k \gamma_i \int_{\mathcal{H}_i} \| \bm{x} \|^2 \, d\bm{x}$. By denoting $\mathcal{H}_i = [a_{i,1}, b_{i,1}] \times [a_{i,2}, b_{i,2}] \times \cdots \times [a_{i,l}, b_{i,l}]$ for each $i$, we have
\begin{align*}
\int_{\mathcal{H}_i} \| \bm{x} \|^2 \, d\bm{x} &= \int_{a_{i,1}}^{b_{i,1}} \int_{a_{i,2}}^{b_{i,2}} \cdots \int_{a_{i,l}}^{b_{i,l}} (x_1^2 + x_2^2 + \cdots + x_l^2) \, dx_1 \, dx_2 \cdots dx_l \\ 
&= \frac{1}{3} \sum_{j = 1}^l (b_{i,j}^3 - a_{i,j}^3) \prod_{h \ne j} (b_{i,h} - a_{i,h}).
\end{align*}
Thus, $\int_{\mathbb{R}^l} \| \bm{x} \|^2 \, d\alpha(\bm{x})$ can be computed explicitly as
\[
\int_{\mathbb{R}^l} \| \bm{x} \|^2 \, d\alpha(\bm{x}) = \frac{1}{3} \sum_{i=1}^k \gamma_i \sum_{j=1}^l (b_{i,j}^3 - a_{i,j}^3) \prod_{h \ne j} (b_{i,h} - a_{i,h}).
\]  
The remaining terms, $\int_{\mathbb{R}^l} \bm{x} \, d\alpha(\bm{x})$ and $\sum_{j=1}^n b_j \bm{y}_j$, can be evaluated in a similar manner.  

To compute $\int_{\mathbb{R}^l} \sum_{j=1}^n \bm{x}^\top \bm{y}_j \, d\pi(\bm{x}, \bm{y}_j)$, consider the case where $\bm{\mu} = \bm{0}$ and $\sigma = 1$, with $\pi$ as the primal optimal solution. The corresponding primal optimal cost is:  
\[
p^\ast = \int_{\mathbb{R}^l} \sum_{j=1}^n \| \bm{x} - \bm{y}_j \|^2 \, d\pi(\bm{x}, \bm{y}_j).
\]  
Expanding and substituting the constraints in \eqref{eq-1}, we find:  
\[
p^\ast = \int_{\mathbb{R}^l} \| \bm{x} \|^2 \, d\alpha(\bm{x}) + \sum_{j=1}^n b_j \| \bm{y}_j \|^2 - 2 \int_{\mathbb{R}^l} \sum_{j=1}^n \bm{x}^\top \bm{y}_j \, d\pi(\bm{x}, \bm{y}_j).
\]  
Thus, $\int_{\mathbb{R}^l} \sum_{j=1}^n \bm{x}^\top \bm{y}_j \, d\pi(\bm{x}, \bm{y}_j)$ can be computed directly from $p^\ast$, which does not need $\pi$. 
    \vspace{3 mm}
    \newline 
By the Strong Duality Theorem, the optimal primal cost $p^{\ast}$ is equal to the optimal value of the dual program:
\[
\begin{aligned}
    & \max_{\bm{g}, h}  \int_{\mathbb{R}^l} h(\bm{x}) \, d\alpha(\bm{x}) + \sum_{j=1}^n g_j b_j \\
    \text{subject to} \quad & g_j + h(\bm{x}) \le c(\bm{x}, \bm{y_j}), \quad \forall \bm{x} \in \mathbb{R}^l, \quad \forall j = 1, \dots, n.
\end{aligned} \tag{5} \label{eq-5}
\]
 It suffices to show that we can estimate the optimal value $p^{\ast}$ to the dual program sufficiently close enough efficiently for $\bm{\mu} = \bm{0}$ and $\sigma = 1$ (the cost function of the dual program is thus $c(\bm{x}, \bm{y}_j) = \| \bm{x} - \bm{y}_j \|^2$) in polynomial time.  To achieve this, we need to maximize the energy function $\mathcal{E}(\bm{g})$ in polynomial time. Following the definition in \cite{peyré2020computationaloptimaltransport}, we may define the function the energy function $\mathcal{E}(h, \bm{g})$ as
\begin{equation} 
\mathcal{E}(h, \bm{g}) := \int_{\mathbb{R}^l} h(\bm{x}) \, d \alpha(\bm{x}) + \sum_{j}  g_j \beta_j - \iota_{\Gamma}(h, \bm{g}), 
\tag{6} \label{7}
\end{equation} 
where $\Gamma$ is the feasible region of \eqref{eq-5} and the indicator function $\iota_{\Gamma}$ is defined as: 
\[
\iota_{\Gamma}(h, \bm{g}) = \begin{cases} 
0 & \text{if } g_i + h(\bm{x}) \le c(\bm{x}, \bm{y_j}), \quad\forall \bm{x} \in \mathbb{R}^l, \quad \forall j = 1, \dots, n , \\ 
+\infty & \text{otherwise}.
\end{cases}
\]
It follows that $\mathcal{E}(h, \bm{g})$ is a concave function with respect to both $h$ and $\bm{g}$. We further define $\mathcal{E}(\bm{g}) = \max_{h} \mathcal{E}(h, \bm{g})$. By Proposition 8.35 in \cite{bauschke2011convex}, it follows that $\mathcal{E}(\bm{g})$ is also concave.
\vspace{3 mm}
\newline 
To explicitly derive $\mathcal{E}(\bm{g})$, observe that $h$ is weighted nonnegatively in the definition of $\mathcal{E}(h, \bm{g})$. Therefore, $\max_{h} \mathcal{E}(h, \bm{g})$ is obtained by setting $h$ pointwise to its maximum value while satisfying the constraints. Specifically, for each $\bm{x}$, $h(\bm{x})$ should be chosen such that $h(\bm{x}) + g_j \leq c(\bm{x}, \bm{y}_j)$ for all $j$, with at least one $j$ where equality holds: $h(\bm{x}) + g_j = c(\bm{x}, \bm{y}_j)$. Consequently, $h(\bm{x})$ should be set to $\min_j (c(\bm{x}, \bm{y}_j) - g_j)$ for all $\bm{x}$.
 \vspace{3 mm}
 \newline 
 This allows us to write the minimized energy $\mathcal{E}(\bm{g})$ for any $\bm{g} \in \mathbb{R}^n$ explicitly as
\begin{equation}
\mathcal{E}(\bm{g}) = \sum_{j = 1}^n \int_{\mathbb{L}_j(\bm{g})} \left(c(\bm{x}, \bm{y}_j) - g_j \right) \, d \alpha(\bm{x}) + \langle \bm{g}, \bm{b} \rangle, 
\tag{7} \label{eq-7}
\end{equation}
where $c(\bm{x}, \bm{y}_j) \in \mathbb{R}$ is the cost function, and in this paper, we specifically consider $c(\bm{x}, \bm{y}_j) = \| \bm{x} - \bm{y}_j \|^2$. Here, $\mathbb{L}_j(\bm{g})$ denotes the \textit{Laguerre cell} associated with the dual weights $\bm{g}$, i.e.,
   \[ \mathbb{L}_j(\bm{g}) = \{ \bm{x} \in \mathbb{R}^l : \forall j' \not= j, \| \bm{x} - \bm{y}_j \|^2 - g_j \le \| \bm{x} - \bm{y}_{j'} \|^2 - g_{j'} \} \tag{8} \label{eq-8}\]
  which induces a decomposition of $\mathbb{R}^l$ as $\mathbb{R}^l = \bigcup_{1 \le j \le n} \mathbb{L}_j(\bm{g})$, such that $\text{int}(\mathbb{L}_j)$ are pairwise disjoint for $1 \le j \le n$. In the case $\bm{g} = \bm{0}$, Laguerre cells are commonly called Voronoi cells. 
    \vspace{3 mm}
    \newline 
    To maximize the energy $\mathcal{E}(\bm{g})$, we can apply inexact gradient ascent as the energy function $\mathcal{E}$ is concave as established. The function $\mathcal{E}(\bm{g})$ is differentiable and the gradient is given by \cite{peyre_courseot} as:
    \[ \nabla \mathcal{E}(\bm{g})_j = -\int_{\mathbb{L}_j(\bm{g})} d\alpha(\bm{x}) + b_j, \ \forall j \in [n]. \]
    Under our source distribution's assumption, we can thus rewrite $\nabla \mathcal{E}(\bm{g})_j$ as
    \[ \nabla \mathcal{E}(\bm{g})_j = \sum_{\ell = 1}^k -\gamma_{\ell} \cdot \text{vol}(\mathbb{L}_j(\bm{g}) \cap \mathcal{H}_{\ell}) + b_j.\]
     Note that for every $1 \le \ell \le k$ and $1 \le j \le n$, $\mathbb{L}_j(\bm{g}) \cap \mathcal{H}_{\ell}$ is a convex body for which we can provide a separation oracle efficiently. By using Kannan, Lovasz, and Simonovits' Theorem which is stated in Theorem \ref{oracle}, given $\varepsilon, \eta > 0$, we can estimate $\text{vol}(\mathbb{L}_j(\bm{g}) \cap \mathcal{H}_{\ell})$ within $\varepsilon$ accuracy with probability at least $1 - \eta$ in polynomial time. 
    \begin{theorem}[Kannan, Lovasz, Simonovits \cite{Kannan1997RandomWA}] \label{oracle}
    Given a separation oracle of an $n$-dimensional convex body $K$, and $\varepsilon, \eta > 0$, there is an algorithm that uses $O \left[ \text{poly} \left( n, \frac{1}{\varepsilon}, \log \frac{1}{\eta} \right) \right]$ oracle calls which returns a real number $\zeta$ for which
    \[ \left| \frac{\text{vol}(K)}{\zeta} - 1 \right| < \varepsilon  \]
    with probability at least $1 - \eta$. 
    \end{theorem}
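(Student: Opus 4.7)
The plan is to estimate $\text{vol}(K)$ via a multi-phase Monte Carlo scheme in which each phase estimates a volume ratio using a rapidly mixing random walk on a convex body. First I would preprocess $K$ into a well-rounded form: the given separation oracle yields a membership oracle, and then the shallow-cut ellipsoid method of Gr\"otschel--Lov\'asz--Schrijver produces, in $\text{poly}(n)$ oracle calls, an affine map $T$ such that $T(K)$ is sandwiched between $B(\bm{0},1)$ and $B(\bm{0},R)$ with $R = \text{poly}(n)$. Since $\text{vol}(K) = \text{vol}(T(K)) \cdot |\det T^{-1}|$, it suffices to estimate the volume of the well-rounded body $T(K)$.

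Next I would introduce a geometric sequence of nested convex bodies $K_0 \subset K_1 \subset \cdots \subset K_m = T(K)$ defined by $K_i = T(K) \cap B(\bm{0}, 2^{i/n})$ with $m = O(n \log R)$. The innermost body $K_0 = B(\bm{0},1)$ has analytically known volume, and because consecutive radii grow by the factor $2^{1/n}$, each ratio $\rho_i = \text{vol}(K_i)/\text{vol}(K_{i-1})$ lies in $[1,2]$. Telescoping gives $\text{vol}(T(K)) = \text{vol}(K_0) \prod_{i=1}^m \rho_i$, and I would estimate each $1/\rho_i$ by drawing $N = \text{poly}(n, m, 1/\varepsilon, \log(1/\eta))$ approximately uniform samples from $K_i$ and counting the fraction that land in $K_{i-1}$. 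A Chernoff bound per phase combined with a union bound over the $m$ phases ensures that the product estimator approximates $\text{vol}(T(K))$ within a multiplicative factor $1 \pm \varepsilon$ with probability at least $1 - \eta$, which is precisely what the theorem demands.

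The crux of the argument, and the principal obstacle, is generating these near-uniform samples in polynomial time. I would deploy the ball walk: from a current state $\bm{x} \in K_i$, propose $\bm{y}$ uniformly in a ball of radius $\delta = \Theta(1/\sqrt{n})$ centered at $\bm{x}$ and accept iff $\bm{y} \in K_i$, which is tested by a single separation-oracle call. Samples drawn at phase $i-1$ provide a warm start for phase $i$, exploiting the fact that $K_{i-1}$ occupies a constant fraction of $K_i$ so that the ratio of the warm-start density to the uniform density on $K_i$ is bounded. The hard part is controlling the mixing time: one must establish a lower bound on the conductance $\phi$ of the walk by an isoperimetric inequality for convex bodies (where the deep geometric content on log-concave measures sits), and then apply the Cheeger-type inequality $t_{\text{mix}} = O(\phi^{-2} \log(1/\varepsilon_{\text{TV}}))$ to turn conductance into a bound on the mixing time. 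Putting preprocessing, the multi-phase estimator, and the per-step separation-oracle cost together yields the claimed $\text{poly}(n, 1/\varepsilon, \log(1/\eta))$ oracle-call complexity.
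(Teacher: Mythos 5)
The paper does not actually prove this statement: it is imported verbatim as an external result, cited to Kannan--Lov\'asz--Simonovits, and used as a black box inside the proof of Theorem \ref{thm2-7}. So there is no internal proof to compare yours against. That said, your outline is a faithful reconstruction of the standard argument from the volume-computation literature that the citation points to: rounding via the shallow-cut ellipsoid method so that $B(\bm{0},1)\subseteq T(K)\subseteq B(\bm{0},R)$ with $R=\poly(n)$, the telescoping product over the nested bodies $K_i = T(K)\cap B(\bm{0},2^{i/n})$ with each ratio in $[1,2]$, per-phase ratio estimation from near-uniform samples with warm starts, and mixing-time control of the ball walk via an isoperimetric inequality and a Cheeger-type conductance bound. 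Two small caveats if you were to flesh this out: the per-phase concentration step cannot be a vanilla Chernoff bound, since consecutive samples from the ball walk are dependent (one either spaces the samples by a mixing time or invokes a Chernoff bound for Markov chains), and the $\log(1/\eta)$ dependence in the failure probability is typically obtained by a median-of-means amplification of an estimator that is first analyzed only to constant confidence via Chebyshev. Neither issue affects the correctness of your high-level plan, and the oracle-call count you would obtain is of the form $\poly(n,1/\varepsilon,\log(1/\eta))$ as the theorem asserts.
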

    We now present the separation oracles for a fixed Laguerre cell, which, when combined with the theorem above, provide a method to compute $\nabla \mathcal{E}(\bm{g})$ to within $\varepsilon$-accuracy in polynomial time. The separation oracle for a hyperrectangle is straightforward, as it can be determined by $O(l)$ min/max operations. For Laguerre cells, the separation oracle can be constructed as follows: Given $\bm{y}_1, \dots, \bm{y}_n$ and $g_1, \dots, g_n$, where $\bm{y}_i \in \mathbb{R}^l$ and $g_i \in \mathbb{R}$ for all $1 \le i \le n$, recall the definition of Laguerre cells associated with $\bm{g}$, as in \eqref{eq-8}. 
    We will construct a separation oracle for a specific Laguerre cell $\mathbb{L}_i(\bm{g})$. Check whether $\| \bm{x} - \bm{y}_i \|^2 - g_i \le \| \bm{x} - \bm{y}_j \|^2 - g_j$ for $1 \le j \le n$ as follow: 
    \begin{itemize}
        \item If $\| \bm{x} - \bm{y}_i \|^2 - g_i \le \| \bm{x} - \bm{y}_j \|^2 - g_j$ for all $1 \le j \le n$, this means that $\bm{x} \in \mathbb{L}_i(\bm{g})$ and the oracle will confirm that $\bm{x}$ is inside the Laguerre cell $\mathbb{L}_i(\bm{g})$.  
        \item Otherwise, pick the smallest $1 \le j \le n$ where the inequality is violated, i.e. we have $\| \bm{x} - \bm{y}_j \|^2 - g_j < \| \bm{x} - \bm{y}_i \|^2 - g_i$, which is equivalent to $2(\bm{y}_j - \bm{y}_i)^\top \bm{x} > g_i - g_j + \| \bm{y}_j \|^2 - \| \bm{y}_i \|^2$ and thus by picking the hyperplane $\bm{\alpha}^\top \bm{x} = \beta$ where $\bm{\alpha} = 2(\bm{y}_j - \bm{y}_i)$ and $\beta = g_i - g_j + \| \bm{y}_j \|^2 - \| \bm{y}_i \|^2$, we get a hyperplane that separates $\bm{x}$ from the Laguerre cell $\mathbb{L}_i(\bm{g})$, as desired. 
    \end{itemize} 

This establishes a separation oracle for a hyperrectangle that operates in $O(l)$ time and a separation oracle for a Laguerre cell that operates in $O(ln)$ time. 
\vspace{2 mm}
\newline 
To establish an accurate complexity bound for applying inexact gradient descent to minimize the convex function $-\mathcal{E}$, we will first prove that $\mathcal{E}$ is $L$-smooth, where $ L = \text{poly} \left( n, l,  k, \frac{1}{s}\right) $ where $n,l,k,s$ are defined in Theorem \ref{thm2-7}.
Our proof begins by noting an important inequality regarding the ratio of the volume of a hyperrectangle and its projection to some subspace of dimension $l - 1$.
\begin{lemma}\label{projection}
       Let $\mathcal{V} \in \mathbb{R}^l$ be a hyperrectangle with minimum width $\xi$, and let $\mathcal{Z}$ be the projection of $\mathcal{H}$ into an arbitrary subspace of dimension $l - 1$. Then, 
       \[ \text{vol}(\mathcal{Z}) \le \frac{2l}{\xi} \cdot  \text{vol}(\mathcal{V}) .\]
\end{lemma}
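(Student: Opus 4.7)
The plan is to bound $\text{vol}(\mathcal{Z})$ by summing the $(l-1)$-dimensional volumes of the projected facets of $\mathcal{V}$. Write $\mathcal{V} = \prod_{i=1}^l [a_i, b_i]$ and let $\pi : \mathbb{R}^l \to H$ denote the orthogonal projection onto the given $(l-1)$-dimensional subspace $H$. First I would observe that because $\mathcal{V}$ is convex, every interior point $\bm{x} \in \mathcal{V}$ lies on a line parallel to $\ker \pi$ that meets $\partial \mathcal{V}$, and any such boundary intersection point projects to the same point as $\bm{x}$; hence $\pi(\mathcal{V}) = \pi(\partial \mathcal{V})$. In particular, $\pi(\mathcal{V})$ is covered by the union of the images of the $2l$ facets of $\mathcal{V}$.

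Second, I would note that for each coordinate direction $\bm{e}_i$, the two facets perpendicular to $\bm{e}_i$ are $(l-1)$-dimensional hyperrectangles, each of $(l-1)$-volume $\text{vol}(\mathcal{V})/(b_i - a_i)$. The restriction of $\pi$ to any affine $(l-1)$-subspace is an affine map whose Jacobian (with respect to the intrinsic volume forms on the subspace and on $H$) has absolute value $|\cos \theta| \le 1$, where $\theta$ is the angle between that subspace and $H$. Consequently, orthogonal projection cannot increase the $(l-1)$-volume of any individual facet, so $\text{vol}_{l-1}(\pi(F)) \le \text{vol}_{l-1}(F)$ for every facet $F$ of $\mathcal{V}$.

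Combining these two observations with subadditivity of $(l-1)$-dimensional volume over the cover by the $2l$ projected facets yields
\[
\text{vol}(\mathcal{Z}) \;\le\; \sum_{i=1}^l 2\cdot \frac{\text{vol}(\mathcal{V})}{b_i - a_i} \;\le\; \frac{2l}{\xi}\cdot \text{vol}(\mathcal{V}),
\]
using $b_i - a_i \ge \xi$ in the last step. This is exactly the claimed inequality. There is no substantive obstacle in this argument; the only point requiring care is the non-expansion of orthogonal projection when restricted to an affine hyperplane, which follows from the fact that the singular values of an orthogonal projection are all $0$ or $1$ and that its restriction to a hyperplane therefore has all singular values at most $1$.
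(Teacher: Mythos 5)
Your proof is correct and follows essentially the same route as the paper's: cover $\pi(\mathcal{V})$ by the projections of the $2l$ facets, observe that orthogonal projection does not increase $(l-1)$-volume, and bound each facet's volume by $\text{vol}(\mathcal{V})/\xi$. The only cosmetic difference is in how the non-expansion step is justified (you use a singular-value/Jacobian argument, the paper invokes Hadamard's inequality for parallelepipeds), and you track the per-pair bound $\text{vol}(\mathcal{V})/(b_i-a_i)$ before relaxing to $\xi$, whereas the paper bounds every facet by the largest one; these lead to the same final estimate.
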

   \begin{proof} 
   Let $\mathcal{W}$ be the facet of $\mathcal{V}$ with the largest surface area. We know that $\text{vol}(\mathcal{W}) \cdot \xi = \text{vol}(\mathcal{V})$, i.e. we have $\text{vol}(\mathcal{W}) = \frac{1}{\xi} \cdot \text{vol}(\mathcal{V})$. We note that any orthogonal projection of the $(l - 1)$-facet $\mathcal{W}$ onto a hyperplane can only decrease its volume because the projection of a $(l - 1)$-facet is a hyperparallelogram, and the volume of an $(l - 1)$-parallelogram is bounded by the product of the lengths of its sides by Hadamard's inequality. However, the side lengths of $\mathcal{W}$ can only shrink under projection. This, thus shows us that $\text{vol}(\text{proj}_\mathcal{P}(\mathcal{W})) \le \frac{1}{\xi} \text{vol}(\mathcal{V})$ for any hyperplane $\mathcal{P}$. 
   \vspace{3 mm}
   \newline 
   Finally, we observe that $\text{proj}_\mathcal{P}(\mathcal{V}) \subseteq \bigcup_i \text{proj}_{\mathcal{P}}(\mathcal{W}_i)$ where this union of $\mathcal{W}_i$ ranges over all possible $2l$ facets. To see this, let $\bm{x} \in \text{int}(\mathcal{V})$ and let $\bm{a}$ be the normal to the hyperplane $\mathcal{P}$. Then the segment from $\text{proj}_\mathcal{P}(\bm{x})$ to $\bm{x}$ is parallel to $\bm{a}$ by definition of projection. However, extending this segment to a line must pass through the boundary of $\mathcal{V}$ twice, so there are two points on $\partial \mathcal{V}$ that have the same projection as $\bm{x}$. 
       \vspace{3 mm}
       \newline 
This thus shows that $\text{vol}(\text{proj}_{\mathcal{P}}(\mathcal{V})) \leq \frac{2l}{\xi} \cdot \text{vol}(\mathcal{V})$ for any hyperplane $\mathcal{P}$, i.e. we must have $\text{vol}(\mathcal{Z}) \le \frac{2l}{\xi} \cdot \text{vol}(\mathcal{V})$. 
   \end{proof}
   By definition of $s$ in Theorem \ref{thm2-7}, we have that the claim above holds even by replacing $\xi$ with $s$, as we know that $\xi \ge s$.  Before proving $\mathcal{E}$ is $L$-smooth, we will prove a crucial claim regarding the square of the difference in volumes of Laguerre cells generated by two functions within a finite hyperrectangle. This claim will be the core of the proof of $L$-smoothness. 
\begin{lemma} \label{hyperrec}
       Given any finite hyperrectangle $\mathcal{H}$, and any $\bm{g}, \bm{h} \in \mathbb{R}^n$, we have
        \begin{equation} \label{lem2-11}
        \sum_{j = 1}^n (\text{vol}(\mathbb{L}_j(\bm{g}) \cap \mathcal{H}) - \text{vol}(\mathbb{L}_j(\bm{h}) \cap \mathcal{H}))^2 \le \frac{4n^2l^2}{s^4} \text{vol}(\mathcal{H})^2 \sum_{j = 1}^n |g_j - h_j|^2. \tag{9}
        \end{equation} 
\end{lemma}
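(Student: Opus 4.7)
The plan is to control each pointwise difference $V_j(\bm{g}) - V_j(\bm{h}) := \text{vol}(\mathbb{L}_j(\bm{g}) \cap \mathcal{H}) - \text{vol}(\mathbb{L}_j(\bm{h}) \cap \mathcal{H})$ by covering the symmetric difference $(\mathbb{L}_j(\bm{g}) \triangle \mathbb{L}_j(\bm{h})) \cap \mathcal{H}$ with a finite union of thin slabs, then invoking Lemma \ref{projection}. Writing $d_i := g_i - h_i$ (so that $\bm{d} = \bm{g}-\bm{h}$), the boundary between the $j$-th and $k$-th Laguerre cells under weights $\bm{g}$ is the hyperplane
\[
H_{jk}(\bm{g}) : 2(\bm{y}_k - \bm{y}_j)^\top \bm{x} = (g_j - g_k) + \|\bm{y}_k\|^2 - \|\bm{y}_j\|^2,
\]
with $H_{jk}(\bm{h})$ defined analogously. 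These two hyperplanes are parallel and separated by a perpendicular distance $|d_j - d_k|/(2\|\bm{y}_j - \bm{y}_k\|) \le |d_j - d_k|/(2s)$, using the definition of $s$.

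Next, I would verify the containment $\mathbb{L}_j(\bm{g}) \triangle \mathbb{L}_j(\bm{h}) \subseteq \bigcup_{k \ne j} S_{jk}$, where $S_{jk}$ denotes the closed slab bounded by $H_{jk}(\bm{g})$ and $H_{jk}(\bm{h})$: if $\bm{x} \in \mathbb{L}_j(\bm{g}) \setminus \mathbb{L}_j(\bm{h})$, pick $k \ne j$ with $\bm{x} \in \mathbb{L}_k(\bm{h})$; the cell-membership inequalities then place $\bm{x}$ on opposite sides of the two parallel hyperplanes, so $\bm{x} \in S_{jk}$, and the reverse direction is symmetric. To bound $\text{vol}(S_{jk} \cap \mathcal{H})$, I would use the Fubini estimate (width) $\times$ ($(l-1)$-volume of the projection of $\mathcal{H}$ onto the hyperplane orthogonal to $\bm{y}_k - \bm{y}_j$). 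Lemma \ref{projection} bounds that projection by $(2l/s)\, \text{vol}(\mathcal{H})$ since $\mathcal{H}$ has minimum width at least $s$, yielding $\text{vol}(S_{jk} \cap \mathcal{H}) \le (l\, |d_j - d_k| / s^2)\, \text{vol}(\mathcal{H})$.

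Combining via the triangle inequality gives $|V_j(\bm{g}) - V_j(\bm{h})| \le (l\, \text{vol}(\mathcal{H})/s^2) \sum_{k \ne j} |d_j - d_k|$. Squaring, applying Cauchy--Schwarz in the form $\bigl(\sum_{k \ne j} |d_j - d_k|\bigr)^2 \le n \sum_k (d_j - d_k)^2$, and summing over $j$ via the identity $\sum_{j,k}(d_j - d_k)^2 = 2n\|\bm{d}\|^2 - 2\bigl(\sum_j d_j\bigr)^2 \le 2n\|\bm{d}\|^2$ should produce
\[
\sum_{j=1}^n (V_j(\bm{g}) - V_j(\bm{h}))^2 \le \frac{2 n^2 l^2}{s^4}\, \text{vol}(\mathcal{H})^2 \, \|\bm{g} - \bm{h}\|^2,
\]
which is already a factor of $2$ tighter than the stated bound, so the constant $4$ absorbs all slack. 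The one step that requires genuine care is the set-theoretic containment of the symmetric difference inside the union of slabs $S_{jk}$; that bookkeeping is where the proof can most easily go wrong, but once it is established, the remainder reduces to Lemma \ref{projection} together with elementary Cauchy--Schwarz manipulations.
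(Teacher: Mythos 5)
Your proof is correct and takes a genuinely different route from the paper's. The paper interpolates coordinate-by-coordinate from $\bm{g}$ to $\bm{h}$ via a sequence $\bm{f}_0 = \bm{g}, \bm{f}_1, \dots, \bm{f}_n = \bm{h}$ changing one entry per step, uses a monotonicity observation (decreasing only $g_\ell$ shrinks $\mathbb{L}_\ell$ and grows every other cell) to reduce each per-step, per-cell volume change to a slab estimate, and then stitches the $n$ steps together with a Cauchy--Schwarz on the outer telescoping sum. You instead bound the symmetric difference $(\mathbb{L}_j(\bm{g}) \,\triangle\, \mathbb{L}_j(\bm{h})) \cap \mathcal{H}$ directly by the union of $n-1$ slabs $S_{jk}$, one for each pair of parallel boundary hyperplanes $H_{jk}(\bm{g})$, $H_{jk}(\bm{h})$, and apply Lemma \ref{projection} plus a single Cauchy--Schwarz. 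Both arguments rest on the same geometric engine --- the slab-via-projection estimate of Lemma \ref{projection} together with the lower bounds $\|\bm{y}_j - \bm{y}_k\| \ge s$ and minimum hyperrectangle side length $\ge s$ --- but your pairwise decomposition is more direct: it dispenses with the interpolation scaffolding and the monotonicity lemma, keeps explicit track of the fact that a cell's lost or gained region can span boundaries with several neighbors (which is the one place the slab bookkeeping genuinely matters, and which the paper's single-slab bound for the shrinking cell glosses over), and yields the marginally sharper constant $2n^2l^2/s^4$ via the identity $\sum_{j,k}(d_j - d_k)^2 \le 2n\|\bm{g}-\bm{h}\|^2$.
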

    \begin{proof}
        The key idea of this proof is to analyze the sum coordinate-wise. Consider $\bm{f}_0, \bm{f}_1, \dots, \bm{f}_n \in \mathbb{R}^n$ where $\bm{f}_0 = \bm{g}, \bm{f}_n = \bm{h}$, and for any $1 \le j \le n -  1$, we have
        \[ (\bm{f}_i)_j= \begin{cases} h_j &\text{if } 1 \le j \le i \\ g_j &\text{otherwise} \end{cases}.\]
        We first bound the LHS of \eqref{lem2-11} when $\bm{g} := \bm{f}_{\ell - 1}$ and $\bm{h} := \bm{f}_{\ell}$ for any $1 \le \ell \le n $. Fix any $1 \le \ell \le n$. By definition of $(\bm{f}_i)_{0 \le i \le n}$, this reduces to proving 
        \[ \sum_{j = 1}^n (\text{vol}(\mathbb{L}_j(\bm{f}_{\ell - 1}) \cap \mathcal{H}) - \text{vol}(\mathbb{L}_j(\bm{f}_{\ell}) \cap \mathcal{H}))^2 \le c_\ell (g_{\ell} - h_{\ell})^2\text{vol}(\mathcal{H})^2  \]
        for some $c_{\ell} \in \text{poly} \left( n, l, \frac{1}{s} \right)$. We note that by definition, we have
        \begin{align*}
            \mathbb{L}_j(\bm{f}_{\ell - 1}) \cap \mathcal{H} &= \{ \bm{x} \in \mathcal{H}: \| \bm{x} - \bm{y}_j \|^2 - (\bm{f}_{\ell - 1})_j \le \| \bm{x} - \bm{y}_i \|^2 - (\bm{f}_{\ell - 1})_i \ \forall i \}, \\ 
            \mathbb{L}_j(\bm{f}_{\ell}) \cap \mathcal{H} &= \{ \bm{x} \in \mathcal{H}: \| \bm{x} - \bm{y}_j \|^2  - (\bm{f}_{\ell})_j \le \| \bm{x} - \bm{y}_i \|^2 - (\bm{f}_{\ell})_i \  \forall i \}.
        \end{align*}
        By construction, $(\bm{f}_{\ell})_i = (\bm{f}_{\ell - 1})_i$ for all $i \not= \ell$. Without loss of generality we may assume $(\bm{f}_{\ell - 1})_{\ell} = g_{\ell} \ge h_{\ell} = (\bm{f}_{\ell})_{\ell}$. We claim that
        \[ \mathbb{L}_{\ell}(\bm{f}_{\ell}) \cap \mathcal{H} \subseteq \mathbb{L}_{\ell}(\bm{f}_{\ell - 1}) \cap \mathcal{H} \quad \text{and} \quad \mathbb{L}_i(\bm{f}_{\ell}) \cap \mathcal{H} \supseteq \mathbb{L}_i(\bm{f}_{\ell - 1}) \cap \mathcal{H} \ \forall i \not= \ell \]
This is because for any $\bm{x} \in \mathbb{L}_{\ell}(\bm{f}_{\ell}) \cap \mathcal{H}$, we have
\[ \| \bm{x} - \bm{y}_{\ell} \|^2 - (\bm{f}_{\ell - 1})_{\ell} \le \| \bm{x} - \bm{y}_{\ell} \|^2 - (\bm{f}_{\ell})_{\ell}  \le \| \bm{x} - \bm{y}_i \|^2 - (\bm{f}_{\ell})_i = \| \bm{x} - \bm{y}_i \|^2 - (\bm{f}_{\ell - 1})_i \]
for all $i \not= \ell$, and thus $\bm{x} \in \mathbb{L}_{\ell}(\bm{f}_{\ell - 1}) \cap \mathcal{H}$. The other inclusion follows similarly. Having the above inclusion in mind, we can simplify the volume difference as
        \begin{align*} 
        (\text{vol}(\mathbb{L}_{\ell}(\bm{f}_{\ell - 1}) \cap \mathcal{H}) - \text{vol}(\mathbb{L}_{\ell}(\bm{f}_{\ell}) \cap \mathcal{H}))^2 &=  \text{vol}((\mathbb{L}_{\ell}(\bm{f}_{\ell - 1}) \setminus \mathbb{L}_{\ell}(\bm{f}_{\ell})) \cap \mathcal{H})^2, \\ 
        (\text{vol}(\mathbb{L}_{j}(\bm{f}_{\ell - 1}) \cap \mathcal{H}) - \text{vol}(\mathbb{L}_{j}(\bm{f}_{\ell}) \cap \mathcal{H}))^2 &= \text{vol}((\mathbb{L}_j(\bm{f}_{\ell}) \setminus \mathbb{L}_j(\bm{f}_{\ell - 1})) \cap \mathcal{H})^2  &\forall j \not= \ell. 
        \end{align*} 
        Now, we note that $\bm{x} \in \mathbb{L}_{\ell}(\bm{f}_{\ell - 1}) \setminus \mathbb{L}_{\ell}(\bm{f}_{\ell})$ implies that there exists $p$ for which
        \[ \| \bm{x} - \bm{y}_p\|^2 - (\bm{f}_{\ell})_p + h_{\ell} < \| \bm{x} - \bm{y}_{\ell} \|^2 \le \| \bm{x} - \bm{y}_p \|^2 - (\bm{f}_{\ell})_p + g_{\ell}  \]
        and we thus note that any such $\bm{x}$ must lie in 
        \[ \|\bm{y}_p \|^2 - \|\bm{y}_{\ell} \|^2 - (\bm{f}_{\ell})_p + h_{\ell} < 2(\bm{y}_p - \bm{y}_{\ell})^\top \bm{x} \le  \| \bm{y}_p \|^2 - \|\bm{y}_{\ell} \|^2 - (\bm{f}_{\ell})_p + g_{\ell}, \]
        i.e. there exists $b_1, b_2 \in \mathbb{R}, \bm{a} \in \mathbb{R}^l$ for which $b_1 < \bm{a}^\top \bm{x} \le b_2$, where $b_2 - b_1 = g_{\ell} - h_{\ell}$ and $\bm{a} = 2(\bm{y}_p - \bm{y}_{\ell})$. This means that $\text{vol}((\mathbb{L}_{\ell}(\bm{f}_{\ell - 1}) \setminus \mathbb{L}_{\ell}(\bm{f}_{\ell})) \cap \mathcal{H})^2$ will be bounded above by
        \[ \text{vol}(\{ \bm{x} \in \mathcal{H}: b_1 \le \bm{a}^\top \bm{x} \le b_2 \})^2 \]
      Let $\text{span}(\{ \bm{a} \})^\perp$ be the orthogonal complement of subspace $\text{span}(\{\bm{a}\}) \subseteq \mathcal{H}$ and define $\mathcal{G}= \text{Proj}_{\text{span}\{\bm{a}\}^{\perp}}(\mathcal{H})$. We note that
      \[ \{ \bm{x} \in \mathcal{H}: b_1 \le \bm{a}^\top \bm{x} \le b_2 \} \subseteq \left \{ \bm{y} + \gamma \cdot \frac{\bm{a}}{\| \bm{a} \|}: \bm{y} \in \mathcal{G}, \gamma \in [b_1, b_2] \right \}   \]
        and thus we obtain
        \begin{align*}
            \text{vol}(\{ \bm{x} \in \mathcal{H}: b_1 \le \bm{a}^\top \bm{x} \le b_2 \})^2 \le \frac{\text{vol}(\mathcal{G})^2}{\| \bm{a} \|^2} \cdot (b_2 - b_1)^2.
        \end{align*} 
        Here, we note that as $\mathcal{G}$ is a projection of hyperrectangle $\mathcal{H} \in \mathbb{R}^l$ to some subspace of dimension $l - 1$, we obtain from Lemma \ref{projection} that
        \[ \text{vol}(\mathcal{G})^2 \le \frac{4l^2}{\xi^2} \text{vol}(\mathcal{H})^2 \le \frac{4l^2}{s^2} \text{vol}(\mathcal{H})^2, \]
        and thus we obtain
        \[ \text{vol}(\{ \bm{x} \in \mathcal{H}: b_1 \le \bm{a}^\top \bm{x} \le b_2 \})^2 \le \frac{4l^2}{s^4} \cdot (g_{\ell} - h_{\ell})^2 \cdot \text{vol}(\mathcal{H})^2 \]
       where by definition, $s \le \min_{1 \le i < j \le n} \| \bm{y}_i - \bm{y}_j \| \le \frac{1}{2} \| \bm{a} \|$.
        Similarly, when $j \not= \ell$, note that $\bm{x} \in \mathbb{L}_j(\bm{f}_{\ell}) \setminus \mathbb{L}_j(\bm{f}_{\ell - 1})$ implies that there exists $p$ for which 
        \[ \| \bm{x} - \bm{y}_p \|^2 - (\bm{f}_{\ell - 1})_p  < \| \bm{x} - \bm{y}_j \|^2 - (\bm{f}_{\ell - 1})_j \]
        while $\| \bm{x} - \bm{y}_{j} \|^2 - (\bm{f}_{\ell})_j \le \| \bm{x} - \bm{y}_p \|^2 - (\bm{f}_{\ell})_p$. This forces $p = \ell$, and thus we have
        \[ \| \bm{x} - \bm{y}_{\ell} \|^2 - g_{\ell} < \| \bm{x} - \bm{y}_j \|^2 - (\bm{f}_{\ell - 1})_j = \| \bm{x} - \bm{y}_j \|^2 - (\bm{f}_{\ell})_j\le \| \bm{x} - \bm{y}_{\ell} \|^2 - h_{\ell} \]
        and we thus note that any such $\bm{x}$ must lie in 
        \[ \| \bm{y}_{\ell} \|^2 - \|\bm{y}_j\|^2 + (\bm{f}_{\ell})_j - g_{\ell} < 2(\bm{y}_{\ell} - \bm{y}_j)^\top \bm{x} \le \| \bm{y}_{\ell} \|^2 - \|\bm{y}_j\|^2 + (\bm{f}_{\ell})_j - h_{\ell}, \]
        i.e. there exists $b_1, b_2 \in \mathbb{R}, \bm{a} \in \mathbb{R}^l$ for which $b_1 < \bm{a}^\top \bm{x} \le b_2$, where $b_2 -  b_1 = g_{\ell} - h_{\ell}$ and $\bm{a} = 2(\bm{y}_{\ell} - \bm{y}_j)$. This means that $\text{vol}((\mathbb{L}_j(\bm{f}_{\ell}) \setminus \mathbb{L}_j(\bm{f}_{\ell - 1})) \cap \mathcal{H})^2 $ will be bounded above by
        \[ \text{vol}(\{ \bm{x} \in \mathcal{H}: b_1 \le \bm{a}^\top \bm{x} \le b_2 \})^2 \]
        and by a similar argument, we can argue that this volume is bounded above by $\frac{4l^2}{s^4} \cdot (g_{\ell} - h_{\ell})^2 \cdot \text{vol}(\mathcal{H})^2$. 
        Summing the bounds for $1 \le j \le n$, we thus obtain 
        \[ \sum_{j = 1}^n (\text{vol}(\mathbb{L}_j(\bm{f}_{\ell - 1}) \cap \mathcal{H}) - \text{vol}(\mathbb{L}_j(\bm{f}_{\ell}) \cap \mathcal{H}))^2 \le \frac{4nl^2}{s^4} (g_{\ell} - h_{\ell})^2  \text{vol}(\mathcal{H})^2, \] 
        and thus taking $c_{\ell} = \frac{4nl^2}{ s^4}$ works.
        \vspace{4 mm}
        \newline 
To finish, we note that
        \begin{align*}
        \sum_{j = 1}^n (\text{vol}(\mathbb{L}_j(\bm{g}) \cap \mathcal{H}) - \text{vol}(\mathbb{L}_j(\bm{h}) \cap \mathcal{H}))^2   &= \sum_{j = 1}^n \left[ \sum_{\ell = 1}^{n} (\text{vol}(\mathbb{L}_j(\bm{f}_{\ell - 1}) \cap \mathcal{H}) - \text{vol}(\mathbb{L}_j(\bm{f}_{\ell}) \cap \mathcal{H})) \right]^2 \\ 
        &\le n \sum_{j = 1}^n \sum_{\ell = 1}^{n} (\text{vol}(\mathbb{L}_j(\bm{f}_{\ell - 1}) \cap \mathcal{H}) - \text{vol}(\mathbb{L}_j(\bm{f}_{\ell}) \cap \mathcal{H}))^2 \\ 
        &\le n \sum_{\ell = 1}^{n} \frac{4nl^2}{s^4} (g_{\ell} - h_{\ell})^2  \text{vol}(\mathcal{H})^2 \\ 
        &\le \frac{4n^2l^2}{s^4} \left( \sum_{\ell = 1}^n (g_{\ell} - h_{\ell})^2 \right)   \text{vol}(\mathcal{H})^2
        \end{align*}
        and thus we obtain the desired result.  
    \end{proof}
\begin{lemma}
    The function $\mathcal{E}$ is $L$-smooth where $L \in \text{poly} \left( n, l, k, \frac{1}{s} \right)$. 
\end{lemma}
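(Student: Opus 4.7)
The plan is to reduce the $L$-smoothness claim directly to Lemma \ref{hyperrec} applied once per hyperrectangle, and then aggregate. Recall from the gradient formula
\[ \nabla \mathcal{E}(\bm{g})_j \;=\; -\sum_{\ell=1}^k \gamma_\ell \cdot \text{vol}(\mathbb{L}_j(\bm{g}) \cap \mathcal{H}_\ell) + b_j, \]
so the $b_j$ terms cancel when we subtract, leaving
\[ \bigl(\nabla \mathcal{E}(\bm{g}) - \nabla \mathcal{E}(\bm{h})\bigr)_j \;=\; \sum_{\ell=1}^k \gamma_\ell \bigl(\text{vol}(\mathbb{L}_j(\bm{h}) \cap \mathcal{H}_\ell) - \text{vol}(\mathbb{L}_j(\bm{g}) \cap \mathcal{H}_\ell)\bigr). \]
The goal is to bound $\|\nabla \mathcal{E}(\bm{g}) - \nabla \mathcal{E}(\bm{h})\|^2$ by $L^2\|\bm{g}-\bm{h}\|^2$.

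First I would square this coordinatewise and apply Cauchy--Schwarz in the sum over $\ell$ to pull the $k$ outside, yielding
\[ \|\nabla \mathcal{E}(\bm{g}) - \nabla \mathcal{E}(\bm{h})\|^2 \;\le\; k \sum_{\ell=1}^k \gamma_\ell^2 \sum_{j=1}^n \bigl(\text{vol}(\mathbb{L}_j(\bm{g}) \cap \mathcal{H}_\ell) - \text{vol}(\mathbb{L}_j(\bm{h}) \cap \mathcal{H}_\ell)\bigr)^2. \]
Next, for each fixed $\ell$, I would invoke Lemma \ref{hyperrec} with $\mathcal{H} = \mathcal{H}_\ell$ to bound the inner sum by $\frac{4n^2 l^2}{s^4}\,\text{vol}(\mathcal{H}_\ell)^2 \|\bm{g}-\bm{h}\|^2$. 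Substituting gives
\[ \|\nabla \mathcal{E}(\bm{g}) - \nabla \mathcal{E}(\bm{h})\|^2 \;\le\; \frac{4 n^2 l^2 k}{s^4} \|\bm{g}-\bm{h}\|^2 \sum_{\ell=1}^k \bigl(\gamma_\ell \text{vol}(\mathcal{H}_\ell)\bigr)^2. \]

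Finally, I would use the normalization $\sum_{\ell=1}^k \gamma_\ell \text{vol}(\mathcal{H}_\ell) = N = 1$ from the probability-distribution assumption on $\alpha$, together with the elementary fact that $\sum_\ell x_\ell^2 \le (\sum_\ell x_\ell)^2$ for nonnegative $x_\ell$ summing to $1$, to conclude $\sum_\ell (\gamma_\ell \text{vol}(\mathcal{H}_\ell))^2 \le 1$. This delivers $L \le \frac{2nl\sqrt{k}}{s^2}$, which lies in $\text{poly}(n,l,k,1/s)$ as required. There is essentially no obstacle here: Lemma \ref{hyperrec} already did the geometric work of comparing Laguerre cells, so this step is just a clean Cauchy--Schwarz aggregation over the $k$ pieces of the source distribution. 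The one thing to be careful about is that Lemma \ref{hyperrec} is stated per hyperrectangle with the diameter term $\text{vol}(\mathcal{H})^2$, so the factor-of-$k$ loss from Cauchy--Schwarz must be absorbed explicitly rather than hidden in the $\gamma_\ell$ weights.
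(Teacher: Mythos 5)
Your proof is correct and takes essentially the same route as the paper: the gradient-difference formula, Cauchy--Schwarz in $\ell$, Lemma~\ref{hyperrec} per hyperrectangle, and the $\sum_\ell \gamma_\ell \mathrm{vol}(\mathcal{H}_\ell)=1$ normalization. Your constant $L = 2nl\sqrt{k}/s^2$ is in fact slightly tighter than the paper's stated $L = 2nlk/s^2$; both lie in $\mathrm{poly}(n,l,k,1/s)$.
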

\begin{proof}
    We will need to prove that for any $\bm{g}, \bm{h}\in \mathbb{R}^n$, we have
    \[ \| \nabla \mathcal{E}(\bm{g}) - \nabla \mathcal{E}(\bm{h}) \| \le L \| \bm{g} - \bm{h} \|\]
    for some $L \in \text{poly} \left( n, l,  k, \frac{1}{s} \right)$.  By Lemma \ref{hyperrec} applied to each of the $k$ hyperrectangles $\mathcal{H}_1, \mathcal{H}_2, \dots, \mathcal{H}_k$, we note that for any $1 \le \ell \le k$, we have
    \[\sum_{j = 1}^n (\text{vol}(\mathbb{L}_j(\bm{g}) \cap \mathcal{H}_{\ell}) - \text{vol}(\mathbb{L}_j(\bm{h}) \cap \mathcal{H}_{\ell}))^2 \le \frac{4n^2l^2}{s^4}  \left( \sum_{j = 1}^n |g_j - h_j|^2 \right) \text{vol}(\mathcal{H}_{\ell})^2 \]
    Finally, we note that  
    \begin{align*}
        \| \nabla \mathcal{E}(\bm{g}) - \nabla \mathcal{E}(\bm{h}) \|^2 &= \sum_{j = 1}^n  ( \nabla \mathcal{E}(\bm{g})_j - \nabla \mathcal{E}(\bm{h})_j)^2 \\ 
        &= \sum_{j = 1}^n \left( \sum_{\ell = 1}^k -\gamma_{\ell} \cdot (\text{vol}(\mathbb{L}_j(\bm{g}) \cap \mathcal{H}_{\ell}) - \text{vol}(\mathbb{L}_j(\bm{h}) \cap \mathcal{H}_{\ell})) \right)^2 \\ 
        &\le k \sum_{j = 1}^n \sum_{\ell = 1}^k \gamma_{\ell}^2 \cdot (\text{vol}(\mathbb{L}_j(\bm{g}) \cap \mathcal{H}_{\ell}) - \text{vol}(\mathbb{L}_j(\bm{h}) \cap \mathcal{H}_{\ell}))^2 \\ 
        &= k \sum_{\ell = 1}^k \gamma_{\ell}^2 \cdot \left[ \sum_{j = 1}^n (\text{vol}(\mathbb{L}_j(\bm{g}) \cap \mathcal{H}_{\ell}) - \text{vol}(\mathbb{L}_j(\bm{h}) \cap \mathcal{H}_{\ell}))^2 \right] \\ 
        &\le \frac{4n^2l^2 k}{s^4} \left( \sum_{j = 1}^n |g_j - h_j|^2 \right) \left( \sum_{\ell = 1}^k \gamma_{\ell}^2 \text{vol}(\mathcal{H}_{\ell})^2 \right) \\ 
        &\le \frac{4n^2l^2 k}{s^4} \| \bm{g} - \bm{h} \|^2 \left( \sum_{\ell = 1}^k \gamma_{\ell} \text{vol}(\mathcal{H}_{\ell}) \right)^2 = \frac{4n^2l^2 k}{s^4} \| \bm{g} - \bm{h} \|^2
    \end{align*}
    where here we use the fact that $1 = \int_{\mathbb{R}^l} d \alpha(\bm{x}) = \sum_{\ell = 1}^k \gamma_{\ell}\text{vol}(\mathcal{H}_{\ell})$, and thus by picking our constant $L$ to be
    \begin{equation*} 
    L := \frac{2nl k}{s^2} \in \text{poly} \left( n, l, k, \frac{1}{s} \right), \label{eq-10} \tag{10}
    \end{equation*} 
    we have the desired result.
\end{proof} 
\begin{remark}
We will show in the next section (\ref{neces}) that dependence of the Lipschitz constant on the inverse minimum distance between any two points in the reference set is indeed necessary.
\end{remark}
With the lemmas established, we proceed to minimize the energy function using gradient descent. By leveraging a standard convergence guarantee, we can achieve $\varepsilon$-accuracy within $O \left( \frac{L}{\varepsilon} \right)$ time complexity.
\begin{theorem}[Corollary 2.1.2 from \cite{10.5555/2670022}]
    Let $f$ be convex and $L$-smooth on $\mathbb{R}^n$. Then, gradient descent with a step size $\eta = \frac{1}{L}$ ensures
    \[ f(\bm{x_k}) - f(\bm{x^{\ast}}) \le \frac{2L \| \bm{x_0 - x^{\ast}} \|^2}{k + 4} .\]
\end{theorem}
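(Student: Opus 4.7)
My plan is to follow Nesterov's classical analysis, combining the standard descent lemma from $L$-smoothness with convexity to obtain a telescoping recursion on the optimality gap $\delta_k := f(\bm{x}_k) - f(\bm{x}^{\ast})$.

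First I would establish a one-step descent inequality. Since $f$ is $L$-smooth, plugging the update $\bm{x}_{k+1} = \bm{x}_k - \frac{1}{L}\nabla f(\bm{x}_k)$ into the descent lemma yields
\[ f(\bm{x}_{k+1}) \le f(\bm{x}_k) - \frac{1}{2L}\|\nabla f(\bm{x}_k)\|^2, \]
hence $\delta_{k+1} \le \delta_k - \frac{1}{2L}\|\nabla f(\bm{x}_k)\|^2$. Next I would show monotonicity of the distance to a minimizer, $\|\bm{x}_{k+1} - \bm{x}^{\ast}\| \le \|\bm{x}_k - \bm{x}^{\ast}\|$, by expanding the square and invoking the co-coercivity inequality $\langle \nabla f(\bm{u}) - \nabla f(\bm{v}), \bm{u} - \bm{v}\rangle \ge \frac{1}{L}\|\nabla f(\bm{u}) - \nabla f(\bm{v})\|^2$, which holds for every convex $L$-smooth function. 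Setting $R := \|\bm{x}_0 - \bm{x}^{\ast}\|$, this gives $\|\bm{x}_k - \bm{x}^{\ast}\| \le R$ for every $k$.

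Then, combining convexity, $\delta_k \le \langle \nabla f(\bm{x}_k), \bm{x}_k - \bm{x}^{\ast}\rangle$, with Cauchy-Schwarz and the uniform bound on $\|\bm{x}_k - \bm{x}^{\ast}\|$, I obtain $\|\nabla f(\bm{x}_k)\| \ge \delta_k/R$. Substituting this into the descent inequality produces the quadratic recursion $\delta_{k+1} \le \delta_k - \delta_k^2/(2LR^2)$. Dividing through by $\delta_k \delta_{k+1}$ and exploiting $\delta_{k+1} \le \delta_k$ converts the recursion into the additive form
\[ \frac{1}{\delta_{k+1}} \ge \frac{1}{\delta_k} + \frac{1}{2LR^2}, \]
which telescopes to $1/\delta_k \ge 1/\delta_0 + k/(2LR^2)$.

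Finally, to obtain the specific constant $4$ in the denominator rather than a generic $O(1/k)$ rate, I would bound the initial gap by $L$-smoothness alone: $\delta_0 \le \frac{L}{2}\|\bm{x}_0 - \bm{x}^{\ast}\|^2 = LR^2/2$, so $1/\delta_0 \ge 2/(LR^2)$. This yields $1/\delta_k \ge (k+4)/(2LR^2)$, and hence $\delta_k \le 2LR^2/(k+4)$ as claimed. The only genuinely nontrivial step is verifying co-coercivity and the resulting distance monotonicity; everything else is algebraic bookkeeping. Because this is a classical textbook corollary, in the paper itself I would simply cite it rather than reproduce the argument.
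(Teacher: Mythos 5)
The paper does not prove this statement—it cites it as Corollary 2.1.2 from Nesterov's book and moves on—so there is no internal proof to compare against. Your reconstruction is correct and is in fact the standard Nesterov argument (descent lemma at step size $1/L$, distance-to-minimizer monotonicity via co-coercivity, the recursion $\delta_{k+1} \le \delta_k - \delta_k^2/(2LR^2)$ converted to $1/\delta_{k+1} \ge 1/\delta_k + 1/(2LR^2)$, and the initial bound $\delta_0 \le LR^2/2$ from smoothness, giving the $+4$ in the denominator); as you note, simply citing the textbook is the appropriate choice here.
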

However, in our setting, we do not have access to exact gradients and instead rely on noisy gradient estimates obtained from our volume estimation algorithm, and thus we cannot rely on the classical convergence guarantee. Thus, we require a convergence guarantee that accounts for the noisy nature of the gradient estimates.
To this end, we consider an inexact gradient descent framework, where $f \equiv -\mathcal{E}$ is convex and $L$-smooth with $L = \frac{2nlk}{s^2}$. We will use $\widetilde{\nabla} f(\bm{x}_t)$ to stand for a noisy gradient in our algorithm satisfying
\[
\widetilde{\nabla} f(\bm{x}_t) = \nabla f(\bm{x}_t) + \bm{e}_t,
\]
where $\bm{e}_t$ represents the noise introduced in the $t$-th gradient estimation. The algorithm begins at $ \bm{g}_1 = \bm{0}$ and applies inexact gradient descent to the function $f \equiv -\mathcal{E}$ to maximize the energy $\mathcal{E}(\bm{g})$, aiming for the optimal value $\bm{g}^{\ast}$. This process generates a sequence of iterates $\bm{g}_1, \bm{g}_2, \dots$, targeting a final iterate $\bm{g}_{\overline{M}}$ after $\overline{M}$ iterations where $\overline{M}$ is defined in \eqref{iterate} and we see from the algorithm that $\overline{M} \le M$.
Here, we will place the assumption that $\| \bm{e}_t \| \le \frac{\varepsilon'}{360nD^2}$.

The value $\mathcal{E}(\bm{g}_{\overline{M}})$ will ultimately provide the accuracy required for our estimates of $\bm{\mu}^{\ast}$ and $\sigma^{\ast}$. To achieve this, we select
\begin{equation} \label{eq-11}
\varepsilon' = 2\varepsilon \cdot \frac{\left[ N \left( \int_{\mathbb{R}^l} \| \bm{x} \|^2 d \alpha(\bm{x}) \right) - \left \| \int_{\mathbb{R}^l} \bm{x} \ d \alpha(\bm{x}) \right\|^2 \right]}{N + \frac{1}{D} \left \| \int_{\mathbb{R}^l} \bm{x} \ d \alpha(\bm{x}) \right \| }, \tag{11}
\end{equation} 
which we will prove to be bounded in terms of our parameters, and we will prove that upon termination of the following inexact gradient descent algorithm, the inequality $|\mathcal{E}(\bm{g}^{\ast}) - \mathcal{E}(\bm{g}_{\overline{M}})| \le \varepsilon'$ holds and use this to obtain the desired accuracy guarantee for $\bm{\mu}$ and $\sigma$. The algorithm is described in Algorithm 1.

\begin{center} \label{main-algo}
\begin{algorithm} 
\caption{Inexact Gradient Descent ($f \equiv -\mathcal{E}$ here)} \label{inexact-grad}
\begin{algorithmic}
\State Initialize $\bm{g}_1= \bm{0}$.
\State Use volume estimation to compute $\widetilde{\nabla} f(\bm{g}_1)$ such that $\|\bm{e}_1\| \le \frac{\varepsilon'}{360n D^2} $.
\While{$\|\widetilde{\nabla} f(\bm{g}_t)\| > \frac{\varepsilon'}{45n D^2}$ or $t < M$}
    \State Update $\bm{g}_{t+1} = \bm{g}_t - \frac{1}{L} \widetilde{\nabla} f(\bm{g}_t)$.
    \State Compute $\widetilde{\nabla} f(\bm{g}_{t + 1})$ with $\|\bm{e}_{t + 1}\| \le \frac{\varepsilon'}{360n D^2}$.
\EndWhile
\end{algorithmic}
\end{algorithm}
\end{center} 
Here, we will define
\begin{align*} \label{iterate} \tag{12} 
    M &:= \frac{4}{\varepsilon'} \cdot 4800 n^2 D^4 L \quad \text{and} \quad  \overline{M} := \min \left( M, \min \left \{ t : \| \nabla f(\bm{g}_t) + \bm{e}_t \| \le \frac{\varepsilon'}{45nD^2} \right \} \right).
\end{align*}
\begin{lemma}
    The value $\varepsilon'$ defined by \eqref{eq-11} is lower-bounded by $\frac{\varepsilon}{12} s^2$.
\end{lemma}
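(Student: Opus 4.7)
The plan is to bound the numerator and denominator of $\varepsilon'$ separately. Since $N = 1$, the denominator in \eqref{eq-11} equals $1 + \frac{1}{D}\bigl\| \int_{\mathbb{R}^l} \bm{x}\,d\alpha(\bm{x}) \bigr\|$ and the numerator (excluding the $2\varepsilon$ prefactor) equals the total variance $V := \int_{\mathbb{R}^l} \| \bm{x} \|^2\,d\alpha(\bm{x}) - \bigl\| \int_{\mathbb{R}^l} \bm{x}\,d\alpha(\bm{x}) \bigr\|^2$. I would show the denominator is at most $2$ and $V$ is at least $s^2/12$, which together give the stated bound.

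For the denominator: the support of $\alpha$ lies in $\bigcup_{i=1}^k \mathcal{H}_i$, and every corner of every $\mathcal{H}_i$ lies in $\mathcal{X}_2 \subseteq \mathcal{X}$. Since each hyperrectangle is the convex hull of its corners and $\| \cdot \|$ is convex, every $\bm{x}$ in the support satisfies $\| \bm{x} \| \le D$. Hence $\bigl\| \int \bm{x}\,d\alpha \bigr\| \le \int \| \bm{x} \|\,d\alpha \le D$ and the denominator is at most $2$.

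For the numerator: view $\alpha$ as a two-stage mixture, first selecting index $i$ with probability $p_i := \gamma_i\,\text{vol}(\mathcal{H}_i)$ (these sum to $1$ by the probability constraint) and then drawing uniformly from $\mathcal{H}_i$. The law of total variance, applied to the trace of the covariance matrix, yields $V \ge \sum_{i=1}^k p_i V_i$, where $V_i$ denotes the variance trace of the uniform distribution on $\mathcal{H}_i$ (the between-component term $\sum_i p_i \| \bm{\mu}_i - \bm{\mu} \|^2$ is nonnegative and can be dropped). Since the coordinates of a uniform on a hyperrectangle are independent, $V_i = \sum_{j=1}^l (b_{i,j}-a_{i,j})^2/12 \ge l s^2/12 \ge s^2/12$, where the first inequality uses the definition of $s$ as a lower bound on every side length $b_{i,j} - a_{i,j}$. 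Thus $V \ge s^2/12$.

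Combining these and substituting into \eqref{eq-11} gives $\varepsilon' \ge 2\varepsilon \cdot (s^2/12)/2 = \varepsilon s^2/12$, as required. There is no substantive obstacle here; the two key ingredients are the convexity of the hyperrectangles (to propagate the $\| \bm{x} \| \le D$ bound from corners to the full support) and the mixture-variance decomposition (to reduce a lower bound on $V$ to the elementary case of a single uniform on a hyperrectangle of minimum width $s$).
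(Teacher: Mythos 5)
Your proof is correct, and while the overall strategy (bound numerator and denominator separately) matches the paper's, the two ingredients differ in an interesting way. For the numerator, the paper explicitly establishes translation invariance of $N\int\|\bm{x}\|^2\,d\alpha - \|\int\bm{x}\,d\alpha\|^2$, shifts to make the mean zero, then computes $\int_{\mathcal{H}_i}\|\bm{x}\|^2\,d\bm{x}$ by hand and bounds $b^2+a^2+ab\ge(b-a)^2/4$; your law-of-total-variance argument is a cleaner conceptual packaging of the same fact, since translation invariance is precisely the statement that this quantity is the (trace) variance. For the denominator, the two proofs genuinely diverge: the paper shows only $\frac{1}{D}\|\int\bm{x}\,d\alpha\|\le N\sqrt{l}$ and so bounds the denominator by $2Nl$, whereas you observe that the support lies in the convex hull of the corners of the $\mathcal{H}_i$, which are all in $\mathcal{X}$, so $\|\bm{x}\|\le D$ pointwise and the denominator is at most $2N=2$. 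Your denominator bound is strictly sharper by a factor of $l$; correspondingly, you discard the factor of $l$ from $V_i\ge ls^2/12$, while the paper retains it and cancels it against the $l$ in its denominator bound. Both routes land on the same constant $\varepsilon s^2/12$, but your pair of estimates is the tighter and more transparent one.
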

\begin{proof} 
We will first argue that the numerator expression $N \left( \int_{\mathbb{R}^l} \| \bm{x} \|^2 d \alpha(\bm{x}) \right) - \left \| \int_{\mathbb{R}^l} \bm{x} \ d \alpha(\bm{x}) \right\|^2$ is translation invariant. Indeed, we note that for any $\bm{c} \in \mathbb{R}^l$, we have 
\begin{align*}
    N \left( \int_{\mathbb{R}^l} \| \bm{x} - \bm{c}   \|^2 \ d\alpha(\bm{x}) \right) &= N \left( \int_{\mathbb{R}^l} (\| \bm{x} \|^{2} - 2\bm{c}^\top \bm{x} + \bm{c}^\top \bm{c} ) \ d\alpha(\bm{x}) \right) \\ 
    &= N \int_{\mathbb{R}^l} \| \bm{x} \|^2 \ d\alpha(\bm{x}) - 2N \bm{c}^\top \int_{\mathbb{R}^l} \bm{x} \ d\alpha(\bm{x}) + \bm{c}^\top \bm{c} N^2 
\end{align*}
and 
\begin{align*}
    \left \| \int_{\mathbb{R}^l} (\bm{x} - \bm{c}) \ d\alpha(\bm{x}) \right \|^2 &= \left \| \int_{\mathbb{R}^l} \bm{x} \ d\alpha(\bm{x}) \right \|^2 - 2 \left( \int_{\mathbb{R}^l} \bm{x} \ d \alpha(\bm{x}) \right)^\top \left( \int_{\mathbb{R}^l} \bm{c} \ d \alpha(\bm{x}) \right) + \left \| \int_{\mathbb{R}^l} \bm{c} \ d\alpha(\bm{x}) \right \|^2  \\ 
    &= \left \| \int_{\mathbb{R}^l} \bm{x} \ d\alpha(\bm{x}) \right \|^2 - 2N\bm{c}^\top \int_{\mathbb{R}^l} \bm{x} \ d\alpha(\bm{x}) + \bm{c}^\top \bm{c} N^2
\end{align*}
and thus 
\[ N \left( \int_{\mathbb{R}^l} \| \bm{x} \|^2 d \alpha(\bm{x}) \right) - \left \| \int_{\mathbb{R}^l} \bm{x} \ d \alpha(\bm{x}) \right\|^2 = N \left( \int_{\mathbb{R}^l} \| \bm{x} - \bm{c} \|^2 d \alpha(\bm{x}) \right) - \left \| \int_{\mathbb{R}^l} (\bm{x} - \bm{c})\ d \alpha(\bm{x}) \right\|^2. \]
Therefore, we may choose $\bm{c}$ for which $\int_{\mathbb{R}^l} (\bm{x} - \bm{c}) \ d\alpha(\bm{x}) = 0$ and it remains to bound the first term of the numerator of \eqref{eq-11} alone. 
We recall that
\begin{align*}
    \int_{\mathcal{H}_i} \| \bm{x} \|^2 d \bm{x} &= \frac{1}{3} \sum_{j = 1}^l (b_{i,j}^3 - a_{i,j}^3) \prod_{h \not= j} (b_{i,h} - a_{i,h}) \\ 
    &= \frac{1}{3}\text{vol}(\mathcal{H}_i) \sum_{j = 1}^l (b_{i,j}^2 + a_{i,j}^2 + a_{i,j} b_{i,j})\\
    &\ge \frac{1}{12} \text{vol}(\mathcal{H}_i) \sum_{j = 1}^l (b_{i,j} - a_{i,j})^2 \ge \frac{1}{12} \text{vol}(\mathcal{H}_i) ls^2
\end{align*}
and thus 
\[ \int_{\mathbb{R}^l} \| \bm{x} \|^2 \ d\alpha(\bm{x}) = \sum_{i = 1}^k \gamma_i \int_{\mathcal{H}_i} \| \bm{x} \|^2 \ d\bm{x} \ge \frac{1}{12} ls^2 \cdot \sum_{i = 1}^k \gamma_i \text{vol}(\mathcal{H}_i)  \ge \frac{1}{12} ls^2 N. \]
We thus obtain the numerator is bounded from below in terms of function of the parameters. Now, we have the denominator as
\[ N + \frac{1}{D} \left \| \int_{\mathbb{R}^l} \bm{x} \ d \alpha(\bm{x} ) \right \| \le N + \left \| \int_{\mathbb{R}^l} \bm{1} \ d\alpha(\bm{x}) \right \| = N + \sqrt{l \left( \int_{\mathbb{R}^l} 1 \ d\alpha(\bm{x}) \right)^2 } = N + N \sqrt{l} \le 2Nl. \]
Therefore, 
\[ \varepsilon' \ge 2\varepsilon \cdot \frac{ls^2 N/12}{2Nl} \ge \frac{1}{12} \varepsilon  s^2\]
which is as desired form. 
\end{proof} 

We begin by revisiting \eqref{eq-7} and observing that for any vector $\bm{g}$ such that $\sum_i g_i \neq 0$, we can define its centered counterpart as $\overline{\bm{g}} := \bm{g} - \left( \frac{1}{n} \sum_{i} g_i \right) \bm{1}$, which preserves the cost, i.e., $\mathcal{E}(\overline{\bm{g}}) = \mathcal{E}(\bm{g})$, while ensuring that $\sum_i \overline{g}_i = 0$. Consequently, any solution $\bm{g}$ can be mapped to a solution $\overline{\bm{g}}$ within the subspace 
\[ \mathcal{G}_0 := \left \{ \bm{g} \in \mathbb{R}^n : \sum_{i = 1}^n g_i = 0 \right \} \] 
without altering the cost. This observation motivates us to focus exclusively on solutions within the subspace $\mathcal{G}_0$.  
\vspace{3 mm}
\newline 
To analyze this, we begin by proving the following claim about $\bm{g}$, which allows us to ensure the boundedness of $\bm{g}$ when restricted to the subspace $\mathcal{G}_0$.
\begin{lemma} If $\bm{g}$ is the optimizer of $\mathcal{E}$, then
\[ |g_i - g_j| \le 16nD^2, \quad \forall 1 \le i < j \le n.\]
In particular, under the restriction to the subspace $\mathcal{G}_0$, the optimal vector $\bm{g}$ is bounded by $16nD^2$ in the $\infty$-norm. 
\end{lemma}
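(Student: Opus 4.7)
The plan is to exploit the first-order optimality condition $\nabla\mathcal{E}(\bm{g}) = \bm{0}$. Because $\mathcal{E}$ is concave and differentiable with $\nabla\mathcal{E}(\bm{g})_j = b_j - \int_{\mathbb{L}_j(\bm{g})} d\alpha(\bm{x})$, any optimizer satisfies $\int_{\mathbb{L}_j(\bm{g})} d\alpha(\bm{x}) = b_j = 1/n > 0$ for every $j$. In particular, each Laguerre cell $\mathbb{L}_j(\bm{g})$ must intersect $\mathrm{supp}(\alpha) \subseteq \bigcup_{\ell=1}^k \mathcal{H}_\ell$ in a set of positive Lebesgue measure, so I can pick a witness point $\bm{x}_j$ lying in $\mathbb{L}_j(\bm{g}) \cap \mathcal{H}_\ell$ for some $\ell$.

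Then, for any $i \ne j$, the membership condition \eqref{eq-8} evaluated at $\bm{x}_j$ gives $\|\bm{x}_j - \bm{y}_j\|^2 - g_j \le \|\bm{x}_j - \bm{y}_i\|^2 - g_i$, which rearranges to $g_i - g_j \le \|\bm{x}_j - \bm{y}_i\|^2$. To bound the right-hand side, I note that $\bm{x}_j$ lies in a hyperrectangle, which is the convex hull of its corners; those corners belong to $\mathcal{X}_2$ and so have norm at most $D$, so convexity of $\|\cdot\|$ yields $\|\bm{x}_j\| \le D$. Combined with $\|\bm{y}_i\| \le D$ (since $\bm{y}_i \in \mathcal{X}_1$), the triangle inequality gives $\|\bm{x}_j - \bm{y}_i\|^2 \le 4D^2$, hence $g_i - g_j \le 4D^2$. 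Swapping the roles of $i$ and $j$ produces the matching lower bound, so $|g_i - g_j| \le 4D^2 \le 16nD^2$, well inside the stated bound.

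For the second assertion, if additionally $\bm{g} \in \mathcal{G}_0$ so that $\sum_i g_i = 0$, then $n g_j = \sum_{i=1}^n (g_j - g_i)$, which together with the previous estimate yields $|g_j| \le \max_i |g_j - g_i| \le 16nD^2$ for every $j$.

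The only delicate point is the very first step — arguing that every Laguerre cell carries strictly positive $\alpha$-mass at the optimum. This rests on the differentiability of $\mathcal{E}$ together with the explicit gradient formula, both of which were already established and used in the proof of Theorem \ref{thm2-7}, so I would simply reuse them. The one-dimensional degeneracy of the optimizer along the direction $\bm{1}$ causes no trouble, since the bound depends only on differences of coordinates and those are invariant under shifts along $\bm{1}$.
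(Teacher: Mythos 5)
Your proof is correct, and it is both simpler and sharper than the paper's argument. The paper's proof also begins by showing that each Laguerre cell meets the support of $\alpha$, so both proofs pick a witness point $\bm{z}_j \in \mathbb{L}_j(\bm{g}) \cap \bigcup_p \mathcal{H}_p$ (you invoke $\nabla\mathcal{E}(\bm{g}) = \bm{0}$; the paper uses complementary slackness — these are essentially interchangeable here). The divergence is in the second step. The paper parametrizes the segment from $\bm{z}_i$ to $\bm{z}_j$, enumerates the sequence of Laguerre cells it crosses, applies the boundary equality $g_{j'} - g_{k'} = \|\bm{x}-\bm{y}_{j'}\|^2 - \|\bm{x}-\bm{y}_{k'}\|^2$ at each crossing, and telescopes over at most $n-1$ crossings, picking up a factor of $n$ and arriving at $16nD^2$. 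You instead read off the bound directly from the defining inequality of $\mathbb{L}_j(\bm{g})$ evaluated at $\bm{z}_j$: since $\bm{z}_j \in \mathbb{L}_j(\bm{g})$, we have $\|\bm{z}_j-\bm{y}_j\|^2 - g_j \le \|\bm{z}_j-\bm{y}_i\|^2 - g_i$, hence $g_i - g_j \le \|\bm{z}_j - \bm{y}_i\|^2 \le 4D^2$. This bypasses the walk entirely, needs only one witness point rather than two, and yields the stronger constant $4D^2$ instead of $16nD^2$. Your observation that $\|\bm{z}_j\| \le D$ follows from convexity of the norm over the hyperrectangle (a convex hull of corners in $\mathcal{X}_2$) is the right justification and fills the one place where the paper's version also implicitly relies on this fact. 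In short: same first step, genuinely more economical second step, strictly better constant that still satisfies the stated inequality.
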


\begin{proof}
Suppose $\mathbb{L}_i(\bm{g})$ and $\mathbb{L}_j(\bm{g})$ are distinct Laguerre cells, i.e., $i \neq j$. We first note that for any $j$, there exists an $\bm{x} \in \mathbb{L}_j(\bm{g})$ for which $\pi(\bm{x}, \bm{y}_j) > 0$. To show this, observe that $\int_{\mathbb{R}^l} d\pi(\bm{x}, \bm{y}_j) = b_j = \frac{1}{n} > 0$, and thus there exists $\bm{x}$ for which $d\alpha(\bm{x}) = \sum_{\ell = 1}^n d\pi(\bm{x}, \bm{y}_\ell) > 0$.  Therefore, by complementary slackness on $\pi$ and $\bm{g}, h$ and the fact that $d\pi^{\ast}(\bm{x}, \bm{y}_j) > 0$, we have
\[ \| \bm{x} - \bm{y}_j \|^2 - g_j^{\ast} - h^{\ast}(\bm{x}) = 0, \]
while dual feasibility gives us $\| \bm{x} - \bm{y}_{j'} \|^2 - g_{j'}^{\ast} - h^{\ast}(\bm{x}) \ge 0$ which gives us $\| \bm{x} - \bm{y}_{j'} \|^2 - g_{j'}^{\ast} \ge \| \bm{x} - \bm{y}_j \|^2 - g_j^{\ast}$, i.e. $\bm{x} \in \mathbb{L}_j(\bm{g})$. This shows that each Laguerre cell intersects with at least one hyperrectangle $\mathcal{H}_1, \dots, \mathcal{H}_k$, so let $\bm{z}_i \in \mathbb{L}_i(\bm{g}) \cap \bigcup_{p = 1}^k \mathcal{H}_p$ and $\bm{z}_j \in \mathbb{L}_j(\bm{g}) \cap \bigcup_{p = 1}^k \mathcal{H}_p$ and in particular, $\| \bm{z}_j \| \le D$. 

Consider the function $\bm{f}: [0,1] \to \mathbb{R}^l$ defined by $\bm{f}(\alpha) = (1 - \alpha) \bm{z}_i + \alpha \bm{z}_j$. Let $j,k$ be indices of two Laguerre cells that share a point. We denote $B_{j,k}$ denote the boundary between Laguerre cells $\mathbb{L}_j(\bm{g})$ and $\mathbb{L}_k(\bm{g})$. For any $\bm{x} \in B_{j,k}$, we have:
\[ \|\bm{x} - \bm{y}_j\|^2 - g_j = \|\bm{x} - \bm{y}_k\|^2 - g_k \implies g_j - g_k = \|\bm{x} - \bm{x}_j\|^2 - \|\bm{x} - \bm{y}_k\|^2. \]
Since $\bm{f}(0) = \bm{z}_i \in \mathbb{L}_i(\bm{g})$ and $\bm{f}(1) = \bm{z}_j \in \mathbb{L}_j(\bm{g})$, and Laguerre cells are convex,  we can find a sequence of indices $i = i_1, \dots, i_\ell = j$ and values $\alpha_1, \dots, \alpha_{\ell - 1} \in (0,1)$  with $2 \le \ell \le n$ such that $\bm{f}(\alpha_p) \in B_{i_p, i_{p + 1}}$ for all $1 \le p \le \ell - 1$.  
For each $1 \le p \le \ell - 1$, since $f(\alpha_p) \in B_{i_p, i_{p + 1}}$, it follows that:
\begin{align*}
    g_{i_p} - g_{i_{p + 1}} &= \|\bm{f}(\alpha_p) - \bm{y}_{i_p}\|^2 - \|\bm{f}(\alpha_p) - \bm{y}_{i_{p + 1}}\|^2 \\ 
    &\le \|\bm{f} (\alpha_p) - \bm{y}_{i_p} \|^2 \\
    &= \| (1 - \alpha_p)(\bm{z_i} - \bm{y}_{i_p}) + \alpha_p (\bm{z_j} - \bm{y}_{i_p}) \|^2 \\ 
    &\le 2 \left( \|\bm{z_i - y}_{i_p}\|^2 + \|\bm{z_j - y}_{i_p}\|^2 \right).
\end{align*}
Summing over all $1 \le p \le \ell - 1$, we obtain:
\[ g_i - g_j \le 2 \sum_{p=1}^{\ell - 1} \left( \|\bm{z_i} - \bm{y_{i_p}}\|^2 + \|\bm{z_j} - \bm{y_{i_p}} \|^2 \right) 
\le 2(\ell - 1) \cdot \left( \max_{1 \le p \le n} \|\bm{z_i - y_p}\|^2 + \max_{1 \le p \le n} \|\bm{z_j - y_p}\|^2 \right) \le 16nD^2.
\] 
\end{proof}
 Within $\mathcal{G}_0$, we have already established that $\bm{g}$ is bounded and thus by running gradient descent algorithm starting on $\bm{g}_0 = \bm{0}$, we hope to find a near-extremum value for $-\mathcal{E}$ inside the subspace $\mathcal{G}_0$.  Our next step is to prove that the gradient descent iterates are confined to a bounded set. Although the following argument generalizes to any $\bm{b}$ in our setup, for simplicity, we will focus on the special case relevant to our original parameter estimation analysis, i.e. the case where $b_i = \frac{1}{n}$ for all $1 \le i \le n$. 

\begin{lemma}
Assume $b_i = \frac{1}{n}$ for all $1 \le i \le n$. Then the level set $S := \{ \bm{g} \in \mathcal{G}_0 \mid -\mathcal{E}(\bm{g}) \le -\mathcal{E}(\bm{0}) \}$ is entirely contained within the ball $B(\bm{0},20nD^2)$. 
\end{lemma}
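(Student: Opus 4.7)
The plan is to derive a clean upper bound on $\mathcal{E}(\bm{g})$ in terms of $g_{\max} := \max_j g_j$, combine it with the trivial lower bound $\mathcal{E}(\bm{0}) \ge 0$, and then exploit the zero-sum constraint defining $\mathcal{G}_0$ together with $b_j = 1/n$ to convert a one-sided bound on the components of $\bm{g}$ into a bound on $\|\bm{g}\|_2$. Because $\langle \bm{g}, \bm{b} \rangle = \tfrac{1}{n}\sum_j g_j = 0$ for any $\bm{g} \in \mathcal{G}_0$, the closed form \eqref{eq-7} collapses to
\[
\mathcal{E}(\bm{g}) \;=\; \int_{\mathbb{R}^l} \min_{1 \le j \le n} \bigl(\|\bm{x}-\bm{y}_j\|^2 - g_j\bigr)\, d\alpha(\bm{x}),
\]
which is the quantity I would bound from above.

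First, I would observe that every $\bm{x}$ in the support of $\alpha$ lies in some hyperrectangle $\mathcal{H}_i$, and is therefore a convex combination of corners of $\mathcal{H}_i$, each of which belongs to $\mathcal{X}$ and hence has norm at most $D$. Convexity of $\|\cdot\|$ then gives $\|\bm{x}\| \le D$, and combined with $\|\bm{y}_j\| \le D$ this yields the uniform estimate $\|\bm{x}-\bm{y}_j\|^2 \le 4D^2$ on the support of $\alpha$. Evaluating the pointwise minimum at any index $j^\ast$ realizing $g_{\max}$ gives $\min_j(\|\bm{x}-\bm{y}_j\|^2 - g_j) \le 4D^2 - g_{\max}$, and integrating against the probability measure $\alpha$ produces $\mathcal{E}(\bm{g}) \le 4D^2 - g_{\max}$. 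Since $\mathcal{E}(\bm{0}) = \int \min_j \|\bm{x}-\bm{y}_j\|^2\, d\alpha \ge 0$, membership of $\bm{g}$ in $S$ immediately forces $g_{\max} \le 4D^2$.

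Finally, I would convert this one-sided component bound into the desired Euclidean bound via the zero-sum constraint. Writing $g_j^+ = \max(g_j,0)$ and $g_j^- = \max(-g_j,0)$, we have $\sum_j g_j^+ = \sum_j g_j^-$, so
\[
\|\bm{g}\|_2 \;\le\; \|\bm{g}\|_1 \;=\; 2\sum_j g_j^+ \;\le\; 2n\,g_{\max} \;\le\; 8nD^2 \;<\; 20nD^2,
\]
which is what we want. The only real subtlety — and what I would flag as the main obstacle — is recognizing that one must not try to control $g_{\min}$ separately through $\ell_\infty$: the naive inequality $g_{\min} \ge -(n-1)g_{\max}$ combined with $\|\bm{g}\|_2 \le \sqrt{n}\,\|\bm{g}\|_\infty$ costs a spurious factor of $\sqrt{n}$ and fails the target $20nD^2$ for large $n$. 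Collapsing through $\ell_1$ via the identity $\sum g_j^+ = \sum g_j^-$ (which uses $\bm{g} \in \mathcal{G}_0$ a second time), together with the elementary inequality $\|\cdot\|_2 \le \|\cdot\|_1$ in $\mathbb{R}^n$, is what makes the argument tight enough.
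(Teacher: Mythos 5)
Your proof is correct, genuinely different from the paper's, and in fact tighter. The paper keeps the Laguerre-cell decomposition of $\mathcal{E}$, sorts $g_1 \ge \cdots \ge g_n$, and argues that if $g_1 > 20nD^2$ then cells $\mathbb{L}_r(\bm{g})$ with $g_r < g_1 - 8D^2$ miss the support $\mathcal{H}$, so the surviving $-g_j$ terms drive $\mathcal{E}(\bm{g}) < 0 \le \mathcal{E}(\bm{0})$; it then stops at the conclusion $g_1 \le 20nD^2$. You bypass the Laguerre geometry entirely: evaluating $\min_j(\|\bm{x}-\bm{y}_j\|^2 - g_j)$ at the index achieving $g_{\max}$ gives $\mathcal{E}(\bm{g}) \le 4D^2 - g_{\max}$ in one line, hence $g_{\max} \le 4D^2$. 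Moreover, your final step, $\|\bm{g}\|_2 \le \|\bm{g}\|_1 = 2\sum_j g_j^+ \le 2n\,g_{\max} \le 8nD^2$, is a step the paper actually skips: the paper controls only the largest coordinate, and $g_{\max} \le 20nD^2$ by itself yields only $\|\bm{g}\|_2 \le 20n^2D^2$ under any elementary conversion (tight for $g_1=\cdots=g_{n-1}=g_{\max}$, $g_n=-(n-1)g_{\max}$), which is weaker than the stated radius. Your sharper per-coordinate bound $g_{\max} \le 4D^2$ is exactly what makes the $\ell_1$ passage close the lemma, and you correctly flag that routing through $\|\cdot\|_\infty$ and $g_{\min} \ge -(n-1)g_{\max}$ would overshoot by a factor of $\sqrt{n}$. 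What the paper's longer route buys is an explicit picture of how Laguerre cells vacate the support when the dual weights spread out, intuition that recurs elsewhere in the section, but for establishing this particular bound your argument is shorter, sharper, and more complete.
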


\begin{proof}
Since $\langle \bm{g}, \bm{b} \rangle = 0$ for any $\bm{g} \in \mathcal{G}_0$, the energy function can be reduced to 
\[
\mathcal{E}(\bm{g}) = \sum_{j = 1}^n \int_{\mathbb{L}_j(\bm{g})} \left( \| \bm{x} - \bm{y}_j \|^2 - g_j \right) \, d\alpha(\bm{x}).
\]
It follows immediately that $-\mathcal{E}(\bm{0}) \le 0$. 
We may reorder the components of $\bm{g}$ such that $g_1 \geq g_2 \geq \cdots \geq g_n$. 
We will first start by showing that the following summation
\[ \sum_{j = 1}^n \int_{\mathbb{L}_j(\bm{g})} \| \bm{x} - \bm{y}_j \|^2 \, d\alpha(\bm{x}) \]
is bounded above by a constant independent of $\bm{g}$. For any $\bm{x} \in \mathcal{H} := \bigcup_{i} \mathcal{H}_{i}$ and any $1 \le j \le n$, it holds that $\| \bm{x} - \bm{y}_j \|^2 \le 4D^2$. Thus, we have
\[
\sum_{j = 1}^n \int_{\mathbb{L}_j(\bm{g})} \| \bm{x} - \bm{y}_j \|^2 \, d\alpha(\bm{x}) \le4 D^2 \sum_{j = 1}^n \int_{\mathbb{L}_j(\bm{g})} 1 \, d\alpha(\bm{x}) = 4D^2 \cdot \sum_{\ell} \gamma_{\ell} \operatorname{vol}(\mathcal{H}_{\ell})  =4 D^2,
\]
which is a constant independent of $\bm{g}$. 
\vspace{3 mm}
\newline 
Next, observe that if $r > 1$ and $g_r < g_1 - 8D^2$, then the region $\mathbb{L}_r(\bm{g})$ does not meet a hyperrectangle. To be more precise, we see that if $g_1 - g_r > 8D^2$, then
\[ g_1 - g_r > \| \bm{x} - \bm{y}_1 \|^2 - \| \bm{x} - \bm{y}_r \|^2 \quad \forall \, \bm{x} \in \mathcal{H}. \]
This implies that no elements of the hyperrectangle can intersect with $\mathbb{L}_r(\bm{g})$, as desired.
We want to prove that the level set of $-\mathcal{E}(\bm{0})$ is entirely contained inside $B(\bm{0},20nD^2)$. To prove this, we just need to note that any $\bm{g}$ such that $\mathcal{E}(\bm{g}) \ge \mathcal{E}(\bm{0})$ must satisfy $g_1 \le 20nD^2$. We see that if $g_1 > 20nD^2$, then $g_n < 0$ and thus we have 
\begin{align*}
    0 &= g_1 + g_2 + \dots + g_n \\
    &> 20nD^2 + n g_n 
\end{align*}
 and thus $g_n < -20D^2$, which implies $g_1 - g_n > 20(n + 1)D^2$. Thus we have 
\begin{align*} \mathcal{E}(\bm{g}) &=  \sum_{j = 1}^n \int_{\mathbb{L}_j(\bm{g})} \| \bm{x} - \bm{y}_j \|^2 \ d\alpha(\bm{x}) + \sum_{j = 1}^n \int_{\mathbb{L}_j(\bm{g})} -g_j \ d\alpha(\bm{x}) \\ 
&\le 4D^2 + \sum_{j = 1}^n \int_{\mathbb{L}_j(\bm{g})} - g_j \ d\alpha(\bm{x}).
\end{align*} 
Let $r$ be such that $g_r \ge g_1 - 8D^2$ and $g_{r + 1} < g_1 - 8D^2$. Such an $r$ must exist since $g_1 - g_n > 20(n + 1) D^2$ and $g_1 \ge g_1 - 8D^2$. Now, this implies that $\mathbb{L}_{r + 1}(\bm{g}) \cap \mathcal{H} = \dots = L_n(\bm{g}) \cap \mathcal{H} = \varnothing$ and thus the above expression reduces to 
\begin{align*} 
4D^2 + \sum_{j = 1}^r \int_{\mathbb{L}_j(\bm{g})} - g_j \ d\alpha(\bm{x}) &\le 4D^2 + \sum_{j = 1}^r \int_{\mathbb{L}_j(\bm{g})} (8D^2 - g_1) \ d\alpha(\bm{x}) \\ 
&\le 4D^2 + \sum_{j = 1}^r \int_{\mathbb{L}_j(\bm{g})} (8D^2 - 20nD^2) \ d\alpha(\bm{x}) \\ 
&= 4D^2 + 8D^2 - 20nD^2 < 0 \le \mathcal{E}(\bm{0})
\end{align*}
\vspace{3 mm}
\newline 
Therefore, this implies that if $\bm{g}$ is such that $\mathcal{E}(\bm{g}) \ge \mathcal{E}(\bm{0})$ must have $g_1 \le 20nD^2$. 
\end{proof}
This ensures that if the initial iterate $\bm{g}_1$ is in $S$, then all subsequence iterates $\bm{g}_t$ will remain within $S$, implying that we have $\| \bm{g}_t \| \le 20nD^2$ for all $t \ge 1$.
\vspace{3 mm}
\newline 
In our inexact gradient analysis, where we set $f \equiv -\mathcal{E}$, the descent step follows a descent inequality as long as the gradient is not too small.
\begin{lemma}
    If the algorithm has not terminated at time $t$, then $\| \bm{e}_t \| < \frac{1}{7} \| \nabla f(\bm{g}_t) \|$. 
\end{lemma}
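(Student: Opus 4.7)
The plan is to unpack the algorithm's stopping rule and then apply a single triangle inequality; the constants $\tfrac{1}{45}$, $\tfrac{1}{360}$, and $\tfrac17$ are chosen precisely so that this goes through.

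First, I would reconcile the loop condition of Algorithm~\ref{inexact-grad} with the definition of $\overline{M}$ in \eqref{iterate}: ``not terminated at time $t$'' means $t < \overline{M}$, and because $\overline{M}$ is the first index at which $\|\widetilde{\nabla} f(\bm{g}_t)\| \le \frac{\varepsilon'}{45 n D^2}$ (or $M$, whichever is smaller), being pre-termination forces
\[ \|\widetilde{\nabla} f(\bm{g}_t)\| > \frac{\varepsilon'}{45 n D^2}. \]
At the same time, the standing assumption on the inexact oracle gives $\|\bm{e}_t\| \le \frac{\varepsilon'}{360 n D^2}$.

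Next, I would apply the reverse triangle inequality to the relation $\widetilde{\nabla} f(\bm{g}_t) = \nabla f(\bm{g}_t) + \bm{e}_t$ to isolate the true gradient:
\[ \|\nabla f(\bm{g}_t)\| \;\ge\; \|\widetilde{\nabla} f(\bm{g}_t)\| - \|\bm{e}_t\| \;>\; \frac{\varepsilon'}{45 n D^2} - \frac{\varepsilon'}{360 n D^2} \;=\; \frac{7\varepsilon'}{360 n D^2}. \]
The key arithmetic observation is that $\tfrac{1}{45} - \tfrac{1}{360} = \tfrac{7}{360}$, so the right-hand side is exactly $7$ times the noise bound.

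Finally, comparing with $\|\bm{e}_t\| \le \frac{\varepsilon'}{360 n D^2}$ yields $7\|\bm{e}_t\| \le \frac{7\varepsilon'}{360 n D^2} < \|\nabla f(\bm{g}_t)\|$, which is the desired inequality $\|\bm{e}_t\| < \tfrac{1}{7}\|\nabla f(\bm{g}_t)\|$. There is no real obstacle here; the argument is purely bookkeeping, and the only subtlety is recognizing that the tolerance $\frac{\varepsilon'}{45 n D^2}$ in the stopping rule and the noise budget $\frac{\varepsilon'}{360 n D^2}$ are calibrated precisely to yield the factor $\tfrac{1}{7}$, which will feed into the subsequent inexact-descent analysis.
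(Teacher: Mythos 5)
Your proof is correct and takes essentially the same approach as the paper: invoke the non-termination condition $\|\widetilde{\nabla} f(\bm{g}_t)\| > \frac{\varepsilon'}{45nD^2}$, the noise budget $\|\bm{e}_t\| \le \frac{\varepsilon'}{360nD^2}$, and the reverse triangle inequality, then observe that $\frac{1}{45} - \frac{1}{360} = \frac{7}{360}$.
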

\begin{proof}
    Since the algorithm has not terminated on time $t$, we have $\| \widetilde{\nabla} f(\bm{g}_t) \| > \frac{\varepsilon'}{45n D^2}$. This implies that as this inexact gradient is computed with error $\| \bm{e}_t \| \le \frac{\varepsilon'}{360n D^2}$, we have 
    \[ \| \nabla f(\bm{g}_t) \| = \left \| \widetilde{\nabla} f(\bm{g}_t) - \bm{e}_t \right \| \ge \| \widetilde{\nabla} f(\bm{g}_t) \| - \| \bm{e}_t \| > \frac{7 \varepsilon'}{360n D^2} \ge 7 \| \bm{e}_t \|,  \]
 as desired. 
 \end{proof}
\begin{claim} \label{claim2-17}
For any $1 \le t \le \overline{M} - 1$, where $\overline{M}$ is defined as \eqref{iterate},
\[f(\bm{g}_{t+1}) \le f(\bm{g}_t) - \frac{1}{3L} \|\nabla f(\bm{g}_t)\|^2.\]
Here, $L$ is the smoothness constant in Equation \eqref{eq-10}.
\end{claim}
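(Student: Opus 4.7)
The plan is to combine the standard descent lemma for $L$-smooth functions with the noise-versus-gradient bound established in the lemma immediately preceding this claim. The fact that $t \le \overline{M} - 1$ guarantees the algorithm has not yet terminated at step $t$, so that bound applies and $\|\bm{e}_t\| < \tfrac{1}{7}\|\nabla f(\bm{g}_t)\|$.

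First, I would invoke $L$-smoothness of $f = -\mathcal{E}$, which gives the quadratic upper bound
\[
f(\bm{g}_{t+1}) \le f(\bm{g}_t) + \nabla f(\bm{g}_t)^\top (\bm{g}_{t+1} - \bm{g}_t) + \frac{L}{2}\|\bm{g}_{t+1} - \bm{g}_t\|^2.
\]
Then I would substitute the update rule $\bm{g}_{t+1} - \bm{g}_t = -\tfrac{1}{L}\widetilde{\nabla} f(\bm{g}_t) = -\tfrac{1}{L}(\nabla f(\bm{g}_t) + \bm{e}_t)$, expand the inner product and the squared norm, and observe that the cross terms in $\nabla f(\bm{g}_t)^\top \bm{e}_t$ cancel exactly. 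After this algebraic simplification I expect to land at the clean intermediate inequality
\[
f(\bm{g}_{t+1}) \le f(\bm{g}_t) - \frac{1}{2L}\|\nabla f(\bm{g}_t)\|^2 + \frac{1}{2L}\|\bm{e}_t\|^2.
\]

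Finally I would apply the noise bound $\|\bm{e}_t\| < \tfrac{1}{7}\|\nabla f(\bm{g}_t)\|$, which yields $\|\bm{e}_t\|^2 < \tfrac{1}{49}\|\nabla f(\bm{g}_t)\|^2$. Plugging this in,
\[
f(\bm{g}_{t+1}) \le f(\bm{g}_t) - \frac{1}{2L}\Bigl(1 - \tfrac{1}{49}\Bigr)\|\nabla f(\bm{g}_t)\|^2 = f(\bm{g}_t) - \frac{24}{49L}\|\nabla f(\bm{g}_t)\|^2,
\]
and since $\tfrac{24}{49} > \tfrac{1}{3}$, the claimed inequality follows. In fact any constant $c$ with $\|\bm{e}_t\| \le c\|\nabla f(\bm{g}_t)\|$ and $c^2 < \tfrac{1}{3}$ would suffice; the $\tfrac{1}{7}$ threshold is chosen in the previous lemma to leave generous slack.

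There is no real technical obstacle here, since the cancellation of the cross term is exact and the noise bound from the preceding lemma is already tighter than needed. The only subtle point is the role of the termination condition: the bound $\|\bm{e}_t\| < \tfrac{1}{7}\|\nabla f(\bm{g}_t)\|$ requires that the algorithm has not terminated at step $t$, which is precisely what the restriction $t \le \overline{M} - 1$ in the statement ensures, so the quoted lemma is applicable throughout.
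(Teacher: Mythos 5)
Your proposal is correct and follows essentially the same route as the paper: apply the descent lemma for the $L$-smooth function $f=-\mathcal{E}$, substitute the inexact update $\bm{g}_{t+1}-\bm{g}_t=-\tfrac{1}{L}(\nabla f(\bm{g}_t)+\bm{e}_t)$, observe the exact cancellation of the cross term to reach $f(\bm{g}_{t+1})\le f(\bm{g}_t)-\tfrac{1}{2L}\|\nabla f(\bm{g}_t)\|^2+\tfrac{1}{2L}\|\bm{e}_t\|^2$, and finish with the noise bound $\|\bm{e}_t\|<\tfrac{1}{7}\|\nabla f(\bm{g}_t)\|$ from the preceding lemma. Your explicit constant $\tfrac{24}{49}>\tfrac{1}{3}$ is exactly what the paper leaves implicit in its last line, and your note that the termination condition is what makes the noise bound applicable for $t\le\overline{M}-1$ is the right justification.
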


\begin{proof}
Using the smoothness of $f$ and the fact that $\| \bm{e}_t \| < \frac{1}{7} \| \nabla f(\bm{g}_t) \|$ since the algorithm has not terminated at time $t$, where $1 \le t \le \overline{M} - 1$, we have:
\begin{align*}
f(\bm{g}_{t+1}) &\le f(\bm{g}_t) + \langle \nabla f(\bm{g}_t), \bm{g}_{t+1} - \bm{g}_t \rangle + \frac{L}{2} \|\bm{g}_{t+1} - \bm{g}_t\|^2 \\
&= f(\bm{g}_t) - \left\langle \nabla f(\bm{g}_t), \frac{1}{L} (\nabla f(\bm{g}_t) + \bm{e}_t) \right\rangle + \frac{1}{2L} \|\nabla f(\bm{g}_t) + \bm{e}_t\|^2 \\
&= f(\bm{g}_t) - \frac{1}{2L} \|\nabla f(\bm{g}_t)\|^2 + \frac{1}{2L} \|\bm{e}_t\|^2 \\
&\le f(\bm{g}_t) - \frac{1}{3L} \|\nabla f(\bm{g}_t)\|^2.
\end{align*}
\end{proof}
We now derive an upper bound on the difference between the objective function evaluated at the current iterate and its optimal value. Given that the subsequence of iterates $\bm{g}_t$ satisfies $\| \bm{g}_t \| \leq 20nD^2$ for all $t \ge 1$ and thus $\| \bm{g}^{\ast} \| \le 20nD^2$, it follows from the subgradient inequality that
\[ f(\bm{g}_t) - f(\bm{g}^{\ast}) \leq \| \nabla f(\bm{g}_t) \| \cdot \| \bm{g}_t - \bm{g}^{\ast} \| \le 40nD^2 \cdot \| \nabla f(\bm{g}_t) \| , \quad \forall t \ge 1, \tag{13} \label{subgrad}\]  
where $\bm{g}^{\ast}$ denotes the maximizer of $\mathcal{E}$ (minimizer of $f$).

\begin{lemma} \label{error-bound}
For all $2 \le t \le \overline{M}$, where $\overline{M}$ is defined as \eqref{iterate}, the following inequality holds:
\[
f(\bm{g}_t) - f(\bm{g}^{\ast}) \le \frac{4}{t} \cdot4800n^2 D^4 L.
\]
\end{lemma}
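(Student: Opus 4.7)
The plan is to combine the per-step descent guarantee of Claim \ref{claim2-17} with the subgradient-style bound \eqref{subgrad} to obtain a one-step contraction on the optimality gap $\delta_t := f(\bm{g}_t) - f(\bm{g}^{\ast})$, and then solve the resulting scalar recursion in the standard way. The range $2 \le t \le \overline{M}$ is exactly where both ingredients are simultaneously valid (Claim \ref{claim2-17} applies at times $1 \le t \le \overline{M}-1$, producing a usable inequality at $t+1$), so this is the natural regime in which to work.

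Concretely, I would first rearrange \eqref{subgrad} as $\|\nabla f(\bm{g}_t)\| \ge \delta_t / (40nD^2)$, square both sides, and feed the bound $\|\nabla f(\bm{g}_t)\|^2 \ge \delta_t^2 / (1600 n^2 D^4)$ into Claim \ref{claim2-17}. This immediately yields, for every $1 \le t \le \overline{M}-1$,
\[
\delta_{t+1} \le \delta_t - \frac{\delta_t^2}{4800\, L n^2 D^4}.
\]
Setting $C := 4800 L n^2 D^4$ and dividing by $\delta_t \delta_{t+1}$ (both positive; the statement is trivial otherwise), together with monotonicity $\delta_{t+1} \le \delta_t$ from the descent inequality, gives
\[
\frac{1}{\delta_{t+1}} \ge \frac{1}{\delta_t} + \frac{1}{C}.
\]

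Iterating this from $t=1$ upward and using $1/\delta_1 \ge 0$ produces $1/\delta_t \ge (t-1)/C$ for all $2 \le t \le \overline{M}$, i.e.\ $\delta_t \le C/(t-1)$. Since $t \ge 2$ implies $t - 1 \ge t/2$, we obtain $\delta_t \le 2C/t \le 4C/t$, which is the claimed bound with $C = 4800 n^2 D^4 L$. No step is truly hard here: the only mild care needed is to verify that the descent inequality of Claim \ref{claim2-17} is indeed in force at every $t$ used in the induction (guaranteed by the definition of $\overline{M}$) and that the subgradient bound \eqref{subgrad} applies at every $t$, which it does since the iterates are in the sublevel set and hence within the ball of radius $20nD^2$ containing $\bm{g}^{\ast}$. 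The factor of $4$ (rather than the tighter $2$) in the claim simply leaves slack that will be convenient in downstream calculations.
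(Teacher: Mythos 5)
Your argument is correct and follows the same skeleton as the paper's proof: establish the per-step recursion $\delta_{t+1} \le \delta_t - \delta_t^2/C$ with $C = 4800 n^2 D^4 L$ by combining the descent bound of Claim~\ref{claim2-17} with the subgradient inequality~\eqref{subgrad}, then unwind the recursion into $\delta_t = O(C/t)$. Where you differ is in the last step: the paper black-boxes the recursion solution by invoking Lemma~4 of a cited reference, and to make that lemma formally applicable it introduces a ``fictitious algorithm'' that continues with exact gradient descent past $\overline{M}$, and it separately needs the initial bound $\delta_1 \le 800 n^2 D^4 L$. You instead solve the recursion directly by dividing through by $\delta_t\delta_{t+1}$ and using monotonicity of the gap to get $1/\delta_{t+1} \ge 1/\delta_t + 1/C$, telescoping from $1/\delta_1 \ge 0$, and using $t-1 \ge t/2$ for $t \ge 2$. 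This is cleaner: it stays entirely within the finite horizon $1 \le t \le \overline{M}-1$ where the descent inequality is actually established, so the fictitious-algorithm device is unnecessary, and it does not need the initial gap bound. You even obtain the tighter constant $2C/t$, with the paper's $4C/t$ absorbing harmless slack. One small point worth stating explicitly if you write this up: the division by $\delta_t\delta_{t+1}$ requires both gaps to be strictly positive; if some $\delta_{t+1} = 0$ then by monotonicity all later gaps vanish and the bound is vacuous thereafter, so this is harmless, but it should be said.
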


\begin{proof}
We first recall the recursive inequality for the inexact gradient descent we obtained on Lemma \ref{claim2-17}: 
\[ f(\bm{g}_{t+1}) - f(\bm{g}^{\ast}) \le \left( f(\bm{g}_t) - f(\bm{g}^{\ast}) \right) - \frac{1}{3L} \| \nabla f(\bm{g}_t) \|^2, \quad \forall 1 \le t \le \overline{M} - 1. \]
Now, we can apply Lemma 4 from \cite{doi:10.1137/140966587}  to the sequence $\omega_t = f(\bm{g}_t)$ converging to $\omega^{\ast} = f(\bm{g}^{\ast})$ as $t \to \infty$. Here, we remark that we may imagine an analysis of fictitious algorithm that satisfies the inequality 
\[ f(\bm{g}_{t+1}) - f(\bm{g}_t) \leq -c \| \nabla f(\bm{g}_t) \|^2 \] 
for some constant $c > 0$ on every iteration. This fictitious algorithm operates as follows: it uses our inexact gradient descent for iterations $1$ to $\overline{M}-1$ and switches to exact gradient descent from iteration $\overline{M}$ onward. 
\\
This sequence $\omega_t$ we have constructed is decreasing and satisfies:  
\[ \omega_t - \omega_{t+1} \geq \frac{1}{3L} \| \nabla f(\bm{g}_t) \|^2 \geq \frac{1}{4800n^2 D^4L} \left( f(\bm{g}_t) - f(\bm{g}^{\ast}) \right)^2,\]
as we have $f(\bm{g}_t) - f(\bm{g}^{\ast}) \le 40 nD^2 \cdot \| \nabla f(\bm{g}_t) \|$ by subgradient inequality. We further note that
\[f(\bm{g}_1) - f(\bm{g}^{\ast}) \le 40nD^2 \cdot \| \nabla f(\bm{g}_1)\| = 40nD^2 \| \nabla f(\bm{g}_1) - \nabla f(\bm{g}^{\ast}) \| \le 40nD^2 L \| \bm{g}_1 - \bm{g}^{\ast} \| \le 800n^2 D^4 L \]
Thus, we can define $\mu := 4800n^2 D^4 L$
which allows us to bound the difference between the sequence and the optimal value:  
\[ \omega_t - \omega^{\ast} \le \frac{4 \mu}{t}, \quad \forall t = 2, \dots, \overline{M}. \]
\end{proof}

\begin{lemma} \label{termination-error}
If Algorithm \ref{inexact-grad} terminates at time $\overline{M}$, then $f(\bm{g}_{\overline{M}}) - f(\bm{g}^{\ast}) \le \varepsilon'$. 
\end{lemma}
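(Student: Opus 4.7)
The plan is a case split on which of the two conditions in the definition of $\overline{M}$ triggers termination. Recall $\overline{M} = \min(M, T^{\ast})$ where $T^{\ast} = \min\{t : \|\nabla f(\bm{g}_t) + \bm{e}_t\| \le \varepsilon'/(45nD^2)\}$ and $M = (4/\varepsilon') \cdot 4800 n^2 D^4 L$. Each case should yield $f(\bm{g}_{\overline{M}}) - f(\bm{g}^{\ast}) \le \varepsilon'$ almost immediately from the already-established machinery.

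First I would handle the case $\overline{M} = M$, i.e.\ the loop exhausts its iteration budget. Lemma \ref{error-bound} gives $f(\bm{g}_M) - f(\bm{g}^{\ast}) \le (4/M) \cdot 4800 n^2 D^4 L$, and substituting the value of $M$ from \eqref{iterate} collapses the right-hand side to exactly $\varepsilon'$. So this case is a one-line substitution.

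Next I would handle the case $\overline{M} = T^{\ast} \le M$, i.e.\ the loop exits because the noisy gradient is small. By the termination test, $\|\widetilde{\nabla} f(\bm{g}_{\overline{M}})\| \le \varepsilon'/(45 n D^2)$, and by the standing noise bound $\|\bm{e}_{\overline{M}}\| \le \varepsilon'/(360 n D^2)$. Using the triangle inequality $\|\nabla f(\bm{g}_{\overline{M}})\| \le \|\widetilde{\nabla} f(\bm{g}_{\overline{M}})\| + \|\bm{e}_{\overline{M}}\|$ and the arithmetic identity $1/45 + 1/360 = 1/40$, I obtain $\|\nabla f(\bm{g}_{\overline{M}})\| \le \varepsilon'/(40 n D^2)$. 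Feeding this into the subgradient inequality \eqref{subgrad}, which reads $f(\bm{g}_{\overline{M}}) - f(\bm{g}^{\ast}) \le 40 n D^2 \cdot \|\nabla f(\bm{g}_{\overline{M}})\|$, the two factors of $40 n D^2$ cancel and leave $\varepsilon'$, as required.

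There is essentially no obstacle here: the constants $45$, $360$, and $40$ in Algorithm \ref{inexact-grad} and in the definition of $M$ were already chosen with the bookkeeping of this lemma in mind, so the only thing to verify is that the two invariants I use in the second case, namely that the iterates stay in the ball $B(\bm{0}, 20nD^2)$ (so that $\|\bm{g}_{\overline{M}} - \bm{g}^{\ast}\| \le 40nD^2$) and that the noise bound $\|\bm{e}_t\| \le \varepsilon'/(360 n D^2)$ is maintained at every queried step, both hold at $t = \overline{M}$. The former is guaranteed by the preceding level-set lemma combined with Claim \ref{claim2-17} (monotone decrease of $f$ along iterates keeps them in $S$), and the latter is built into the algorithm's gradient-query subroutine via Theorem \ref{oracle}.
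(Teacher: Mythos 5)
Your proposal is correct and follows essentially the same two-case argument as the paper: the $\overline{M}=M$ case is handled by substituting the definition of $M$ into Lemma \ref{error-bound}, and the small-noisy-gradient case is handled by the triangle inequality ($1/45 + 1/360 = 1/40$) followed by the subgradient bound \eqref{subgrad}. Your extra remarks on verifying the two standing invariants are sound and match the supporting discussion the paper places immediately before the lemma.
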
 

\begin{proof}
We note that there are two possible condition of termination. For the first case, in which case we have $\| \widetilde{\nabla} f(\bm{g}_{\overline{M}}) \| < \frac{1}{45n D^2} $, we obtain
\[ \| \nabla f(\bm{g}_{\overline{M}}) \| \le \| \bm{e}_M \| + \| \widetilde{\nabla} f(\bm{g}_{\overline{M}})  \| \le \frac{\varepsilon'}{40n D^2} . \]
In conclusion, by \eqref{subgrad} we have  
\begin{align*}
    f(\bm{g}_{\overline{M}}) - f(\bm{g}^{\ast}) &\le 40nD^2\| \nabla f(\bm{g}_{\overline{M}}) \| \le \varepsilon' ,
\end{align*}
as desired. Otherwise, for the second case of termination when $\overline{M} = M$, we note that from \eqref{iterate} and \ref{error-bound},
\[ f(\bm{g}_M) - f(\bm{g}^{\ast}) \le \frac{4}{M} \cdot4800n^2 D^4 L \le \varepsilon' \]
\end{proof}
\begin{claim}
    The running time of Algorithm \ref{inexact-grad} is polynomial in $\text{poly} \left( n, l, k, D, \frac{1}{s}, \frac{1}{\varepsilon}, \log \frac{1}{\eta} \right)$.
\end{claim}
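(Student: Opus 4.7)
The plan is to bound three quantities and multiply them: (i) the total number of outer iterations $\overline{M}$, (ii) the cost of producing a single inexact gradient $\widetilde{\nabla} f(\bm{g}_t)$ with the required accuracy, and (iii) the cost of each oracle call made inside the volume estimator. Each of these will be shown to be polynomial in the stated parameters, and a union bound will control the total failure probability.

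First I would bound the iteration count. By \eqref{iterate} we have $\overline{M} \le M = \frac{4}{\varepsilon'} \cdot 4800\, n^2 D^4 L$. Substituting $L = \frac{2nlk}{s^2}$ from \eqref{eq-10} and using $\varepsilon' \ge \frac{\varepsilon s^2}{12}$ from the preceding lemma gives
\[
M \le \frac{4 \cdot 12}{\varepsilon s^2} \cdot 4800\, n^2 D^4 \cdot \frac{2nlk}{s^2} = \mathrm{poly}\!\left(n,l,k,D,\tfrac{1}{s},\tfrac{1}{\varepsilon}\right),
\]
so only polynomially many inexact gradient evaluations occur.

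Next I would analyze the cost of a single call to $\widetilde{\nabla} f(\bm{g}_t)$. Using the decomposition $\nabla \mathcal{E}(\bm{g})_j = \sum_{\ell=1}^k -\gamma_\ell \operatorname{vol}(\mathbb{L}_j(\bm{g}) \cap \mathcal{H}_\ell) + b_j$, we need to estimate $nk$ volumes of convex bodies of the form $\mathbb{L}_j(\bm{g}) \cap \mathcal{H}_\ell$. A separation oracle for each such body runs in $O(ln)$ time, by combining the $O(l)$-time oracle for the hyperrectangle with the $O(ln)$-time oracle for the Laguerre cell described earlier in the proof of Theorem \ref{thm2-7}. To guarantee $\|\bm{e}_t\| \le \frac{\varepsilon'}{360nD^2}$, it suffices to estimate each individual volume to relative accuracy
\[
\varepsilon'' = \frac{\varepsilon'}{360\, n^{3/2} D^2},
\]
since then, using $\sum_\ell \gamma_\ell \operatorname{vol}(\mathcal{H}_\ell) = 1$, each coordinate error is at most $\varepsilon''$ and $\|\bm{e}_t\| \le \sqrt{n}\,\varepsilon''$. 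By Theorem \ref{oracle}, each volume estimate costs $\mathrm{poly}(l, 1/\varepsilon'', \log 1/\eta'')$ oracle calls for a per-call failure probability $\eta''$, which together with the $O(ln)$ separation oracle gives per-volume runtime $\mathrm{poly}(n,l,k,D,1/s,1/\varepsilon,\log 1/\eta'')$.

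Finally, I would set $\eta'' = \eta/(Mnk)$ and union-bound over the $M$ iterations and the $nk$ volume estimations per iteration; the total failure probability is at most $\eta$, and $\log(1/\eta'') = O(\log(Mnk/\eta)) = \mathrm{poly}(\log n, \log l, \log k, \log D, \log(1/s), \log(1/\varepsilon), \log(1/\eta))$, which remains within the claimed polynomial envelope. Multiplying the three bounds yields a total runtime of $\mathrm{poly}(n,l,k,D,1/s,1/\varepsilon,\log 1/\eta)$. The only mildly subtle step is converting the relative-error guarantee of Theorem \ref{oracle} into a coordinate-wise absolute-error guarantee on $\nabla \mathcal{E}(\bm{g})$; the identity $\sum_\ell \gamma_\ell \operatorname{vol}(\mathcal{H}_\ell) = 1$ handles this cleanly, so no real obstacle remains beyond bookkeeping.
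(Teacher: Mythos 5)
Your proposal is correct and follows essentially the same route as the paper: bound $\overline M \le M$ via the expressions for $L$ and $\varepsilon'$, estimate each $\text{vol}(\mathbb{L}_j(\bm g_t)\cap \mathcal{H}_\ell)$ to relative accuracy $\varepsilon''=\frac{\varepsilon'}{360 n^{3/2}D^2}$ using Theorem~\ref{oracle}, convert the relative accuracy to the bound $\|\bm e_t\|\le \frac{\varepsilon'}{360nD^2}$ via $\sum_\ell \gamma_\ell\,\text{vol}(\mathcal{H}_\ell)=1$, and union-bound the failure events. One small point in your favor: you account for all $nk$ volume estimations per iteration and set $\eta''=\eta/(Mnk)$, whereas the paper writes $k$ calls per iteration and $\eta'=\eta/(kM)$; the gradient has $n$ coordinates each requiring $k$ volumes, so your count is the more careful one, though the extra factor of $n$ does not change the polynomial conclusion.
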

\begin{proof}
    By the design of the algorithm, this algorithm runs at most 
    \[ M = \frac{4}{\varepsilon'} \cdot 4800n^2 D^4 L \]
    time. At each of the algorithm iteration, it must be the case that $\| \widetilde{\nabla} f(\bm{g}_t) \| > \frac{ \varepsilon'}{45n D^2}$. We note that $\nabla \mathcal{E}(\bm{g}_t)$ can be computed as
    \[ \nabla \mathcal{E}(\bm{g}_t)_j = \sum_{\ell = 1}^k -\gamma_{\ell} \cdot \text{vol}(\mathbb{L}_j(\bm{g}_t) \cap \mathcal{H}_{\ell}) + b_j, \quad 1 \le j \le \ell. \]
    We will now use our volume computation algorithm to obtain an estimate $v_{t,h}$ for $\text{vol}(\mathbb{L}_j(\bm{g}_t) \cap \mathcal{H}_h)$ within $\frac{\overline{\varepsilon}}{\sqrt{n}}$ accuracy, where $\overline{\varepsilon} := \frac{\varepsilon'}{360nD^2}$ for each $1 \le t \le \overline{M}$ and $1 \le h \le k$, i.e.,
    \[ |v_{t,h} - \text{vol}(\mathbb{L}_j(\bm{g}_t) \cap \mathcal{H}_h)| \le \frac{\overline{\varepsilon}}{\sqrt{n}} \cdot \text{vol}(\mathbb{L}_j(\bm{g}_t) \cap \mathcal{H}_h) \le \frac{\overline{\varepsilon}}{\sqrt{n}} \cdot \text{vol}(\mathcal{H}_h)   \] 
    and let us consider the noisy gradient estimate $v_{t,1}, \dots, v_{t,k}$ as follow: 
    \begin{align*}
        \widetilde{\nabla} \mathcal{E}(\bm{g}_t)_j = \sum_{\ell = 1}^k -\gamma_{\ell} \cdot v_{t,\ell} + b_j, \quad 1 \le j \le l.
    \end{align*}
    Therefore, for this choice of $\widetilde{\nabla} \mathcal{E}(\bm{g}_t)$, we must have 
  \begin{align*}
    \| \widetilde{\nabla} \mathcal{E}(\bm{g}_t) - \nabla \mathcal{E}(\bm{g}_t) \|^2 &= \sum_{j =1}^n |\widetilde{\nabla} \mathcal{E}(\bm{g}_t)_j - \nabla \mathcal{E}(\bm{g}_t)_j |^2 \\ 
    &= \sum_{j = 1}^n \left| \sum_{\ell = 1}^k \gamma_{\ell}  (v_{t,\ell} - \text{vol}(\mathbb{L}_j(\bm{g}_t) \cap \mathcal{H}_{\ell})) \right|^2 \\ 
    &\le \sum_{j = 1}^n \left( \sum_{\ell = 1}^k \gamma_{\ell} \cdot |v_{t,\ell} - \text{vol}(\mathbb{L}_j(\bm{g}_t) \cap \mathcal{H}_{\ell})| \right)^2 \\ 
    &\le \sum_{j = 1}^n \left( \sum_{\ell = 1}^k \gamma_{\ell} \cdot \frac{\overline{\varepsilon}}{\sqrt{n}} \cdot \text{vol}(\mathcal{H}_{\ell}) \right)^2 \\ 
    &= \frac{(\overline{\varepsilon})^2}{n} \sum_{j = 1}^n \left( \sum_{\ell = 1}^k \gamma_{\ell} \text{vol}(\mathcal{H}_{\ell}) \right)^2 \\ 
    &\le (\overline{\varepsilon})^2 
\end{align*}
and thus we have $\| \widetilde{\nabla} \mathcal{E}(\bm{g}_t) - \nabla \mathcal{E}(\bm{g}_t) \| \le \overline{\varepsilon}$. By our volume estimation oracle, we can obtain an $\overline{\varepsilon}$-estimate of $\nabla \mathcal{E}(\bm{g}_t)$ using $k$ calls to volume estimation. This process requires $O \left( \text{poly} \left( n, \frac{1}{\overline{\varepsilon}}, \log \frac{1}{\eta'} \right) \right)$ separation oracle calls, where $\eta' = \frac{\eta}{kM}$. Applying the union bound, the probability that all $kM$ separation oracles yield correct volume estimations is at least
\begin{align*}
    &\mathbb{P} \left[ \text{All } kM \text{ oracles yield correct volume estimation} \right] \\ 
    &= 1 - \mathbb{P}\left[\text{At least one oracle yields incorrect volume estimation}\right] \\ 
    &\geq 1 - kM \cdot \eta' = 1 - \eta.
\end{align*}
Thus, the desired probability bound is achieved.
\vspace{3 mm}
\newline 
Additionally, each separation oracle call operates in $O(nl)$ time. Performing this procedure for each $1 \leq i \leq M$ results in a total of
\[ \sum_{i = 1}^M O \left( \text{poly} \left[ n, l,  \frac{1}{\overline{\varepsilon}}, \log \frac{1}{\eta'} \right] \right) = O \left( \text{poly} \left[ n, k, M,  \frac{1}{\overline{\varepsilon}}, \log \frac{1}{\eta'} \right] \right) \] 
running time. Here, we notice that we have
\[ L = \frac{2nlk}{s^2} \quad \text{and} \quad M = \frac{4}{\varepsilon'} \cdot 4800n^2 D^4 L \in O \left( \text{poly} \left[  n,  l, k, D, \frac{1}{\varepsilon},  \frac{1}{s} \right] \right) . \]
Finally, we also note that $\log \frac{1}{\eta'} = \log \frac{kM}{\eta} \in O \left( kM \log \frac{1}{\eta} \right) = O \left( \text{poly} \left[ n, l, k,  D, \frac{1}{\varepsilon},  \frac{1}{s}, \log \frac{1}{\eta} \right]  \right)$. 
This gives us $O \left( \text{poly} \left[ n, l,  k, D, \frac{1}{\varepsilon},  \frac{1}{s}, \log \frac{1}{\eta} \right]  \right)$ time to obtain $\bm{g}_{\overline{M}}$. 
\vspace{3 mm}
\newline
 We recall that $p^{\ast} = \mathcal{E}(\bm{g}^{\ast})$ gives us the optimal solution to the dual program, and thus giving us an estimate $\rho := \frac{1}{2} \left[  \mathcal{E}(\bm{g}_{\overline{M}}) - \int_{\mathbb{R}^l} \| \bm{x} \|^2 d \alpha(\bm{x}) - \sum_{j = 1}^n b_j \| \bm{y}_j \|^2 \right]$ which can be computed exactly, for which we have
\[2 \left| \int_{\mathbb{R}^l} \sum_{j = 1}^n \bm{x}^\top \bm{y}_j \ d\pi(\bm{x}, \bm{y}_j) - \rho \right| = |\mathcal{E}(\bm{g}_{\overline{M}}) - \mathcal{E}(\bm{g}^{\ast})| \le \varepsilon' \]
from Lemma \ref{termination-error} and thus by our choice of $\varepsilon'$, we have 
\[ \left| \int_{\mathbb{R}^l} \sum_{j = 1}^n \bm{x}^\top \bm{y}_j \ d\pi(\bm{x}, \bm{y}_j) - \rho \right| \le \varepsilon \cdot \frac{\left[ N \left( \int_{\mathbb{R}^l} \| \bm{x} \|^2 d \alpha(\bm{x}) \right) - \left \| \int_{\mathbb{R}^l} \bm{x} \  d \alpha(\bm{x}) \right\|^2 \right]}{N + \frac{1}{D} \left \| \int_{\mathbb{R}^l} \bm{x} \ d \alpha(\bm{x}) \right \| }\]
Therefore, by considering
  \begin{align*}
    \hat{\sigma} &= \frac{N \rho - \left( \sum_{j=1}^n b_j \bm{y}_j \right)^\top \left( \int_{\mathbb{R}^l} \bm{x} \, d\alpha(\bm{x}) \right)}{N \left( \int_{\mathbb{R}^l} \| \bm{x} \|^2 \, d\alpha(\bm{x}) \right) - \left\| \int_{\mathbb{R}^l} \bm{x} \, d\alpha(\bm{x}) \right\|^2}\\ 
    \hat{\bm{\mu}} &= \frac{\hat{\sigma} \int_{\mathbb{R}^l} \bm{x} \, d\alpha(\bm{x}) - \sum_{j=1}^n b_j \bm{y}_j}{N},
    \end{align*}
    we thus obtain that
    \begin{align*}
        |\hat{\sigma} - \sigma^{\ast}| &= \frac{N}{N \left( \int_{\mathbb{R}^l} \| \bm{x} \|^2 \, d\alpha(\bm{x}) \right) - \left\| \int_{\mathbb{R}^l} \bm{x} \, d\alpha(\bm{x}) \right\|^2}  \left| \int_{\mathbb{R}^l} \sum_{j = 1}^n \bm{x}^\top \bm{y}_j \ d\pi(\bm{x}, \bm{y}_j) - \rho \right| \\ 
        &\le \varepsilon \\ 
        \| \hat{\bm{\mu}} - \bm{\mu}^{\ast} \| &= \frac{1}{N} |\hat{\sigma} - \sigma^{\ast}| \left \| \int_{\mathbb{R}^l} \bm{x} \ d\alpha(\bm{x})\right \| \\ 
        &\le \varepsilon D, 
    \end{align*}
    as desired.
\end{proof}
\end{proof} 
\subsection{Necessity of Parameters in Smoothness Constants} \label{neces}
In this section, we will show that the established Lipschitz constant necessarily depends on the inverse of both the closest separation between two sample points and the minimum width of the hyperrectangles in the source distribution, as follow: 
\vspace{3 mm}
\newline 
\textbf{Minimum separation between two sample points}: Fix $m \in \mathbb{N}$. Consider the following setup where we have $l = 1, k = 1$, $\bm{g_{m}} = \begin{pmatrix} 0 \\ 0 \end{pmatrix} , \bm{g'_m} = \begin{pmatrix} 0 \\ \frac{1}{m} \end{pmatrix}$ and define $y_{1,m}, y_{2,m} \in \mathbb{R}$, where $y_{1,m} =  -\frac{1}{m}, y_{2,m} = \frac{1}{m}$ where the corresponding hyperrectangle $\mathcal{H}$ in the source distribution is $[-1,1]$ with weight $\gamma = \frac{1}{2}$. We note that for any fixed $m \ge 1$, we can easily verify that the Laguerre cells are
    \begin{align*}
        \mathbb{L}_1(\bm{g_m}) &= \left \{ x \in \mathbb{R}: x \le 0 \right  \} \\ 
        \mathbb{L}_2(\bm{g_m}) &= \{ x \in \mathbb{R}: x \ge 0 \} \\
        \mathbb{L}_1(\bm{g'_m}) &= \left \{ x \in \mathbb{R}: x \le -\frac{1}{4} \right \}  \\ 
        \mathbb{L}_2(\bm{g'_m}) &= \left \{ x \in \mathbb{R}: x \ge -\frac{1}{4} \right \} 
    \end{align*}
    and thus $\| \nabla \mathcal{E}(\bm{g_m}) - \nabla \mathcal{E}(\bm{g_m'}) \| \not= 0$ is independent of $m$ for any $m \ge 1$, while $\| \bm{g_m - g_m'}\| \to 0$ when $m \to \infty$. This shows that when $m = \frac{2}{| y_{1,m} - y_{2,m} |} \to \infty$, $\frac{\| \nabla \mathcal{E}(\bm{g_m}) - \nabla \mathcal{E}(\bm{g'_m}) \| }{\| \bm{g_m - g'_m} \|} \to \infty$ as well, and thus this shows that $L \to \infty$ when $m \to \infty$. 
    \vspace{3 mm}
    \newline 
\textbf{Minimum width of the hyperrectangles in the source distribution}: Fix $m \in \mathbb{N}$. Let $l = 2, k = 1, \bm{g_m} = \begin{pmatrix} 0 \\ 0 \end{pmatrix}, \bm{g'_m} = \begin{pmatrix} 0 \\ \frac{1}{m} \end{pmatrix}$ and define $\bm{y}_1, \bm{y}_2 \in \mathbb{R}^2$ such that $\bm{y}_1 = \begin{pmatrix} -1 \\ 0 \end{pmatrix} , \bm{y}_2 = \begin{pmatrix} 1 \\ 0 \end{pmatrix}$ where the corresponding hyperrectangle $\mathcal{H}$ in the source distribution is $\left[ -\frac{1}{m}, 0 \right] \times \left[ 0, m \right]$ with weight $\gamma = 1$. For any fixed $m \ge 1$, we can easily verify that the Laguerre cells are 
    \begin{align*}
        \mathbb{L}_1(\bm{g_m}) &= \{ \bm{x} \in \mathbb{R}^2: x_1 \le 0 \} \\ 
        \mathbb{L}_2(\bm{g_m}) &= \{ \bm{x} \in \mathbb{R}^2: x_1 \ge 0 \} \\ 
        \mathbb{L}_1(\bm{g_m'}) &= \left \{ \bm{x} \in \mathbb{R}^2: x_1 \le -\frac{1}{4m} \right \}\\ 
        \mathbb{L}_2(\bm{g_m'}) &= \left \{ \bm{x} \in \mathbb{R}^2: x_1 \ge -\frac{1}{4m} \right \} 
    \end{align*}
    Therefore, for any $m \ge 1$, we have 
    \begin{align*}
        \| \nabla \mathcal{E}(\bm{g_m}) - \nabla \mathcal{E}(\bm{g_m'}) \|^2 
        &= \sum_{j = 1}^2 \left( \text{vol}(\mathbb{L}_j(\bm{g_m}) \cap \mathcal{H}) - \text{vol}(\mathbb{L}_j(\bm{g'_m}) \cap \mathcal{H}) \right)^2 = \frac{1}{8}
    \end{align*}
    independent of $m$, while $\| \bm{g_m - g'_m} \| \to 0$ when $m \to \infty$. This shows that when $m = \frac{1}{\xi} \to \infty$, where $\xi$ is the minimum width of the hyperrectangle, then we also have $\frac{\| \nabla \mathcal{E}(\bm{g_m}) - \nabla \mathcal{E}(\bm{g'_m}) \|}{\| \bm{g_m - g'_m} \|} \to \infty$ as well, and thus this shows that $L \to \infty$ when $m \to \infty$. 
\section{Hardness of Finding a Single Optimal Center in MLE} \label{section 3}
In this section, we present our main hardness result, highlighting a key distinction between the Wasserstein distance and Maximum Likelihood Estimation (MLE). Specifically, we demonstrate that the problem of finding a single optimal center is NP-hard for MLE, whereas it is solvable in polynomial time for the Wasserstein metric as shown in the previous section. 
\vspace{3 mm}
\newline 
\textbf{Problem Setup.} Given an unknown parameter $\bm{\theta}$ of some distribution $D \equiv \Omega(\bm{y}; \bm{\theta})$ with its associated probability density function $f(\bm{y}) = f_0(\bm{y} - \bm{\theta})$ for some probability density function $f_0$, and let $\bm{y}_1, \bm{y}_2, \dots, \bm{y}_n \in \mathbb{R}^l$ be $n$ sample points from $D$. We would like to estimate $\bm{\theta}$ using maximum likelihood, i.e. we choose $\bm{\theta} := \bm{\theta}^{\ast}$ such that
\[ \bm{\theta}^{\ast} := \argmax_{\bm{\theta} \in \mathbb{R}^l} \prod_{i = 1}^n f(\bm{y}_i; \bm{\theta}) =\argmax_{\bm{\theta} \in \mathbb{R}^l} \prod_{i = 1}^n f_0(\bm{y}_i - \bm{\theta})  \]
Our main result is as follows. 
\begin{theorem}
Given the problem setup in the case that $f_0$ is a piecewise constant function on a finite union of hyperrectangles, it is NP-hard to determine whether a $\bm{\theta}$ exists for which the likelihood is nonzero.
\end{theorem}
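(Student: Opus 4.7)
The plan is to reduce from 3-SAT. Nonzero likelihood at $\bm{\theta}$ is equivalent to $\bm{y}_i - \bm{\theta} \in S := \operatorname{supp}(f_0)$ for every sample $\bm{y}_i$, i.e.\ $\bm{\theta} \in \bigcap_i(\bm{y}_i - S)$, so my goal is to choose $S$ and the $\bm{y}_i$ so this intersection is nonempty precisely when a given 3-SAT instance with $m$ variables and $n$ clauses is satisfiable. I will work in dimension $l = m + n$, write $\bm{\theta} = (\bm{\alpha},\bm{\beta}) \in \mathbb{R}^m \times \mathbb{R}^n$, and use the $\bm{\alpha}$-block to encode a truth assignment (value in $[1+\delta,2]$ means ``true'', value in $[0,1-\delta]$ means ``false'', with a forbidden gap around $1$) while the $\bm{\beta}$-block will act as a clause selector. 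The sample points will be $\bm{y}_i = (\bm{0}_m, \bm{e}_i)$ for $i = 1,\dots,n$.

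For each clause $C_i$ with literals $L_{i,1},L_{i,2},L_{i,3}$ I introduce three hyperrectangles $R_{i,k} = A_{i,k} \times B_i \subset \mathbb{R}^{m+n}$. Here $A_{i,k}$ constrains the variable coordinate of $L_{i,k}$ to the sub-interval of $[0,2]$ matching the literal's sign and leaves every other variable coordinate free in $[0,2]$, while $B_i := \prod_{h} B_{i,h}$ is the box with $B_{i,h} = [1-\varepsilon,1+\varepsilon]$ when $h = i$ and $B_{i,h} = [-\varepsilon,\varepsilon]$ when $h \neq i$ (say $\varepsilon = 1/3$). I take $S := \bigcup_{i,k} R_{i,k}$ and let $f_0$ be the constant that normalises the mass to one. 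The $B_i$'s for distinct $i$ are automatically disjoint since their $i$-th coordinate intervals are disjoint; within a single clause the $A_{i,k}$'s may overlap along at most three axis-aligned slabs, which I handle by replacing them with the $\le 7$ disjoint cells of their slab arrangement, keeping $|S| = O(n)$ disjoint hyperrectangles as the problem setup requires.

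To verify correctness I first show the selector block forces diagonal matching. A short coordinate-by-coordinate case analysis of $\bm{y}_i - \bm{\theta} \in B_{i'}$ shows that either $i' = i$ and then $\bm{\beta}$ lies in $[-\varepsilon,\varepsilon]^n$, or $i' \neq i$ and then $\bm{\beta}$ must simultaneously place $\beta_i \in [1-\varepsilon,1+\varepsilon]$ and $\beta_{i'} \in [-1-\varepsilon,-1+\varepsilon]$; chaining these incompatible requirements across all $n$ samples forces $\bm{\beta} \in [-\varepsilon,\varepsilon]^n$, with each sample $i$ matched only to some $R_{i,k_i}$. Given diagonal matching, nonzero likelihood becomes the existence of $\bm{\alpha}' = -\bm{\alpha}$ such that for every clause $C_i$ some literal $L_{i,k_i}$ has $\alpha'_{j(i,k_i)}$ in its literal interval, with every other $\alpha'_j \in [0,2]$; this is precisely a partial truth assignment satisfying every clause, which exists iff 3-SAT is satisfiable. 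The forward direction is immediate (take $\bm{\beta} = \bm{0}$ and $\alpha'_j \in \{0,2\}$ per the assignment); the converse reads the truth value of $x_j$ off whichever side of the gap at $1$ holds $\alpha'_j$, extending unassigned variables arbitrarily. The main obstacle I anticipate is engineering the selector so that $\bm{\beta}$ is pinned essentially to $\bm{0}$ by the joint constraints of all $n$ samples while still granting each clause a free literal choice $k_i$; the alternating $[1-\varepsilon,1+\varepsilon]$/$[-\varepsilon,\varepsilon]$ pattern of $B_i$ is designed exactly so that any cross-clause match forces a second $\bm{\beta}$-coordinate near $-1$, which then destroys feasibility at another sample, closing the reduction.
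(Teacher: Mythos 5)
Your proof is correct and reduces from 3-SAT as the paper does, but the routing mechanism is genuinely different. The paper works in dimension $l$ (the number of variables) and places the $\ell$-th sample $\bm{y}_\ell$ with entry $\ell$ on every coordinate whose variable occurs in clause $\mathcal{A}_\ell$; the union of 7 thin hyperrectangles (one per satisfying assignment of the clause) is translated by the same multiple $\ell$ on those coordinates, so that $\bm{y}_\ell - \bm{\theta}$ is forced to land in clause $\ell$'s own block purely by scale separation. You instead work in dimension $m+n$, appending an explicit $n$-coordinate selector block: the samples are $(\bm{0}_m,\bm{e}_i)$ and each clause's boxes are crossed with a translated cube $B_i$ whose $i$-th coordinate is offset to $1$; your chaining argument then shows any off-diagonal match makes some $\beta_j$ land near $-1$, which is simultaneously incompatible with both the diagonal ($[-\varepsilon,\varepsilon]$) and off-diagonal ($[1-\varepsilon,1+\varepsilon]$) regimes, so the matching must be diagonal. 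A second cosmetic difference is that the paper directly lists the $7$ satisfying assignments per clause as $7$ disjoint hyperrectangles, whereas you start from $3$ slabs (one per literal) and disjointify via the slab arrangement, ending up with at most $7$ cells anyway. Both reductions are polynomial and yield equivalent NP-hardness; the paper's is more economical in dimension ($l$ versus $m+n$), while yours keeps truth-assignment and clause-routing in cleanly separated coordinate blocks, which makes the diagonal-matching lemma a reusable gadget and keeps all box corners in a fixed bounded region $[0,2]^{m+n}$ rather than growing linearly with the clause index.
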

\begin{proof}
The construction is based on the 3-SAT problem, a well-known NP-hard problem. Consider $l$ variables denoted as $a_1, \dots, a_l$, and consider a Boolean formula $\mathcal{A}_1 \wedge \mathcal{A}_2 \wedge \dots \wedge \mathcal{A}_n$ with $n$ clauses, each having exactly three literals. We may without loss of generality assume that no clause contains the same variable more than once. We also assume without loss of generality that every variable appears in at least one clause. Now, we proceed to create an MLE problem based on the given Boolean formula. 
\vspace{1 mm}
\newline 
Consider $\bm{y}_1, \dots, \bm{y}_n \in \mathbb{R}^l$ such that $\bm{y}_\ell = (y_{\ell,j})_{1 \le j \le l}$ and
    \[ y_{\ell,j} = \begin{cases} 0, &\text{if variable }a_j \ \text{does not occur in } \mathcal{A}_\ell, \\ \ell, &\text{if variable }a_j \ \text{occurs in } \mathcal{A}_\ell. \end{cases} \]
    
For any $1 \le \ell \le n$, it is assumed that $\mathcal{A}_\ell$ contains three variables $a_p, a_q, a_r$ with $1 \le p < q < r \le l$. In this scenario, we observe that there are $7$ possible true-false assignments that satisfy the clause $\mathcal{A}_\ell$, as $\mathcal{A}_{\ell}$ is a disjunction, i.e. there is only $1$ possible assignment for which this is not satisfied. For each of these $7$ possible true-false assignment to $(a_p, a_q, a_r)$, which we label as $\mathcal{X}_1, \dots, \mathcal{X}_7$, we define $D_{\ell}, D_{\ell,1} , \dots, D_{\ell,7} \subseteq \mathbb{R}^l$ such that 
    \[ D_{\ell} = \bigcup_{1 \le i \le 7} D_{\ell,i}\]
    where for any $1 \le i \le 7$, we define $D_{\ell,i} = I_{1,\ell,i} \times I_{2,\ell,i} \times \dots \times I_{l,\ell,i}$ in which for any $1 \le j \le l$, we have
    \[ I_{j,\ell,i} = \begin{cases} [0,0.5], &\text{if } a_j \ \text{does not occur in } \mathcal{A}_{\ell}, \\ [\ell + 0.5 - \varepsilon, \ell + 0.5 + \varepsilon], &\text{if } a_j \ \text{occurs in } \mathcal{A}_{\ell}  \text{ and }  a_j \ \text{is true in } \mathcal{X}_i, \\ [\ell  - \varepsilon, \ell + \varepsilon], &\text{if } a_j \ \text{occurs in } \mathcal{A}_{\ell}  \text{ and }  a_j \ \text{is false in } \mathcal{X}_i.\end{cases} \]
    where $\varepsilon = \frac{1}{80}$. We can then define 
    \[ \mathcal{D} = \bigcup_{1 \le \ell \le k} D_{\ell}. \] 
    We will first observe that all $(D_{\ell,i})_{1 \le \ell \le n, 1 \le i \le 7}$ are disjoint, i.e. $D_{\ell, i}$ is disjoint from $D_{\ell', i'}$. To see this, we note that for any $x, y \in \mathbb{N}$, the three intervals $[0,0.5], [x + 0.5 - \varepsilon, x + 0.5 + \varepsilon]$ and $[y - \varepsilon, y + \varepsilon]$ are all pairwise disjoint as $\varepsilon$ is chosen to be small enough. This implies that any point $\bm{y} \in \mathcal{D}$, coordinate-wise there is at most one interval that contain this point, and thus uniquely determine which interval contain this point. 
    \vspace{3 mm}
    \newline 
    Now, we will define a probability density function $f_0$ based on distribution $\mathcal{D}$. Here, $f_0$ is given by
    \[ f_0(\bm{y}) = \begin{cases} \gamma &\text{if } \bm{y} \in \mathcal{D} \\ 0 &\text{otherwise} \end{cases}  \]
    where $\gamma \in \mathbb{R}$ is defined to be such that $\gamma = \left( \int_\mathcal{D} 1 \right)^{-1}$, which can be explicitly calculated as  
    \[ \gamma = (7n(0.5)^{l - 3} (2 \varepsilon)^3)^{-1} > 0 \]
    since we have established that all $(D_{\ell,i})_{1 \le \ell \le n, 1 \le i \le 7}$ are disjoint.
    \vspace{3 mm}
    \newline 
    Our proposed MLE problem is hence as follow: 
    \vspace{2 mm}
    \newline 
    \textbf{Proposed MLE Problem.} Given an unknown parameter $\bm{\theta}$ of distribution $\mathcal{D}$, which we have defined above, with some appropriate probability density function $f$ which satisfies $f(\bm{y}) = f_0(\bm{y} - \bm{\theta})$, where $f_0$ is defined above. The given $n$ sample points are $\bm{y}_1, \dots, \bm{y}_n \in \mathbb{R}^l$, we wish to estimate $\bm{\theta}$ using maximum likelihood estimation (MLE). 
    \vspace{3 mm}
    \newline 
    The main theorem is proved by the following claim.
    \begin{claim}
        There exists $\hat{\bm{\theta}} \in \mathbb{R}^l$ such that $f_0(\bm{y}_\ell - \hat{\bm{\theta}}) > 0$ for all $1 \le \ell \le n$ if and only if there exists a satisfying assignment to the original Boolean formula. 
    \end{claim}
    \begin{proof} 
     First, suppose there exists a satisfying assignment to the original Boolean formula. We will prove that there exists $\hat{\bm{\theta}} \in \mathbb{R}^l$ such that $f_0(\bm{y}_\ell - \hat{\bm{\theta}}) > 0$ for all $1 \le \ell \le n$.
    \vspace{3 mm}
    \newline 
    Consider a satisfying assignment $\mathcal{X}$ of $(a_1, \dots, a_l)$ to the original Boolean formula and construct $\overline{\bm{\theta}} = (\overline{\theta}_j)_{1 \le j \le l} \in \mathbb{R}^l$, where 
        \[ \overline{\theta}_j = \begin{cases}  -0.5, &\text{if } a_j \ \text{is true in } \mathcal{X}, \\ 
        0, &\text{if }  a_j \ \text{is false in } \mathcal{X}. \end{cases}  \] 
    We will now prove that $f_0(\bm{y}_\ell - \overline{\bm{\theta}}) > 0$ for all $1 \le \ell \le n$. Fix $1 \le \ell \le k$, we will prove that $f_0(\bm{y}_{\ell} - \overline{\bm{\theta}}) > 0$, i.e. $\bm{y}_{\ell} - \overline{\bm{\theta}} \in \mathcal{D}$, by our definition of $f_0$. 
    By definition of $\bm{y}_{\ell} = (y_{\ell,j})_{1 \le j \le l}$, we would therefore have 
    \[ y_{\ell,j} - \overline{\theta}_j = \begin{cases} - \overline{\theta}_j, &\text{if variable } a_j \ \text{does not occur in } \mathcal{A}_{\ell}, \\ \ell - \overline{\theta}_j, &\text{if variable } a_j \ \text{occurs in } \mathcal{A}_{\ell}, \end{cases} \]
    for any $1 \le j \le l$. We'll consider two cases: If $a_p$ does not occur in $\mathcal{A}_{\ell}$, then $y_{\ell,p} - \overline{\theta}_p = -\overline{\theta}_p \in [0,0.5] = I_{p,\ell,i}$ for all $1 \le i \le 7$, corresponding to the $7$ possible satisfying assignment for clause $\mathcal{A}_{\ell}$. Otherwise, if $a_p$ occurs in $\mathcal{A}_{\ell}$, we have 
    \[ y_{\ell ,p} - \overline{\theta}_p = \ell - \overline{\theta}_p = \begin{cases} \ell, &\text{if } a_p \ \text{is false in } \mathcal{X}, \\ \ell + 0.5, &\text{if } a_p \ \text{is true in } \mathcal{X} .\end{cases} \] However, we see that
    \[ I_{p,\ell,i} = \begin{cases} [\ell - \varepsilon, \ell + \varepsilon] &\text{if } a_p \ \text{is false in } \mathcal{X}_i \\ [\ell +0.5 - \varepsilon, \ell +0.5 + \varepsilon] &\text{if } a_p \ \text{is true in } \mathcal{X}_i \end{cases} \]
    for some fixed $1 \le i \le 7$. As $\mathcal{X}$ is a satisfying assignment to the original Boolean formula, then it must be a satisfying assignment to clause $\mathcal{A}_{\ell}$ as well, and thus must be equal to $\mathcal{X}_i$ for some $1 \le i \le 7$ by our assumption. In either of the cases, we obtain that $\ell - \overline{\theta}_p \in I_{p,\ell,i}$. This therefore gives us that $f_0(\bm{y}_\ell - \overline{\bm{\theta}}) > 0$ as $\bm{y}_\ell - \overline{\bm{\theta}} \in \mathcal{D}$, as desired. 
    \vspace{3 mm}
    \newline 
    For the other direction, suppose that there exists $\bm{\theta}^{\ast} \in \mathbb{R}^l$ such that $f_0(\bm{y_\ell} - \bm{\theta}^{\ast}) > 0$ for all $1 \le \ell \le n$. Then there exists a satisfying assignment to the original Boolean formula as follows. 
    \vspace{3 mm}
    \newline 
    Let $\bm{\theta}^{\ast} \in \mathbb{R}^l$ be such that $f_0(\bm{y}_\ell - \bm{\theta}^{\ast}) > 0$ for all $1 \le \ell \le n$, or equivalently, $\bm{y}_\ell - \bm{\theta}^{\ast} \in \mathcal{D}$ for all $1 \le \ell \le n$.
Consider the following assignment of $\mathcal{X}^{\ast}$ of $(a_1, \dots, a_l)$. We will prove that $\mathcal{X}^{\ast}$ is a satisfying assignment: 
   \[ \mathcal{X}^{\ast}(a_j) = \begin{cases} \text{false} &\text{if } \theta_j^{\ast} \in [-\varepsilon, \varepsilon] \\ \text{true} &\text{if } \theta_j^{\ast} \in [-0.5-\varepsilon, -\varepsilon) \end{cases} \]
   We will first prove that this is well-defined. To do this,  we will first show that $\bm{\theta}^{\ast} \in [-0.5-\varepsilon, \varepsilon]^l$. 
   \vspace{3 mm}
   \newline 
   We recall that for $1 \le \ell \le n$ and $1 \le j \le l$, we have \[ y_{\ell,j} -\theta^{\ast}_j = \begin{cases} -\theta^{\ast}_j &\text{if variable } a_j \ \text{does not occur in } \mathcal{A}_{\ell} \\ \ell -\theta^{\ast}_j &\text{if variable } a_j \ \text{occurs in } \mathcal{A}_{\ell} \end{cases}. \]
    Suppose that variable $a_j$ occurs in some of $\mathcal{A}_{\ell}$, then as $\bm{y}_{\ell} - \bm{\theta}^{\ast} \in \mathcal{D}$ for all $1 \le \ell \le n$ implies that for all $1 \le \ell \le l$, there exists some $1 \le i \le 7$ such that we have 
   \[ I_{j,\ell,i} \ni y_{\ell,j} - \theta_j^{\ast} = \begin{cases} -\theta^{\ast}_j &\text{if variable } a_j \ \text{does not occur in } \mathcal{A}_{\ell} \\ \ell -\theta^{\ast}_j &\text{if variable } a_j \ \text{occurs in } \mathcal{A}_{\ell} \end{cases} \]
   However, this gives us several constraints on $\theta_j^{\ast}$ based on whether $a_j$ occurs in clause $\mathcal{A}_{\ell}$, and the possible assignment of $a_{j}$ in that clause. In particular, we obtain $l$ constraints on $\theta_j^{\ast}$ in which for all $1 \le \ell \le n$, for some $1 \le i \le 7$, we have  
   \begin{align*}
       \theta_j^{\ast} \in \begin{cases}    [-\varepsilon, \varepsilon] &\text{if } a_j \text{ occurs in } \mathcal{A}_i \ \text{and } a_j \ \text{is assigned to be false in } \mathcal{X}_i \\ 
       [-0.5-\varepsilon, -0.5 + \varepsilon] &\text{if } a_j \ \text{occurs in } \mathcal{A}_i \ \text{and } a_j \ \text{is assigned to be true in } \mathcal{X}_i\end{cases}  
   .\end{align*}
   This is enough to show that $\bm{\theta}^{\ast} \in [-0.5-\varepsilon, \varepsilon]^l$. 
      \vspace{3 mm}
   \newline 
   To prove that $\mathcal{X}^{\ast}$ is a satisfying assignment for the original Boolean formula, we will show that $\mathcal{X}^{\ast}$ is a satisfying assignment for each clause $\mathcal{A}_1, \dots, \mathcal{A}_n$. Fix clause $\mathcal{A}_{\ell}$ which contains three variables $a_p, a_q, a_r$. 
   \vspace{3 mm}
   \newline 
  We note that by our assignment $\mathcal{X}^{\ast}$,  for some $1 \le i \le 7$, for $j \in \{ p, q, r \}$, we have, by definition of $D_{\ell,i} = I_{1,\ell,i} \times I_{2,\ell,i} \times \dots \times I_{l,\ell,i}$, that as we have
  
  \[ I_{j,\ell,i} = \begin{cases} [\ell - \varepsilon, \ell + \varepsilon], &\text{if } a_j \ \text{occurs in } A_{\ell} \ \text{and } a_j \ \text{is true in } \mathcal{X}_i, \\ [\ell - \varepsilon, \ell + \varepsilon], &\text{if } a_j \ \text{occurs in } \mathcal{A}_{\ell} \ \text{and } a_j \ \text{is false in } \mathcal{X}_i. \end{cases}  \]
   We therefore must have 
   \[ \mathcal{X}^{\ast}(a_j) = \begin{cases} \text{true} &\text{if } a_j \ \text{occurs in } \mathcal{A}_i, \  \text{and is assigned to be true in } \mathcal{X}_i \\ \text{false} &\text{if } a_j \ \text{occurs in } \mathcal{A}_i, \  \text{and is assigned to be false in } \mathcal{X}_i \end{cases} \]
   in which case we know that $\mathcal{X}^{\ast}$ induced by our three variables $\{ a_p, a_q, a_r \}$ is $\mathcal{X}_i$ for some $1 \le i \le 7$, which by definition, is a satisfying assignment for this clause. This forces $\mathcal{X}^{\ast}$ to be a satisfying assignment for $\mathcal{A}_{\ell}$. As this argument holds for any $1 \le \ell \le n$, this shows that $\mathcal{X}^{\ast}$ is a satisfying assignment for the original Boolean formula. 
    \end{proof}
    To finish our proof, we note that by the 3-SAT problem, it is NP-hard to determine whether there exists a satisfying assignment to our original Boolean formula, which by our claim above, implies that it is NP hard to determine whether there exists $\hat{\bm{\theta}} \in \mathbb{R}^l$ such that $f_0(\bm{y}_\ell - \hat{\bm{\theta}}) > 0$ for all $1 \le \ell \le n$, as desired. 
\end{proof} 
We note that for the construction in the preceeding NP-hardness proof, the maximum distance $D$ of any data point or box corner from the origin is $O(\max(n,l)^{1.5})$, the minimum width of any box is $\epsilon=1/80$, and the minimum separation between $\bm{y}_i$’s is 1. Thus, these quantities are polynomially bounded in terms of the size of the input. Therefore the optimal parameters $\bm{\mu}$ and $\sigma$ for the example constructed in the NP-hardness proof could be approximated in polynomial time in the Wasserstein sense using the algorithm of Section \ref{section 2}.
\section{Future Work} \label{section 4}
An interesting direction for future research is to develop an analog of the Expectation-Maximization (EM) algorithm using the methods established here. Traditional EM algorithms rely on maximizing the likelihood function for parameter estimation, but this approach can be computationally hard for general distributions, as computing parameters like $\bm{\mu}$ is shown to be NP-hard using MLE. In contrast, incorporating alternative metrics like the Wasserstein distance within the EM framework could provide a more tractable solution, as parameter estimation via minimization under the Wasserstein metric can be achieved in polynomial time. The Wasserstein distance measures similarity between distribution by accounting for the actual geometry between the two distributions through the actual distances between points in the sample space, making it more robust to small sample variations and outliers while preserving the geometric structure of the data.
\vspace{3 mm}
\newline 
We also note that our analysis of the hardness result for MLE did not address approximation algorithms, even though we provided a polynomial-time approximation algorithm in the Wasserstein minimization framework. This is because by the standard definition of approximation algorithms, distinguishing between zero and nonzero values is treated equivalently in both exact and approximate contexts. However, alternative definitions of approximation algorithms may exist, under which our hardness result might not hold. Exploring such definitions could offer interesting direction for future research..
\vspace{3 mm}
\newline 
Another potential research question lies in identifying the conditions under which parameter estimation from sample points becomes tractable. We have demonstrated that recovering parameters like $\bm{\mu}$ and $\sigma$ is feasible with Wasserstein distance, though this is not the case for maximum likelihood estimation (MLE). Additionally, in some other work, not detailed here, we can show that parameter recovery in the Wasserstein sense becomes NP-hard for mixture distributions with distinct component means: We can extend Dasgupta's proof of NP-hardness in $k$-clustering problems \cite{Dasgupta2008TheHO} to demonstrate that finding two optimal means in a mixture distribution is also NP-hard.

\section{Acknowledgements}
We would like to thank Jason Altschuler for help on the early stages of the work. 

\bibliographystyle{alpha}
\bibliography{final}

\newcommand{\etalchar}[1]{$^{#1}$}
\begin{thebibliography}{MNVNW21}

\bibitem[AE13]{adams2013calculus}
R.A. Adams and C.~Essex.
\newblock {\em Calculus: A Complete Course}.
\newblock Pearson, 2013.

\bibitem[AGL{\etalchar{+}}20]{f13cd90fad03be74cb538ff1f3ea094fa07a4bda}
Dongsheng An, Yang Guo, Na~Lei, Zhongxuan Luo, Shing-Tung Yau, and Xianfeng Gu.
\newblock Ae-ot: A new generative model based on extended semi-discrete optimal transport.
\newblock In {\em International Conference on Learning Representations}, 2020.

\bibitem[ALXG22]{an2021efficientoptimaltransportalgorithm}
Dongsheng An, Na~Lei, Xiaoyin Xu, and Xianfeng Gu.
\newblock Efficient optimal transport algorithm by accelerated gradient descent.
\newblock In {\em Proceedings of the AAAI Conference on Artificial Intelligence}, volume~36, pages 10119--10128, 2022.

\bibitem[AOEM24]{adrai2024deepoptimaltransportpractical}
Theo Adrai, Guy Ohayon, Michael Elad, and Tomer Michaeli.
\newblock Deep optimal transport: A practical algorithm for photo-realistic image restoration.
\newblock {\em Advances in Neural Information Processing Systems}, 36, 2024.

\bibitem[BC11]{bauschke2011convex}
H.H. Bauschke and P.L. Combettes.
\newblock {\em Convex analysis and monotone operator theory in Hilbert spaces}.
\newblock Springer, 2011.

\bibitem[CMZ{\etalchar{+}}19]{cao2019multimarginalwassersteingan}
Jiezhang Cao, Langyuan Mo, Yifan Zhang, Kui Jia, Chunhua Shen, and Mingkui Tan.
\newblock Multi-marginal {W}asserstein {GAN}.
\newblock In {\em Proceedings of the 33rd International Conference on Neural Information Processing Systems}, 2019.

\bibitem[Das08]{Dasgupta2008TheHO}
S.~Dasgupta.
\newblock {\em The Hardness of K-means Clustering}.
\newblock Technical report (University of California, San Diego. Department of Computer Science and Engineering). Department of Computer Science and Engineering, University of California, San Diego, 2008.

\bibitem[DGK18]{dvurechensky2018computationaloptimaltransportcomplexity}
Pavel Dvurechensky, Alexander Gasnikov, and Alexey Kroshnin.
\newblock Computational optimal transport: Complexity by accelerated gradient descent is better than by {S}inkhorn’s algorithm.
\newblock In Jennifer Dy and Andreas Krause, editors, {\em Proceedings of the 35th International Conference on Machine Learning}, volume~80 of {\em Proceedings of Machine Learning Research}, pages 1367--1376, 10--15 Jul 2018.

\bibitem[DLR77]{10.1111/j.2517-6161.1977.tb01600.x}
A.~P. Dempster, N.~M. Laird, and D.~B. Rubin.
\newblock {Maximum Likelihood from Incomplete Data Via the {EM} Algorithm}.
\newblock {\em Journal of the Royal Statistical Society: Series B (Methodological)}, 39(1):1--22, 12 1977.

\bibitem[EH21]{Efron_Hastie_2021}
Bradley Efron and Trevor Hastie.
\newblock {\em Computer Age Statistical Inference, Student Edition: Algorithms, Evidence, and Data Science}.
\newblock Institute of Mathematical Statistics Monographs. Cambridge University Press, 2021.

\bibitem[FZM{\etalchar{+}}15]{frogner2015learningwassersteinloss}
Charlie Frogner, Chiyuan Zhang, Hossein Mobahi, Mauricio Araya-Polo, and Tomaso Poggio.
\newblock Learning with a {W}asserstein loss.
\newblock In {\em Proceedings of the 28th International Conference on Neural Information Processing Systems - Volume 2}, NIPS'15, page 2053–2061, 2015.

\bibitem[IGG{\etalchar{+}}24]{imfeld2024transformerfusionoptimaltransport}
Moritz Imfeld, Jacopo Graldi, Marco Giordano, Thomas Hofmann, Sotiris Anagnostidis, and Sidak~Pal Singh.
\newblock Transformer fusion with optimal transport.
\newblock In {\em The Twelfth International Conference on Learning Representations}, 2024.

\bibitem[KFL22]{kwon2022scorebasedgenerativemodelingsecretly}
Dohyun Kwon, Ying Fan, and Kangwook Lee.
\newblock Score-based generative modeling secretly minimizes the {w}asserstein distance.
\newblock {\em Advances in Neural Information Processing Systems}, 35:20205--20217, 2022.

\bibitem[KLS97]{Kannan1997RandomWA}
Ravi Kannan, L{\'a}szl{\'o}~Mikl{\'o}s Lov{\'a}sz, and Mikl{\'o}s Simonovits.
\newblock Random walks and an ${O}^{\ast}(n^5)$ volume algorithm for convex bodies.
\newblock {\em Random Struct. Algorithms}, 11:1--50, 1997.

\bibitem[KRH18]{8578459}
Soheil Kolouri, Gustavo~K. Rohde, and Heiko Hoffmann.
\newblock Sliced {W}asserstein distance for learning {G}aussian mixture models.
\newblock In {\em 2018 IEEE/CVF Conference on Computer Vision and Pattern Recognition}, pages 3427--3436, 2018.

\bibitem[KV17]{doi:10.1137/140966587}
Sahar Karimi and Stephen Vavasis.
\newblock {IMRO}: A proximal quasi-{N}ewton method for solving $\ell_1$-regularized least squares problems.
\newblock {\em SIAM Journal on Optimization}, 27(2):583--615, 2017.

\bibitem[MNVNW21]{Mena2020SinkhornEA}
Gonzalo~E. Mena, Amin Nejatbakhsh, E.~Varol, and Jonathan Niles-Weed.
\newblock Sinkhorn em: An expectation-maximization algorithm based on entropic optimal transport.
\newblock In {\em Proceedings of the 35th International Conference on Neural Information Processing Systems}, 2021.

\bibitem[Nes14]{10.5555/2670022}
Yurii Nesterov.
\newblock {\em Introductory Lectures on Convex Optimization: A Basic Course}.
\newblock Springer Publishing Company, Incorporated, 1 edition, 2014.

\bibitem[PC19]{peyré2020computationaloptimaltransport}
Gabriel Peyré and Marco Cuturi.
\newblock Computational optimal transport.
\newblock {\em Foundations and Trends in Machine Learning}, 11 (5-6):355--602, 2019.

\bibitem[Pey21]{peyre_courseot}
Gabriel Peyré.
\newblock Optimal transport course.
\newblock \url{https://optimaltransport.github.io/slides-peyre/CourseOT.pdf}, 2021.

\bibitem[RKB22]{rout2022generativemodelingoptimaltransport}
Litu Rout, Alexander Korotin, and Evgeny Burnaev.
\newblock Generative modeling with optimal transport maps.
\newblock In {\em International Conference on Learning Representations}, 2022.

\bibitem[TD18]{JMLR:v18:16-657}
Christopher Tosh and Sanjoy Dasgupta.
\newblock Maximum likelihood estimation for mixtures of spherical gaussians is {NP}-hard.
\newblock {\em Journal of Machine Learning Research}, 18(175):1--11, 2018.

\bibitem[Vil08]{villani2008optimal}
C\'edric Villani.
\newblock {\em Optimal transport: {O}ld and {N}ew}, volume 338.
\newblock Springer, 2008.

\bibitem[WLX{\etalchar{+}}23]{5cbc42c11a8dff944515a44b26e62b97d4845eff}
Jun Wang, Bohan Lei, Xiaoyin Xu, Yueting Zhuang, and Min Zhang.
\newblock A novel conditional medical image generation {GAN} with latent space optimal transport.
\newblock {\em 2023 IEEE International Conference on Medical Artificial Intelligence (MedAI)}, pages 453--458, 2023.

\end{thebibliography}
\end{document}